%% file: iclr2026_conference.tex
\documentclass{article} 
\usepackage[T1]{fontenc}
\usepackage{iclr2026_conference,times}

\input{math_commands.tex}

\usepackage{hyperref}
\usepackage{url}
\usepackage{algorithm}
\usepackage{algorithmic}
\usepackage{amsmath, amsthm}
\usepackage{graphicx}
\usepackage{float}
\usepackage{subfigure}
\usepackage{booktabs}
\usepackage{threeparttable}
\usepackage{tcolorbox}
\usepackage[svgnames]{xcolor}
\usepackage{wrapfig}
\definecolor{darkmagenta}{rgb}{0.56, 0.0, 1.0}  
\usepackage[commandnameprefix=always]{changes}
\usepackage{xcolor} 
\definecolor{myorange}{rgb}{1,0.5,0} 
\newcommand{\myadded}[1]{\textcolor{black}{#1}} 

\usepackage{amssymb}
\usepackage{xspace}

\definecolor{Cornsilk}{rgb}{1.0, 0.97, 0.86}
\definecolor{lightorange}{rgb}{0.996, 0.855, 0.643}
\definecolor{lightgray}{rgb}{1.0, 0.827, 0.278}

\newcommand{\methodName}{\texttt{SWD}\xspace}

\hypersetup{colorlinks,linkcolor={blue},citecolor={darkmagenta},urlcolor={blue}}

\title{The Rank and Gradient Lost in Non-stationarity: Sample Weight Decay for Mitigating Plasticity Loss in Reinforcement Learning}

\author{
Zihao Wu$^{1,2}$,
Hongyao Tang$^{1}$\thanks{Corresponding authors: 
tanghongyao@tju.edu.cn, yanzheng@tju.edu.cn.},
Yi Ma$^{1}$,
Jiashun Liu$^{3}$,
Yan Zheng$^{1}$\footnotemark[1],
Jianye Hao$^{1}$\\
$^{1}$School of Computer Software, Tianjin University\\
$^{2}$International Joint Institute of Tianjin University, Fuzhou\\
$^{3}$The Hong Kong University of Science and Technology
}

\usepackage{amsthm}
\newtheorem{Proposition}{Proposition}
\newtheorem{theorem}{Theorem}
\newtheorem{lemma}{Lemma}

\iclrfinalcopy
\begin{document}

\maketitle
\vspace{-4mm}
\begin{abstract}
Deep reinforcement learning (RL) suffers from plasticity loss severely due to the nature of non-stationarity, which impairs the ability to adapt to new data and learn continually. Unfortunately, our understanding of how plasticity loss arises, dissipates, and can be dissolved remains limited to empirical findings, leaving the theoretical end underexplored.
To address this gap, we study the plasticity loss problem from the theoretical perspective of network optimization.
By formally characterizing the two culprit factors in online RL process: the non-stationarity of data distributions and the non-stationarity of targets induced by bootstrapping, 
our theory attributes the loss of plasticity to two mechanisms: the rank collapse of the Neural Tangent Kernel (NTK) Gram matrix and the $\Theta(\frac{1}{k})$ decay of gradient magnitude. 
The first mechanism echoes prior empirical findings from the theoretical perspective and sheds light on the effects of existing methods, e.g., network reset, neuron recycle, and noise injection.
Against this backdrop, we focus primarily on the second mechanism and aim to alleviate plasticity loss by addressing the gradient attenuation issue, which is orthogonal to existing methods.
We propose Sample Weight Decay (\methodName) --- a lightweight method to restore gradient magnitude, as a general remedy to plasticity loss for deep RL methods based on experience replay.
In experiments, we evaluate the efficacy of \methodName upon TD3, \myadded{Double DQN} and SAC with SimBa architecture in MuJoCo, \myadded{ALE} and DeepMind Control Suite tasks.
The results demonstrate that \methodName effectively alleviates plasticity loss and consistently improves learning performance across various configurations of deep RL algorithms, UTD, network architectures, and environments, achieving SOTA performance on challenging DMC Humanoid tasks.We make our code available\footnote{\url{https://github.com/wzhhasadream/CleanRL-JAX}}

\end{abstract}

\section{Introduction}

\begin{wrapfigure}[18]{R}{0.55\textwidth}
\vspace{-0.3cm}
    \centering
    \subfigure[SWD for SimBa-SAC in DMC tasks]{
\label{Fig:sac intro}
\includegraphics[width=0.55\textwidth]{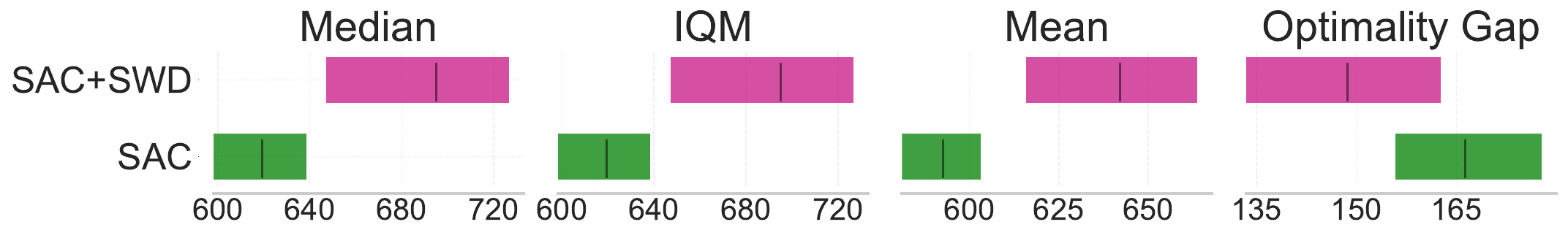}
}
\subfigure[SWD for TD3 in MuJoCo tasks]{
\label{Fig:td3 intro}
\includegraphics[width=0.55\textwidth]{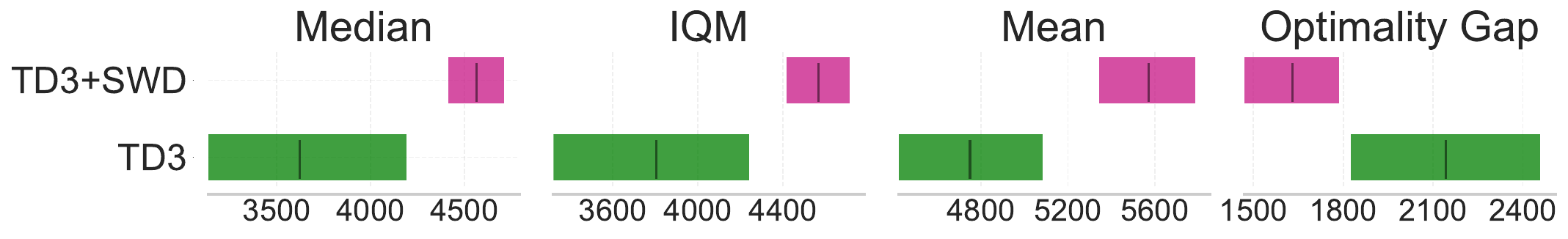}
}
\subfigure[\myadded{SWD for Double DQN in ALE tasks}]{
\label{Fig:ddqn intro}
\includegraphics[width=0.55\textwidth]{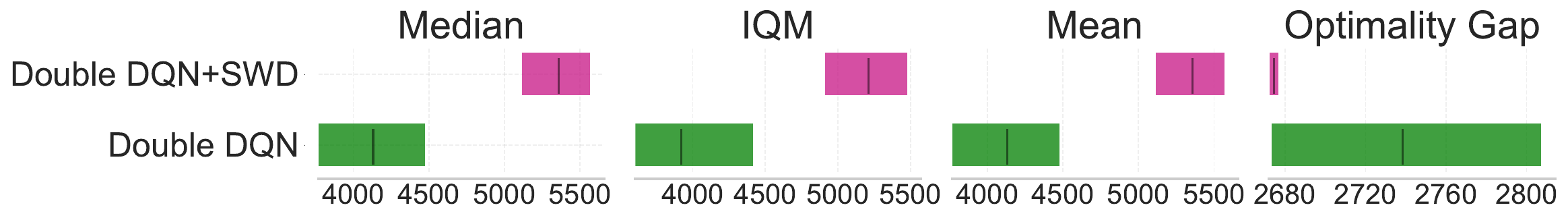}
}
\vspace{-5mm}
    \caption{Aggregate \textit{Reliable} metrics~\citep{agarwal2021deep} with 95\% Stratified Bootstrap CIS.}
    \label{Fig:intro}
\vspace{-1cm}
\end{wrapfigure}
Deep reinforcement learning (RL) has achieved remarkable success across a variety of domains, including robotics~\citep{akkaya2019solving}, game playing~\citep{berner2019dota} and LLM post-training that endows language models with the ability to generate human-like replies for breaking the Turing test~\citep{biever2023chatgpt}. The core driver behind these advancements of deep RL lies in the combination of RL and deep neural networks. With the powerful expressive capacity and adaptive learning ability, the neural networks can effectively approximate and optimize value functions and policies under the RL training regime. However, recent studies have identified a critical yet often overlooked challenge --- \textit{Plasticity Loss}: as training progresses, the learning ability of neural networks gradually diminishes~\citep{elsayed2024addressing,nikishin2022primacy}. 
To address this phenomenon, researchers in the RL community have proposed different metrics and remedies mainly from empirical perspectives, such as Network Reset~\citep{nikishin2022primacy}, Neuron Recycling~\citep{sokar2023dormant}, Noise Injection~\citep{nikishin2023deep}.
However, these existing works all rely on empirical intuitions and lack clear theoretical grounding, leaving a significant gap between empiricism and theory.
Despite the significance of this issue, explaining plasticity from the theoretical perspective and developing principled algorithms remain highly challenging due to the complexity of the underlying mechanisms of plasticity loss in the context of deep RL.

To analyze the optimization dynamics of Reinforcement Learning (RL) agents, we develop a structured theoretical framework rooted in a core insight: due to the dynamic nature of the optimization process in RL, the loss function evolves with each optimization iteration—effectively initiating a new optimization "task" in each round. Critically, the initial optimization point for the updated loss function in the current round is exactly the terminal point from optimizing the previous round’s loss function. This sequential initialization mechanism raises fundamental questions about its potential adverse impacts on optimization performance, and this line of inquiry underpins the entire logic of our theoretical analysis.
Based on this insight, we arrive at a key conclusion: RL agents inherently confront two critical challenges that exert profound adverse effects on loss function optimization.The first is the potential rank deficiency of the Neural Tangent Kernel (NTK) \citep{jacot2018neural}—a core factor that governs the network’s fitting capacity, specifically its ability to approximate the optimal value function in RL.The second is a gradient magnitude decay , which directly regulates the neural network’s fitting rate and dictates the time required to escape saddle points.

Our theoretical results reveal two causal mechanisms for the occurrence of plasticity loss.
The first mechanism echoes prior empirical findings from the theoretical perspective and sheds light on the effects of existing methods.
Differently, we focus primarily on the second mechanism, which has not been well explored, and aim to alleviate plasticity loss by addressing the gradient attenuation issue from an orthogonal angle to existing methods.
In this paper, we design an anti-decay sampling strategy as a compensation measure. We observe that gradient decay is governed by the linearly decaying term $\frac{1}{k}$, where 
$k$ represents the number of learning iteration. In response to this, we construct a set of linearly weighted coefficients, where the sampling probability decreases linearly with the \textit{age} of the samples. 
Specifically, we propose \textbf{Sample Weight Decay (\methodName)} --- a lightweight method tailored to mitigate plasticity loss in deep Reinforcement Learning (RL) algorithms. \methodName effectively maintains the gradient magnitude at a appropriate scale, ensuring stable learning dynamics.

Building on the SimBa-SAC~\citep{lee2025simba,sac} , TD3~\citep{td3}  and \myadded{Double DQN}~\citep{DDQN} algorithms as base
algorithm, \methodName significantly enhances learning stability and performance in continuous control tasks and \myadded{pixel-based tasks}. To validate its effectiveness, we evaluated \methodName across three well-established online reinforcement learning (RL) benchmarks: the MuJoCo~\citep{MuJoCo} , \myadded{Arcade Learning Environment}~\citep{ale} and the DeepMind Control (DMC) Suite~\citep{dmc}. For our evaluation protocol, we adopted the Interquartile Mean (IQM) as the core performance metric, while leveraging \texttt{GraMa}~\citep{grama} as the key indicator to quantify plasticity. As illustrated in Figure~\ref{Fig:intro}, \methodName consistently delivers state-of-the-art (SOTA) performance.


The contributions of this paper are summarized as follows:
\begin{itemize}
\vspace{-2mm}
\item We have developed a unified theory to account for plasticity in deep reinforcement learning (RL), thereby shedding clear light on the origins of such plasticity, 
bridging the gap between empirical practice and theoretical research.
\vspace{-2mm}
\item We propose \methodName, a theoretically grounded plug-and-play method to different RL algorithms for mitigating plasticity loss and improving learning performance. 
\vspace{-3mm}
\item The experiments demonstrate the efficacy of \methodName in improving learning stability and performance. Additionally, \methodName achieves state-of-the-art (SOTA) performance in challenging DMC Humanoid tasks.
\end{itemize}

\section{Related Work}


Plasticity loss refers to the phenomenon in neural network training where the model gradually loses its ability to adapt to new data, objectives, or tasks during the learning process \citep{DohareHLRMS24LossofPlasticity}. This usually reflects that the network becomes overly specialized to the early stages of training, resulting in reduced learning capacity, slower convergence, or even a collapse in later stages of training~\citep{nikishin2022primacy,nikishin2023deep}. 
To gain a better understanding of plasticity loss and address it effectively, many efforts have been made to conduct various empirical investigations and propose different solutions~\citep{s&p,Lewandowski2023DirectionsOC,kumar2023continual,ceron2023small,AsadiFS23ResetOpt,Ellis2024AdamOnLocalTime,chung2024parseval,Tang2024ReduceChurn,frati2024reset,ceronvalue}. 

\cite{sokar2023dormant} first identified the \textit{dormant neuron phenomenon} in deep reinforcement learning (RL) networks, where neurons progressively fall into an inactive state and their expressive capacity diminishes over the course of training. To address this issue, they proposed Recycle Dormant neurons (ReDo) --- a strategy that continuously detects and recycles dormant neurons throughout the training process. In a separate line of work, \cite{nikishin2023deep} proposed Plasticity Injection, a minimal-intervention technique that boosts network plasticity without altering trainable parameters or introducing biases into predictive outputs. More recently, \cite{grama} introduced Reset guided by Gradient Magnitude (ReGraMa), which addresses neuronal activity loss in deep RL agents by transitioning from activation statistics to gradient-based neuron reset strategies, maintaining network plasticity through \texttt{GraMa} metrics.
While these approaches have empirically validated their effectiveness in combating plasticity loss, they predominantly operate at the model level --- modifying network architectures without addressing the fundamental theoretical questions: \textit{why} plasticity loss occurs and \textit{how} different underlying mechanisms contribute to this phenomenon. 
This presents a significant gap
between empiricism and theory.

This theoretical gap motivates our work, which targets the fundamental {gradient decay mechanism} identified through our theoretical analysis. Our proposed Sample Weight Decay (\texttt{SWD}) approach operates at the {strategic level} --- focused on weighting in experience replay --- and provides a principled means of compensating for the $\Theta(1/k)$ gradient attenuation, a challenge unaddressed by recent techniques. A key distinguishing feature of \texttt{SWD} is its \textit{orthogonality} to existing methods: whereas prior approaches modify network structures or plasticity injection patterns, \texttt{SWD} acts at the data distribution level via intelligent experience reweighting, ensuring compatibility with existing plasticity-preserving techniques and enabling synergistic performance improvements.

\section{Preliminaries}


We consider an episodic Markov Decision Process (MDP) $(\sS,\sA,H,\{P_h\}_{h=1}^{H},\{r_h\}_{h=1}^{H})$ with horizon $H\in \sZ^{+}$~\citep{puterman2014markov}.
Here, $\sS,\sA$ are measurable state, action spaces; $P_h(\cdot\!\mid s,a)$ is the transition kernel at step  $h$ ; $r_h:\sS\times\sA\to[0,1]$ is the reward at step $h$.
At each episode, an initial state $x_1$ is drawn. At step $h\in[H]$, the agent observes $x_h\in\sS$, chooses $a_h\in \sA$, receives $r_h(x_h,a_h)$, and transits to $x_{h+1}\sim P_h(\cdot\!\mid x_h,a_h)$.
A  policy is $\pi=\{\pi_h\}_{h=1}^{H}$ with $\pi_h(\cdot\!\mid x)$.

For policy $\pi$, the value and action-value functions are defined as:
\vspace{-1mm}
\begin{align*}
& V_h^\pi(x) = \mathbb E\!\left[\sum_{t=h}^{H} r_t(x_t,a_t) \,\middle|\, x_h=x,\ a_t\sim \pi_t(\cdot\!\mid x_t)\right], 
&& \forall x\in \sS,\ h\in[H] \\
&Q_h^\pi(x,a) = r_h(x,a) + \mathbb E_{x'\sim P_h(\cdot\mid x,a)}\!\big[V_{h+1}^\pi(x')\big],
&& \forall (x,a)\in \sS\times \sA,\ h\in[H], \label{eq:def-Q}
\end{align*}
with terminal condition $V_{H+1}^\pi\equiv 0$.
It is convenient to write the transition expectation operator $\mathbb{P}_h$ and policy expectation operator $\mathbb{J}_{h}^{\pi}$:
\[
(\mathbb{P}_h V)(x,a) = \mathbb E_{x'\sim P_h(\cdot\mid x,a)}[V(x')],
\quad 
(\mathbb{J}_h^\pi Q)(x) = \mathbb E_{a\sim \pi_h(\cdot\mid x)}[Q(x,a)].
\]
Then the policy Bellman equations compactly read,
\begin{align*}
Q_h^\pi(x,a) &= r_h(x,a) + (\mathbb{P}_h V_{h+1}^\pi)(x,a)\\
V_h^\pi(x)   &= (\mathbb{J}_h^\pi Q_h^\pi)(x), \qquad V_{H+1}^\pi \equiv 0. 
\end{align*}

For any function $g:\sS\times\sA\to\mathbb R$, define the value maximization operator $\mathbb V$ and the step-$h$ optimality Bellman operator $\mathcal T_h$ by
\begin{align*}
    \mathbb V_g(x) &:= \max_a g(x,a)\\
    (\mathcal T_h g)(x,a)&:= r_h(x,a)+(\mathbb P_h \mathbb V_g)(x,a) .
\end{align*}

\section{Theory Analysis: The Rank Loss and Gradient Attenuation}\label{sec:loss function}

In this section, our primary objective is to establish a rigorous connection between the optimization process and plasticity loss. To this end, we first utilize Equation~\ref{eq:bound} to derive a formal bound on the model’s performance. We then simplify the dynamic optimization process by reducing it to an initialization problem—a key step that streamlines subsequent analyses. Finally, we elaborate on the derivation of our core results, with the full details presented in Section~\ref{sec:ntk} and Section~\ref{sec:grad}.  

For the sake of clarity and analytical tractability, we focus our discussion on the simplest variant of Fitted Q-Iteration (FQI) \citep{FQI}. Importantly, the theoretical framework proposed herein is not limited to this specific algorithm; it can be readily extended to accommodate a wider class of value-based reinforcement learning methods. Of note, analogous analytical findings hold for entropy-regularized Markov Decision Processes (MDPs). A comprehensive treatment of this extension, including detailed proofs and supplementary analyses, is provided in Appendix~\ref{App:entropy mdp}.

Let $\mathcal{D}_h^k$ denote the replay buffer at step $h$ following $k$ episodes, and let $\hat{f}_{h+1}^k$ represent the estimated Q-value at step $h+1$ after $k$ episodes. The loss function is then defined as follows:
\begin{align*}
\mathcal{L}_{h}^k(f, \hat{f}_{h+1}^k) &:=\frac{1}{|\mathcal{D}_h^k|} \sum_{(s_h, a_h, s_{h+1}) \sim \mathcal{D}_h^k} \left[ \left(f(s_h, a_h) - \left(r(s_h, a_h) + \max_{a'} \hat{f}_{h+1}^k(s_{h+1}, a')\right)\right)^2 \right]  \\
\hat f_{h}^k &= \arg\min_{f\in \mathcal F} \mathcal{L}_{h}^k(f, \hat{f}_{h+1}^k) ,\quad \hat f_{H+1} \equiv 0\label{eq:argmin}
\end{align*}
Define the empirical distribution $\mu_h^k$ of the replay buffer over $(s,a)$ and the empirical state-action visitation frequency of the behavior policy $\pi^{k+1}$ at time $h$ in episode $k$:
\begin{align*}
\mu_h^k(s,a) &:= \frac{1}{|\mathcal{D}_h^k|}\sum_{(s_i,a_i,s'_i)\in \mathcal{D}_h^k} \mathbb{I}\{s = s_i,\ a = a_i\}, \\
\hat{d}_h^{\pi^{k+1}}(s, a) &:= \mathbb I\{s=s_h^{k+1},,a=a_h^{k+1}\},
\qquad (s_h^{k+1},a_h^{k+1})\sim \mathbb{P}_h^{\pi^{k+1}}(s,a)
\end{align*}
%
To establish a mathematical formulation for distribution shift and thereby quantify its impact on the loss function, we rely on Proposition~\ref{pro:rec} to characterize such distributional non-stationarity. Furthermore, to facilitate the subsequent gradient decomposition, we express the loss function in the form specified in Theorem~\ref{thm:pll}. Finally, to connect the agent’s performance to the loss function, we leverage Theorem~\ref{thm:subopt-f} to provide a bound on the agent’s final performance.
\begin{Proposition}[Empirical distribution recursion]\label{pro:rec}
The empirical distribution satisfies
\begin{equation}
\mu_h^{k+1}=\frac{k}{k+1}\,\mu_h^k+\frac{1}{k+1}\,\hat d_h^{\pi^{k+1}}.
\end{equation}
\end{Proposition}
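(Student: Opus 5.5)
The recursion follows from how the replay buffer grows. The implicit modelling assumption is that the buffer at step $h$ is never truncated and receives exactly one transition per episode. After $k$ episodes, $\mathcal D_h^k$ therefore holds the $k$ transitions $(s_h^j,a_h^j,s_{h+1}^j)$ for $j=1,\dots,k$, so $|\mathcal D_h^k|=k$. Episode $k+1$ then appends the single tuple $(s_h^{k+1},a_h^{k+1},s_{h+1}^{k+1})$ generated by $\pi^{k+1}$. This gives
\[
\mathcal D_h^{k+1}=\mathcal D_h^k\cup\{(s_h^{k+1},a_h^{k+1},s_{h+1}^{k+1})\},\qquad |\mathcal D_h^{k+1}|=k+1,
\]
where the union is understood as a multiset, so repeated tuples are counted with multiplicity. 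I would state this explicitly at the start of the proof, since everything else rests on it.

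With this in place, fix an arbitrary pair $(s,a)$ and expand the definition of $\mu_h^{k+1}(s,a)$. Split the indicator sum into the part over the old buffer and the part for the new sample:
\[
\mu_h^{k+1}(s,a)=\frac{1}{k+1}\Bigl(\sum_{(s_i,a_i,s_i')\in\mathcal D_h^k}\mathbb I\{s=s_i,a=a_i\}+\mathbb I\{s=s_h^{k+1},a=a_h^{k+1}\}\Bigr).
\]
By the definition of $\mu_h^k$, the first sum equals $|\mathcal D_h^k|\,\mu_h^k(s,a)=k\,\mu_h^k(s,a)$. By the definition of $\hat d_h^{\pi^{k+1}}$, the second term is exactly $\hat d_h^{\pi^{k+1}}(s,a)$. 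Dividing through by $k+1$ gives the stated identity pointwise in $(s,a)$, and hence as an identity between distributions.

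There is no real analytical difficulty here. The only point that needs care is the bookkeeping assumption that $|\mathcal D_h^k|=k$. If the buffer started with $n_0$ samples, or collected $m$ transitions per episode at step $h$, the coefficients would become $\frac{|\mathcal D_h^k|}{|\mathcal D_h^{k+1}|}$ and $\frac{m}{|\mathcal D_h^{k+1}|}$, and the recursion would hold in that generalized form. I would mention this in a remark. The remark also shows that the $\frac{1}{k+1}$ weight on fresh data, which later drives the $\Theta(1/k)$ gradient decay, comes directly from uniform averaging over an ever-growing buffer.
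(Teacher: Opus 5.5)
Your proof is correct and follows the same route as the paper's sketch, which likewise takes $|\mathcal D_h^{k+1}|=k+1$ and $\mathcal D_h^{k+1}=\mathcal D_h^k\cup\{(s_h^{k+1},a_h^{k+1},s_{h+1}^{k+1})\}$, then expands the definition of $\mu_h^{k+1}$ and regroups terms. The paper leaves the multiset interpretation and the assumption $|\mathcal D_h^k|=k$ implicit, so stating them explicitly tightens the same argument rather than changing it.
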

\begin{proof}[Proof (sketch)]
By construction, $|\mathcal D_h^{k+1}|=k+1$ and $\mathcal D_h^{k+1}=\mathcal D_h^k\cup\{(s_h^{k+1},a_h^{k+1},s_{h+1}^{k+1})\}$. Expanding the definition of $\mu_h^{k+1}$ and regrouping terms yields the stated convex combination.
\end{proof}
\begin{theorem}[Population loss limit]\label{thm:pll}
Let $\mathcal{F}$ be a measurable function class. As the cardinality (or appropriate size measure) of $\mathcal{D}_h^k$ tends to infinity (i.e., $|\mathcal{D}_h^k| \to \infty$), the following probabilistic convergence holds:
\begin{equation}\label{eq:expected loss}
\mathcal{L}_{h}^k(f, \hat{f}_{h+1}^k) \xrightarrow{p} \mathbb{E}_{(s_h, a_h) \sim \mu_h^k} \left[ \left( f(s_h, a_h) - (\mathcal{T}_h \hat{f}_{h+1}^k)(s_h, a_h) \right)^2 \right] + C_h^k
\end{equation}
where $C_h^k$ is a constant independent of $f$. Henceforth, we do not rigorously distinguish between the empirical risk and the expected loss, focusing instead on the underlying optimization problem.
\end{theorem}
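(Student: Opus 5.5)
The plan is the standard bias--variance decomposition of the squared Bellman error, followed by a law of large numbers. Write the regression target for a transition $(s_h,a_h,s_{h+1})$ as
\[
y := r(s_h,a_h)+\max_{a'}\hat f_{h+1}^k(s_{h+1},a'),
\]
so that, conditioning on $(s_h,a_h)$ and using the definitions of $\mathbb P_h$, $\mathbb V$ and $\mathcal T_h$ from the Preliminaries,
\[
\mathbb E[y\mid s_h,a_h]=(\mathcal T_h\hat f_{h+1}^k)(s_h,a_h).
\]
Add and subtract this conditional mean inside the square:
\[
(f-y)^2=(f-\mathcal T_h\hat f_{h+1}^k)^2+2(f-\mathcal T_h\hat f_{h+1}^k)(\mathcal T_h\hat f_{h+1}^k-y)+(\mathcal T_h\hat f_{h+1}^k-y)^2 .
\]
Averaging over $\mathcal D_h^k$ splits $\mathcal L_h^k(f,\hat f_{h+1}^k)$ into three empirical averages.

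\textbf{First average.} It depends on each sample only through $(s_h,a_h)$. By the definition of $\mu_h^k$ as the empirical distribution of the buffer over $(s,a)$, it is exactly $\mathbb E_{(s,a)\sim\mu_h^k}\big[(f-\mathcal T_h\hat f_{h+1}^k)^2\big]$. No limit is needed for this term.

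\textbf{Third average.} It is independent of $f$; this is the term we call $C_h^k$. It is the empirical conditional variance of the target, and it converges to $\mathbb E_{\mu_h^k}\big[\mathrm{Var}(y\mid s,a)\big]$.

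\textbf{Cross term.} This must vanish in probability. Conditioned on the $(s_i,a_i)$ in the buffer, each $s_{i+1}$ is drawn from $P_h(\cdot\mid s_i,a_i)$, so the summands
\[
\xi_i=\big(f-\mathcal T_h\hat f_{h+1}^k\big)(s_i,a_i)\,\big(\mathcal T_h\hat f_{h+1}^k(s_i,a_i)-y_i\big)
\]
have conditional mean zero. Because the data are collected sequentially by policies $\pi^1,\dots,\pi^k$, I will treat the $\xi_i$ as a martingale difference sequence with respect to the filtration generated by the episodes. Boundedness of rewards in $[0,1]$ and of functions in $\mathcal F$, say $|f|\le H$ (assumed, or enforced by clipping), bounds the variance of each $\xi_i$. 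Chebyshev's inequality, or Azuma--Hoeffding, then gives $\frac{1}{|\mathcal D_h^k|}\sum_i\xi_i=O_p(|\mathcal D_h^k|^{-1/2})\to 0$.

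The same martingale argument applies to $C_h^k$. It shows that $C_h^k$ minus its conditional-variance limit tends to $0$, so $C_h^k$ can be read either as the empirical quantity or as its limit. Either way it is independent of $f$.

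\textbf{Main obstacle.} The subtlety is that $\hat f_{h+1}^k$ is itself fit on the same buffer, so the targets $y_i$ are not independent of the samples, and $\mu_h^k$ changes with $k$. I will handle this by conditioning on $\hat f_{h+1}^k$ and treating it as fixed. This matches the paper's stated convention of not distinguishing between empirical and expected loss, or it can be justified by sample splitting across steps $h$ and $h+1$.

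If convergence uniform in $f\in\mathcal F$ is wanted, I would add a uniform law of large numbers, namely a finite covering number or Glivenko--Cantelli assumption on $\mathcal F$. However, the statement as written is pointwise in $f$, so the martingale argument above suffices.
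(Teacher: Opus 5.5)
Your proposal is correct and follows the paper's own argument. The paper likewise adds and subtracts $\mathcal{T}_h\hat f_{h+1}^k$, expands the square, drops the cross term because its conditional mean over $s_{h+1}\sim P_h(\cdot\mid s_h,a_h)$ is zero, and takes $C_h^k$ to be the $\mu_h^k$-average of the conditional variance of $\max_{a'}\hat f_{h+1}^k(s_{h+1},a')$.

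The only difference is that the paper stops at this expectation computation, implicitly treating $\hat f_{h+1}^k$ as fixed. You additionally supply the martingale concentration step that actually justifies the $\xrightarrow{p}$. For that fixed-target step, sample splitting is what makes it legitimate; literal conditioning on $\hat f_{h+1}^k$ is not, since $\hat f_{h+1}^k$ is a function of the same next states $s_{h+1}$.
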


\begin{tcolorbox}[colback=Cornsilk, colframe=lightgray, title=\textbf{Takeaway 1. The non-stationarity in training process}]
The population loss limit established in Theorem~\ref{thm:pll} identifies two key sources of non-stationarity in the training process of Fitted Q-Iteration: the non-stationary distribution 
$\mu_h^k$ and the non-stationary target 
$ \mathcal{T}_h \hat{f}_{h+1}^k$. Both of these sources drive variations in the target population risk across training episodes k.
\end{tcolorbox}
\begin{theorem}[Suboptimality bound via squared bellman residuals]\label{thm:subopt-f}
Fix horizon $H$. Let $\{\hat f_h\}_{h=1}^H$ denote the final value estimates (e.g., from the $K$-th iteration; write $\hat f_h:=\hat f_h^{(K)}$). Define the greedy policy
$$\pi_{\hat f,h}(s)\in\arg\max_{a\in\sA}\ \hat f_h(s,a),\qquad h=1,\dots,H.$$
For functions $f,g:\sS\times\sA\to\mathbb R$, define the step-$h$ squared Bellman residual
$$\Delta_h(f,g)(s,a)\;:=\;\big(f(s,a)-(\mathcal T_h g)(s,a)\big)^2$$
Then for any start state $x$,
\begin{align}\label{eq:bound}
V_1^*(x) - V_1^{\pi_{\hat{f}}}(x)
&\leq \sqrt{H} \left(
    \sqrt{ \mathbb{E}_{\pi^*} \!\left[ \sum_{h=1}^H \Delta_h(\hat{f}_h, \hat{f}_{h+1})(s_h, a_h) \,\bigg|\, s_1 = x \right] }
    + \right. \nonumber \\
&\qquad \left.
    \sqrt{ \mathbb{E}_{\pi_{\hat{f}}} \!\left[ \sum_{h=1}^H \Delta_h(\hat{f}_h, \hat{f}_{h+1})(s_h, a_h) \,\bigg|\, s_1 = x \right] }
\right).
\end{align}
\end{theorem}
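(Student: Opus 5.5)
The plan is the standard performance-difference (telescoping) argument, with the Bellman residual $\delta_h := \hat f_h - \mathcal T_h \hat f_{h+1}$ (convention $\hat f_{H+1}\equiv 0$) as the central object. We split
\[
V_1^*(x) - V_1^{\pi_{\hat f}}(x) = \big(V_1^*(x) - \mathbb V_{\hat f_1}(x)\big) + \big(\mathbb V_{\hat f_1}(x) - V_1^{\pi_{\hat f}}(x)\big),
\]
where $\mathbb V_{\hat f_1}(x)=\max_a \hat f_1(x,a)=\hat f_1(x,\pi_{\hat f,1}(x))$. We then bound each term by a sum of expected residuals, the first along trajectories of $\pi^*$ and the second along trajectories of $\pi_{\hat f}$.

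\textbf{First term.} Since $\pi_{\hat f,h}$ is greedy, $\mathbb V_{\hat f_h}(s)\ge \hat f_h(s,\pi^*_h(s))$. This gives
\[
V_h^*(s)-\mathbb V_{\hat f_h}(s)\le Q_h^*(s,\pi_h^*(s)) - \hat f_h(s,\pi_h^*(s)).
\]
We expand $Q_h^* = r_h + \mathbb P_h V_{h+1}^*$ and $\hat f_h = \delta_h + r_h + \mathbb P_h \mathbb V_{\hat f_{h+1}}$, which yields
\[
V_h^*(s)-\mathbb V_{\hat f_h}(s)\le -\delta_h(s,\pi^*_h(s)) + \mathbb E_{s'}\big[V_{h+1}^*(s')-\mathbb V_{\hat f_{h+1}}(s')\big].
\]
Unrolling from $h=1$ to $H$ with $V_{H+1}^*=\mathbb V_{\hat f_{H+1}}=0$ gives
\[
V_1^*(x)-\mathbb V_{\hat f_1}(x)\le \mathbb E_{\pi^*}\Big[\sum_{h=1}^H |\delta_h(s_h,a_h)| \,\Big|\, s_1=x\Big].
\]

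\textbf{Second term.} Here the relation is an equality, with no inequality needed. We have $\mathbb V_{\hat f_h}(s) - V_h^{\pi_{\hat f}}(s) = \hat f_h(s,a) - Q_h^{\pi_{\hat f}}(s,a)$ at $a=\pi_{\hat f,h}(s)$, and this equals
\[
\delta_h(s,a) + \mathbb E_{s'}\big[\mathbb V_{\hat f_{h+1}}(s') - V^{\pi_{\hat f}}_{h+1}(s')\big].
\]
Telescoping gives the sum of $\delta_h$ along trajectories of $\pi_{\hat f}$, which is bounded by the sum of $|\delta_h|$.

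\textbf{Converting to squared residuals.} For each policy we apply Cauchy--Schwarz twice: first over $h$ via $\sum_{h=1}^H |\delta_h| \le \sqrt{H}\,\big(\sum_h \delta_h^2\big)^{1/2}$, then Jensen's inequality, $\mathbb E\sqrt{Z}\le\sqrt{\mathbb E Z}$. Since $\delta_h^2=\Delta_h(\hat f_h,\hat f_{h+1})$, this gives exactly the two square-root terms in \eqref{eq:bound}, each with the factor $\sqrt H$.

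The main obstacle is getting the directions right in the first-term recursion. The inequality from greediness must be applied at each step $h$, not only at $h=1$, so that the comparison between $\mathbb V_{\hat f_{h+1}}$ and $V^*_{h+1}$ propagates inside $\mathbb P_h$ with the correct sign. This is also why the bound requires the residual under both $\pi^*$ and $\pi_{\hat f}$ rather than under a single distribution. Measurability of the maximizers, and a deterministic choice of the greedy and optimal actions, are assumed; this is standard under the paper's measurable-space setup.
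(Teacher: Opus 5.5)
Your proposal is correct and follows essentially the same route as the paper. Both use the split $V_1^*-\hat V_1+\hat V_1-V_1^{\pi_{\hat f}}$ with $\hat V_h=\mathbb V_{\hat f_h}$, telescope the Bellman residuals along $\pi^*$ and along $\pi_{\hat f}$, use greediness to discard the policy-mismatch terms, and finish with Jensen and Cauchy--Schwarz; the paper merely proves an exact identity for an arbitrary comparator policy and drops the nonpositive mismatch sum at the end (you drop it step by step), and applies Jensen per step before Cauchy--Schwarz over $h$, giving the same $\sqrt H$ constant.
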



\begin{tcolorbox}[colback=Cornsilk, colframe=lightgray, title=\textbf{Takeaway 2. Suboptimality bound}]
Equation~\ref{eq:bound} links model performance to the loss function for optimization. This means the agent’s performance depends on Bellman residuals from the current and optimal policy trajectories, not historical ones.
\end{tcolorbox}

Building on the foundational framework established above, we can observe that the loss function evolves at each iteration $k$; this phenomenon is analogous to initiating an entirely new round of training. A key distinction from supervised learning lies in the initialization process: whereas supervised learning relies on \textbf{random initialization}, reinforcement learning (RL) commences optimization from the $\arg\min$ of the loss function obtained in the previous iteration. Consequently, investigating the properties of initialization points under such non-steady-state conditions becomes particularly crucial—an insight that further underscores the necessity of the theoretical exploration presented in our work.

\subsection{Neural Tangent Kernel (NTK) Degeneration}\label{sec:ntk}

A key advantage of random initialization is that it ensures the Neural Tangent Kernel (NTK) matrix of an overparameterized neural network is \textbf{full-rank with probability 1}. This comes from the property that low-dimensional manifolds have zero measure in high-dimensional spaces, making the NTK matrix rank-deficient extremely unlikely. However, this random initialization is violated in Reinforcement Learning (RL), so the initial NTK matrix’s structural properties (e.g., rank, conditioning) are no longer guaranteed, adding uncertainty to learning.

Prior research \citep{du2019gradient, allen2019convergence} shows overparameterized networks achieve global convergence to zero training error via Gradient Descent (GD) or Stochastic Gradient Descent (SGD) if two conditions hold:
\textit{(i)} The initial NTK matrix $\mathbf{K}_0$ is well-conditioned: its eigenvalues are bounded (no extremely large/small values to distort optimization).
\textit{(ii)} The NTK matrix $\mathbf{K}$ remains stable during training, so its structural properties do not degrade and harm convergence.

\subsection{Gradient Attenuation}\label{sec:grad}


Another advantage of proper random initialization lies in preserving an appropriate initial gradient magnitude—a factor whose critical role in the optimization process has been well validated through both experimental observations and theoretical analyses. As illustrated in \citep{2023Exit}, the time an optimizer needs to escape a saddle point is dictated by the magnitude of the initial state’s projection onto the unstable (negative-curvature) subspace of the saddle point. In consequence, excessively small initial gradients prolong the optimizer’s stagnation near saddle points, resulting in a significant deterioration in optimization performance. Unfortunately, however, gradient decay is an inherent and unavoidable phenomenon in reinforcement learning (RL) training, as evidenced by the theorem presented below.

\begin{theorem}[Gradient Dynamics at Initialization] \label{thm:initial_gradient}
For the optimization objective defined in Equation~\ref{thm:pll}, the initial gradient of the loss function (evaluated at the parameter values that minimized the loss of the previous iteration, $\hat{f}_h^{k-1}$) satisfies:
\begin{equation}
\begin{aligned}
\left. \nabla \mathbb{E}_{\mu_h^{k}} \left[ \left( f - \mathcal{T}_h \hat{f}_{h+1}^{k} \right)^2 \right] \right|_{\hat{f}_h^{k-1}} 
&=\underbrace{\frac{1}{k} \left. \nabla \mathbb{E}_{\hat{d}_h^{\pi^{k}}} \left[ \left( f - \mathcal{T}_h \hat{f}_{h+1}^{k-1} \right)^2 \right] \right|_{\hat{f}_h^{k-1}}}_{\text{Distributional shift}} \\
& \quad + \underbrace{\mathbb{E}_{\mu_h^k} \left[ \left. \nabla f^2 \right|_{\hat{f}_h^{k-1}} \cdot \left( \mathcal{T}_h \hat{f}_{h+1}^{k-1} - \mathcal{T}_h \hat{f}_{h+1}^{k} \right) \right]}_{\text{Target drift}}
\end{aligned}
\end{equation}
\end{theorem}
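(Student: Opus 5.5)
The plan is to use the empirical-distribution recursion of Proposition~\ref{pro:rec} together with the first-order optimality of $\hat f_h^{k-1}$ for the previous-round objective. Shifting the index in Proposition~\ref{pro:rec} gives
\begin{equation*}
\mu_h^{k}=\frac{k-1}{k}\,\mu_h^{k-1}+\frac{1}{k}\,\hat d_h^{\pi^{k}}.
\end{equation*}
Next I would split the new target into the old target plus its drift, $f-\mathcal T_h\hat f_{h+1}^{k}=(f-\mathcal T_h\hat f_{h+1}^{k-1})+(\mathcal T_h\hat f_{h+1}^{k-1}-\mathcal T_h\hat f_{h+1}^{k})$. Expanding the square yields three pieces: the old squared residual $(f-\mathcal T_h\hat f_{h+1}^{k-1})^2$, a cross term $2(f-\mathcal T_h\hat f_{h+1}^{k-1})(\mathcal T_h\hat f_{h+1}^{k-1}-\mathcal T_h\hat f_{h+1}^{k})$, and a remainder $(\mathcal T_h\hat f_{h+1}^{k-1}-\mathcal T_h\hat f_{h+1}^{k})^2$. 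The remainder does not depend on $f$, so its gradient vanishes.

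For the first piece, I would apply the recursion to $\mathbb E_{\mu_h^k}$ and split it into $\frac{k-1}{k}\mathbb E_{\mu_h^{k-1}}[(f-\mathcal T_h\hat f_{h+1}^{k-1})^2]$ plus $\frac1k\mathbb E_{\hat d_h^{\pi^k}}[(f-\mathcal T_h\hat f_{h+1}^{k-1})^2]$. By Theorem~\ref{thm:pll}, $\mathbb E_{\mu_h^{k-1}}[(f-\mathcal T_h\hat f_{h+1}^{k-1})^2]$ is exactly the objective whose minimiser is $\hat f_h^{k-1}$, up to the $f$-independent constant $C_h^{k-1}$. Its gradient at $\hat f_h^{k-1}$ is therefore zero by first-order stationarity. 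Only the $\frac1k$ distributional-shift term survives, which is the first summand of the statement.

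For the cross term, differentiating in $f$ (or in the parameters through $f$) and evaluating at $\hat f_h^{k-1}$ gives $\mathbb E_{\mu_h^k}[2\nabla f|_{\hat f_h^{k-1}}\,(\mathcal T_h\hat f_{h+1}^{k-1}-\mathcal T_h\hat f_{h+1}^{k})]$. The drift factor is a fixed function of $(s,a)$, so the derivative falls only on $f$. I would read the paper's $\nabla f^2$ as the derivative of the square with respect to $f$, namely $2\nabla f$; with that reading this is the target-drift term. Summing the surviving pieces gives the claimed identity.

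The main obstacle is justifying the stationarity step. It needs $\hat f_h^{k-1}$ to be an exact interior (unconstrained) minimiser of the previous population loss over a differentiable parameterisation of $\mathcal F$. It also needs gradient and expectation to commute, which holds under bounded rewards and $f$, and locally bounded $\nabla f$, via dominated convergence. I would state these as standing assumptions, consistent with the paper's choice to identify empirical and population risks. A secondary subtlety is the indexing: the previous round's objective must use target $\hat f_{h+1}^{k-1}$ and distribution $\mu_h^{k-1}$, and this should be checked against how the paper defines $\hat f_h^{k-1}$.
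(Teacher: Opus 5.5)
Your proposal is correct and is essentially the paper's proof: the paper also splits the new target as the old target plus the drift $\mathcal T_h\hat f_{h+1}^{k-1}-\mathcal T_h\hat f_{h+1}^{k}$, applies $\mu_h^k=\frac{k-1}{k}\mu_h^{k-1}+\frac1k\hat d_h^{\pi^k}$ to the old-residual part, and drops the $\mu_h^{k-1}$ piece by stationarity of $\hat f_h^{k-1}$. The only difference is cosmetic: the paper differentiates first, writing $\mathbb E_{\mu_h^k}[2(f-\mathcal T_h\hat f_{h+1}^k)\nabla f]$, and then splits the residual, whereas you expand the square first; your reading of $\nabla f^2$ as $2\nabla f$ matches the paper's own computation.
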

%
By setting $\hat{f}_{H+1} \equiv 0$. This eliminates the target-drift term entirely, leaving only the distributional-shift component—where the $\Theta(1/k)$ scaling factor becomes the dominant driver of gradient decay. As the number of training iterations $k$ grows large, the magnitude of the initial gradient will tend to approach zero. This near-zero gradient signal risks trapping the optimization process at saddle points, as the model lacks sufficient directional information to escape these suboptimal regions.


\section{Sample Weight Decay (\texttt{SWD})}

\begin{algorithm}[H]
\caption{Sample Weight Decay (\texttt{SWD})}
\label{alg:linear_decay_sampling}
\begin{algorithmic}[1]\label{algo}
\REQUIRE Linear decay steps $T$, minimum weight $w_{\min}$, Current time $t$, timestamps $\{t_i\}_{i=1}^{|\mathcal{D}|}$
    \FOR{$i = 1$ to $|\mathcal{D}|$}
        \STATE $\text{age}_i = t - t_i$ 
        \STATE $w_i = \max\left(w_{\min}, 1 - \frac{age_i}{T}\right)$ 
    \ENDFOR
    
    \STATE $p_i = \frac{w_i}{\sum_{j=1}^{|\mathcal{D}|} w_j}$ for $i = 1, \ldots, |\mathcal{D}|$ 
    \STATE $\mathcal{I} \sim \text{Categorical}(\{p_i\}_{i=1}^{|\mathcal{D}|}, B)$ 
\STATE \textbf{return} $\mathcal{B} = \{(s_i, a_i, r_i, s'_i, d_i)\}_{i \in \mathcal{I}}$
\end{algorithmic}
\end{algorithm}
\methodName is a principled algorithmic intervention to mitigate gradient signal degradation in non-stationary reinforcement learning environments. As in Algorithm~\ref{algo}, it addresses the core challenge in Theorem~\ref{thm:initial_gradient}: the harmful $\frac{1}{k}$ decay of gradient contributions from new data. It uses a linear decay mechanism, assigning each sample a weight $w_i = \max(w_{\min}, 1 - \frac{\text{age}_i}{T})$, where $\text{age}_i = t - t_i$ (t = current training step, $t_i$ = sample collection step).
The key insight of Algorithm~\ref{algo} is its rigorous sample weighting. It identifies the $\frac{1}{k}$ coefficient—overly attenuating gradients from the current policy distribution $\hat{d}^{\pi^k}$—as the root of gradient degradation. To counter this, \methodName introduces a linear weighting scheme: each sample gets a probability $p_i = \frac{w_i}{\sum_{j=1}^{|D|} w_j}$, with $p_i$ proportional to sample recency. This neutralizes the $\frac{1}{k}$ attenuation, restoring gradient magnitude and sustaining model plasticity during training.



\section{Experiments}

The core objective of the experiment is to {validate the efficacy of the proposed \methodName method in mitigating plasticity loss during long-horizon training} and quantify its performance advantages with multifaceted analyses. Specifically, we focus on the following five key research questions:

\begin{itemize}
\item \textbf{Q1}: 
Does the proposed method \methodName consistently improve the training performance of mainstream reinforcement learning (RL) algorithms across different continuous and discrete control tasks? 
    
    \item \textbf{Q2}: Does the temporal weighting strategy of the proposed method \methodName play a critical role in alleviating plasticity loss?
    
    \item \textbf{Q3}: Can \methodName adapt to the training scenarios with increased Update-to-Data (UTD) ratio configurations, where more severe plasticity loss should be addressed for better data efficiency?
    
    \item \myadded{\textbf{Q4}: How does \methodName compare with other methods designed to address plasticity issues? And is it feasible to combine \methodName with these other methods?}
    
    \item \myadded{\textbf{Q5}: How sensitive is the proposed \methodName to the hyperparameters?
    How do different choices of heuristics influence the results?
    }
\end{itemize}
To address Q1, we conduct experiments using the \myadded{Double DQN}, TD3, and SAC algorithms within the SimBa architecture~\citep{lee2025simba}, evaluating their performance across the \myadded{Arcade Learning Environment}~\citep{ale}, the MuJoCo environments \citep{MuJoCo}, and the DMC suite \citep{dmc}. We also include the canonical method Prioritized Experience Replay (PER)~\citep{PER} as a direct baseline method. Furthermore, to provide reverse validation of \methodName's effectiveness, we use a variant called Sample Weight Augmentation (\texttt{SWA}), i.e., a counterpart designed to produce the opposite effect by assigning higher weights to older samples. 
For Q2, we adopt \texttt{GraMa}~\citep{grama} as the metric for plasticity, using it to empirically demonstrate the superiority of our proposed method in alleviating plasticity loss. To answer Q3, we evaluate the performance of \methodName based on Simba-SAC under different UTD ratios, with a specific focus on the Humanoid Run environment. \myadded{To address Q4,  we compare \methodName against other representative methods designed to address plasticity issues. For Q5, we conduct extensive experiments to analyze the hyperparameter sensitivity of \methodName and the effects of different decay strategies, such as exponential decay and polynomial decay.
}

\subsection{Performance Evaluation}\label{sec:main_results}
\begin{figure}[t]
\centering  
\subfigure[Ant]{
\label{Fig:ant}
\includegraphics[width=0.3\textwidth]{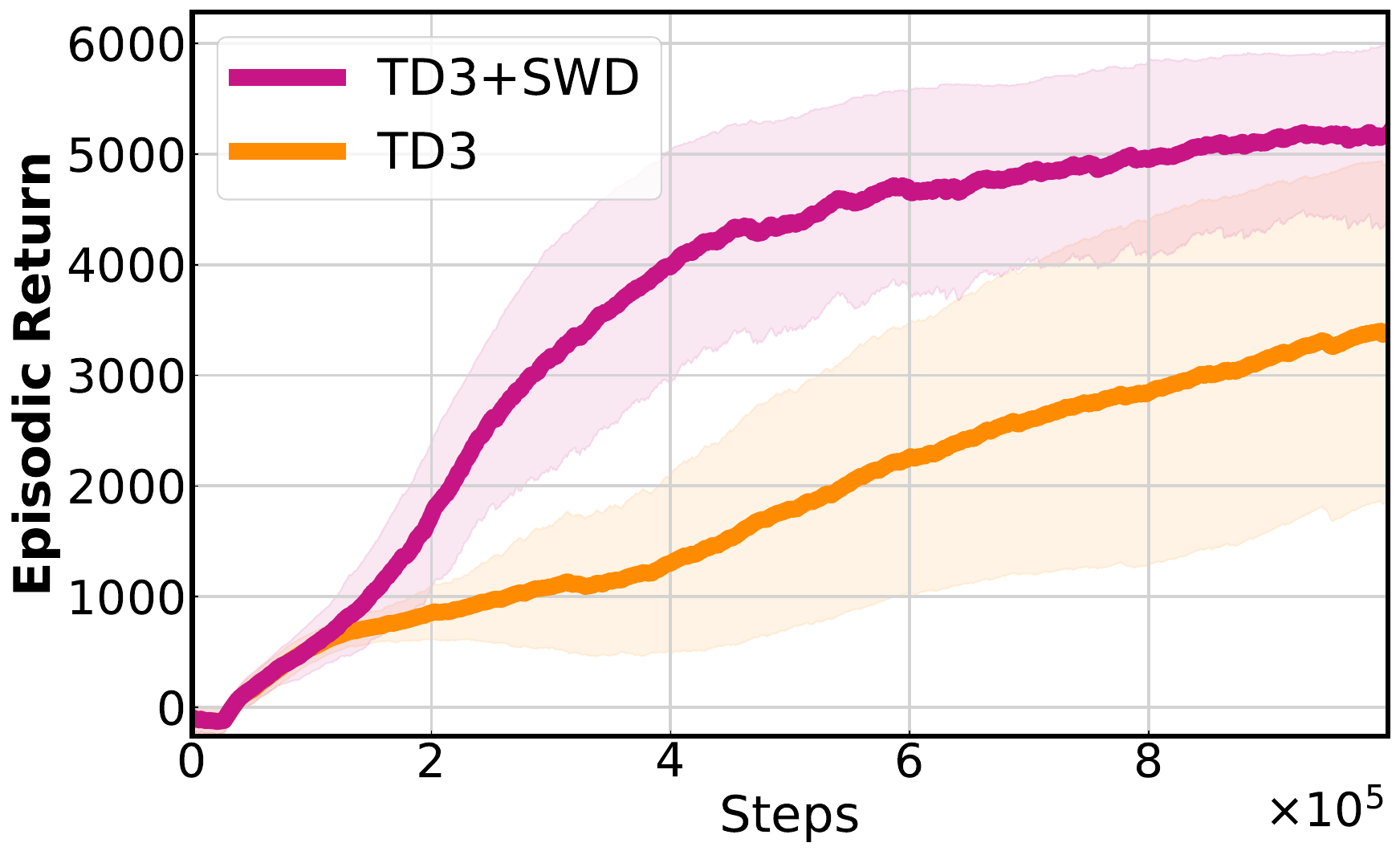}
}
\subfigure[Halfcheetah]{
\label{Fig:Halfcheef}
\includegraphics[width=0.3\textwidth]{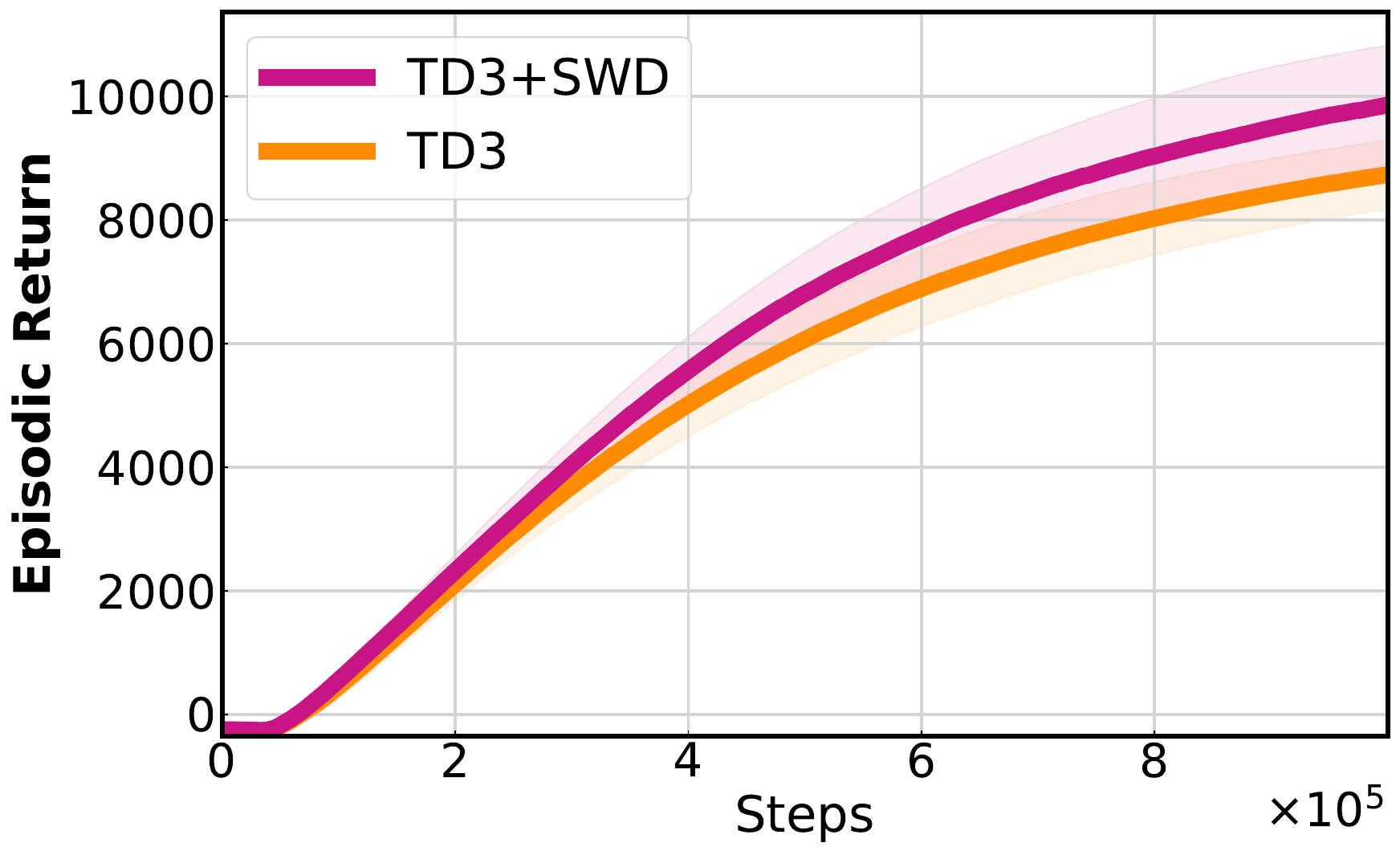}}
\subfigure[Humanoid]{
\label{Fig:Humanoid}
\includegraphics[width=0.3\textwidth]{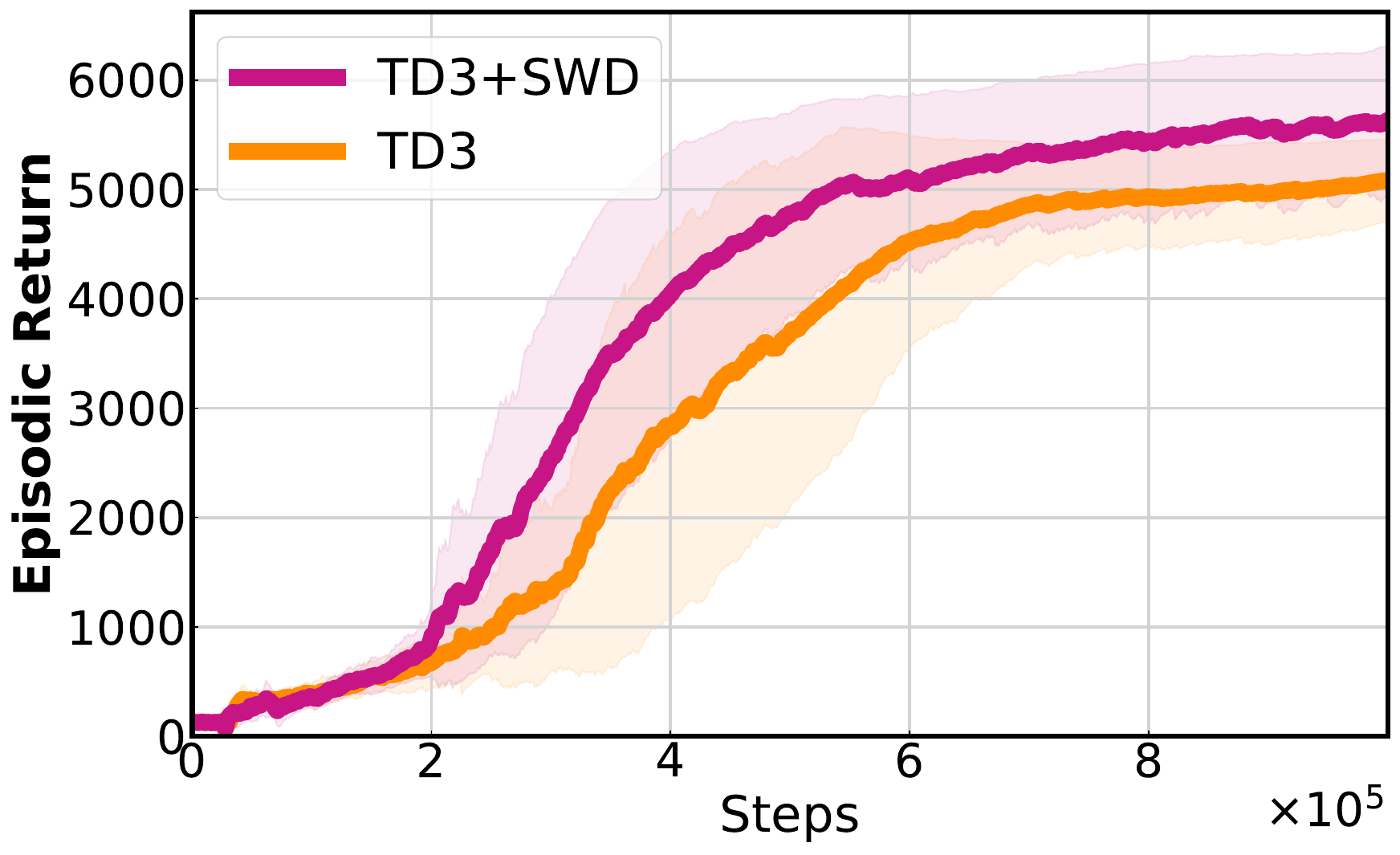}
}
\subfigure[Walker2d]{
\label{Fig:Walker2d}
\includegraphics[width=0.3\textwidth]{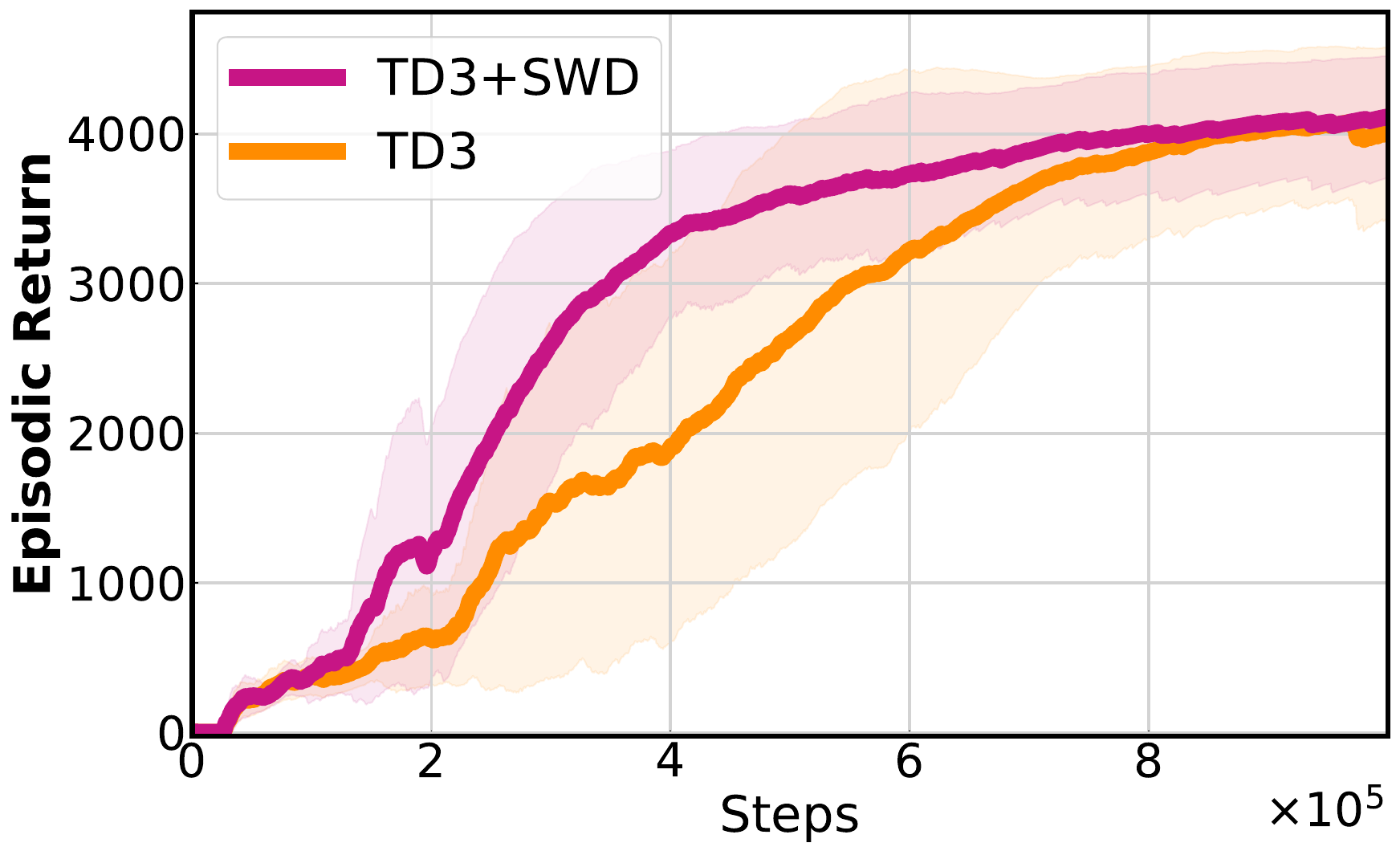}
}
\subfigure[Hopper]{
\label{Fig:Hopper}
\includegraphics[width=0.3\textwidth]{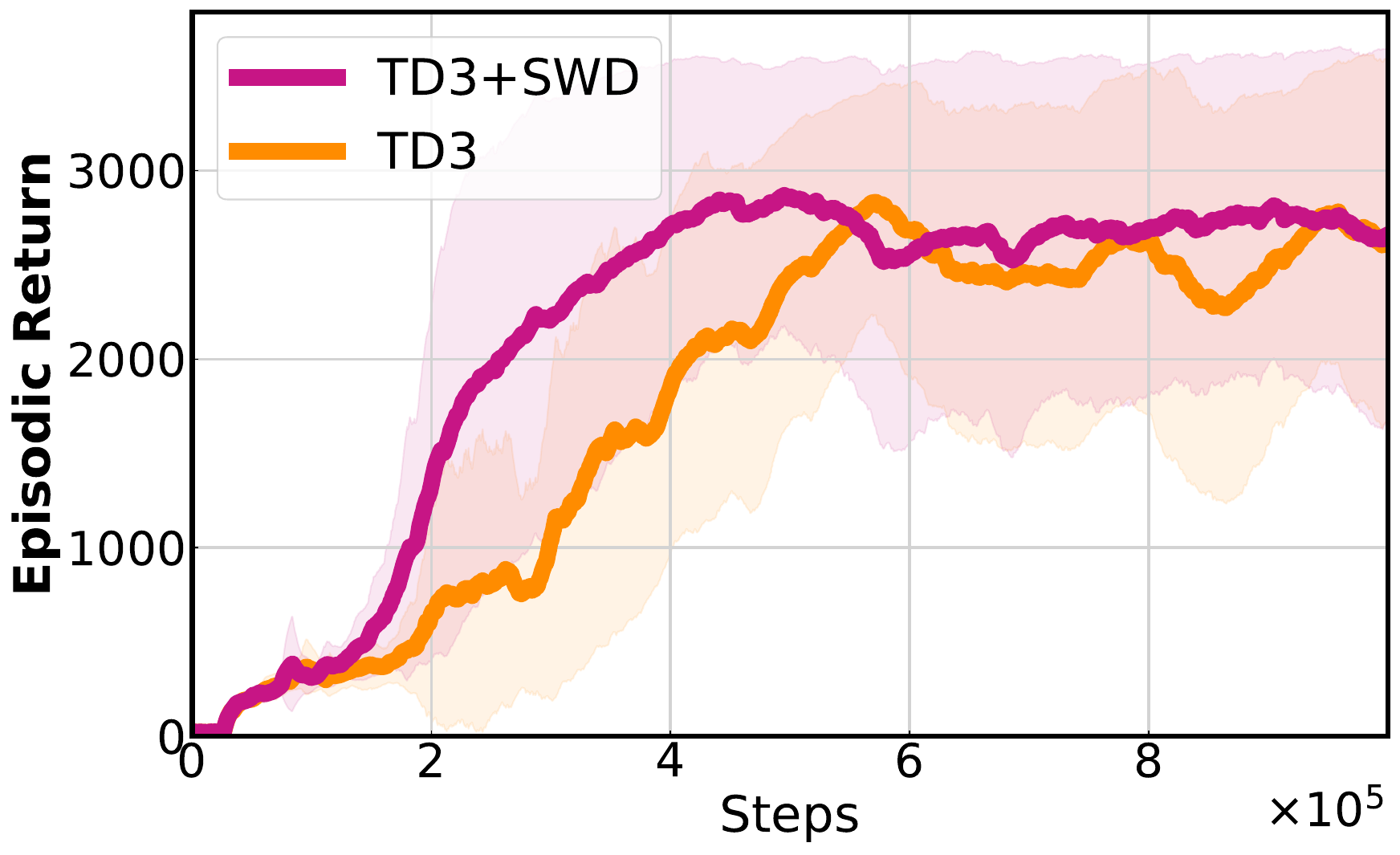}
}
\vspace{-0.3cm}
\caption{Empirical validation of \texttt{SWD} across TD3 in MuJoCo environments (mean $\pm$ std over 5 runs). \texttt{SWD} consistently improves sample efficiency and performance.}
\label{Fig:MuJoCo}
\vspace{-0.3cm}
\end{figure}

\begin{figure}[t]
\centering  
\subfigure[Phoenix]{
\label{Fig:Phoenix}
\includegraphics[width=0.3\textwidth]{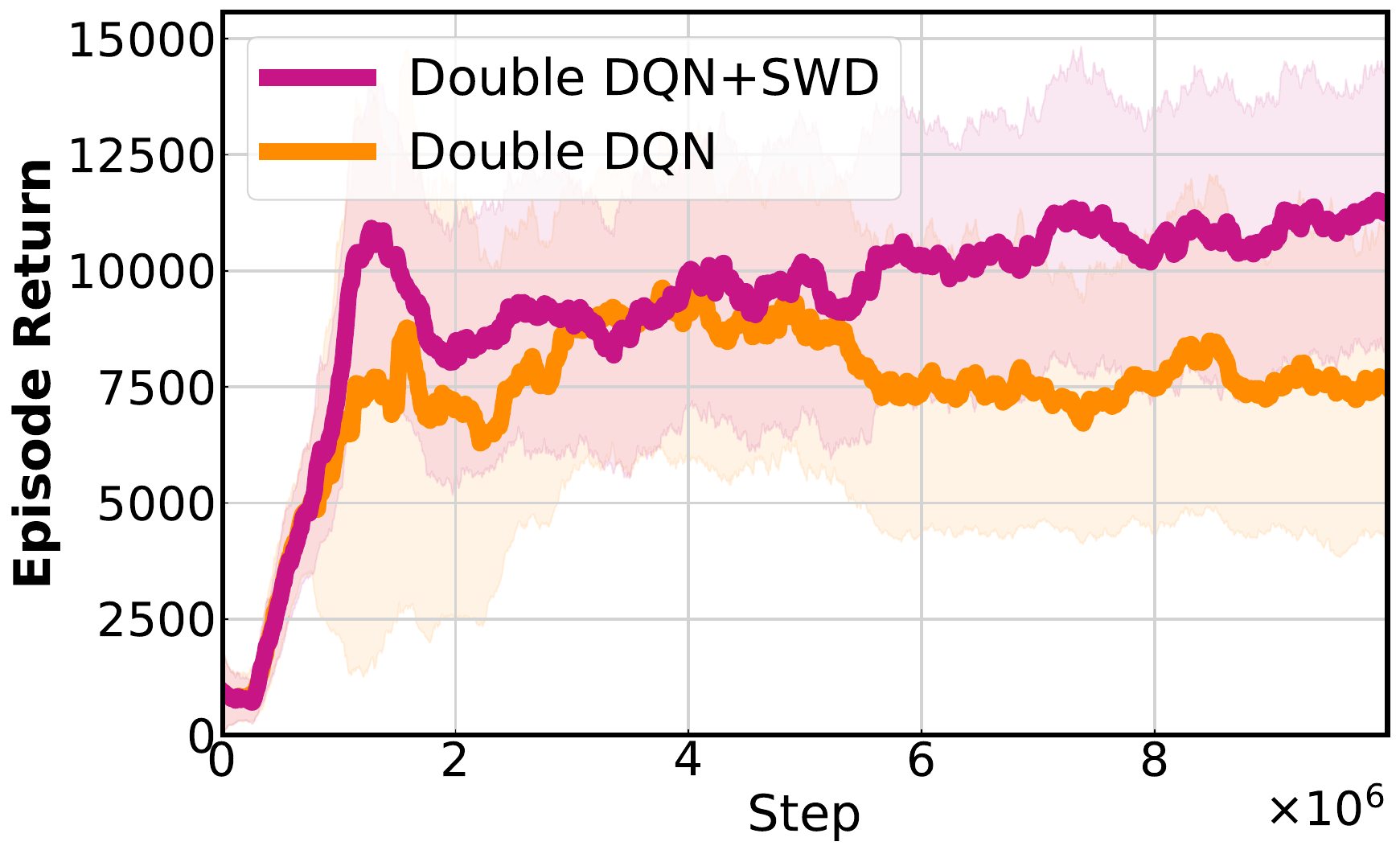}
}
\subfigure[DemonAttack]{
\label{Fig:DemonAttack}
\includegraphics[width=0.3\textwidth]{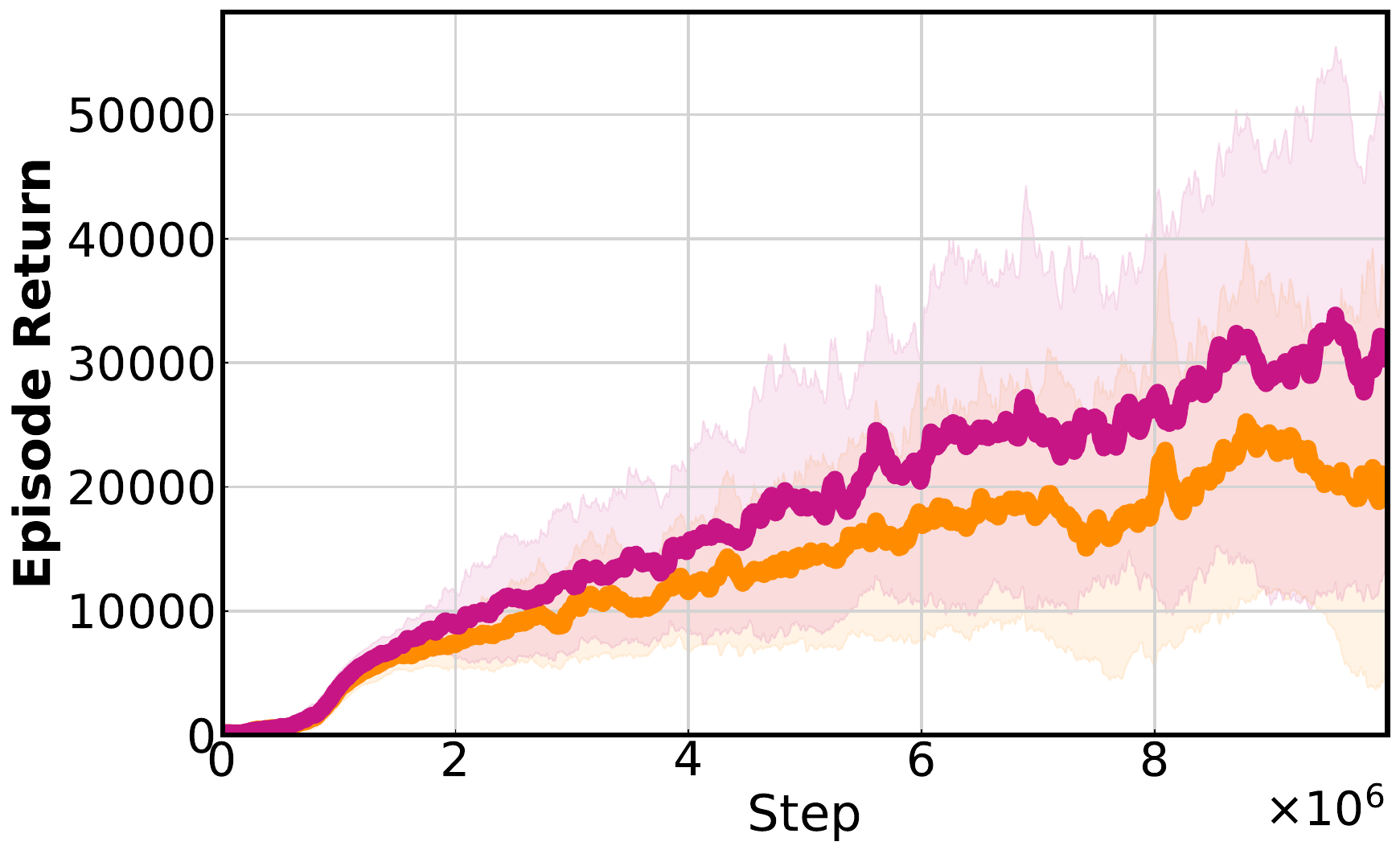}}
\subfigure[Breakout]{
\label{Fig:Breakout}
\includegraphics[width=0.3\textwidth]{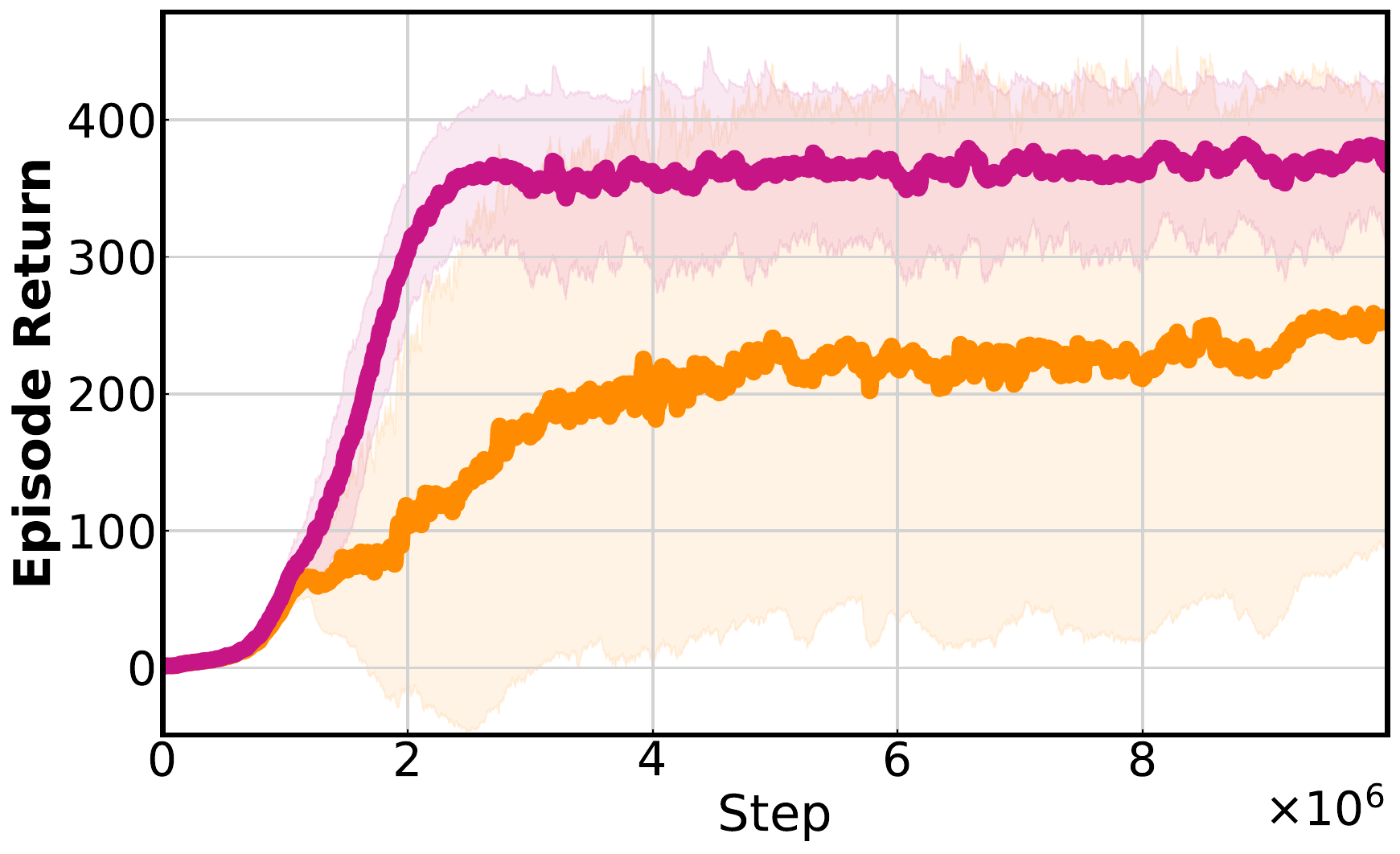}
}
\vspace{-0.3cm}
\caption{\myadded{Empirical validation of \texttt{SWD} across Double DQN in ALE environments (mean $\pm$ std over 5 runs). \texttt{SWD} consistently improves sample efficiency and performance.}}
\label{Fig:ALE}
\vspace{-0.3cm}
\end{figure}

\paragraph{Experimental Setup.} We evaluate methods on three benchmark suites: 
\textit{(i)} For the five MuJoCo environments (Ant, HalfCheetah, Hopper, Humanoid, Walker2d), we use TD3~\citep{td3} as the base algorithm with conventional MLP networks. 
\myadded{\textit{(ii)} For the three ALE environments (DemonAttack, Phoenix, and Breakout), we use Double Deep Q-Network~\citep{DDQN} as the base algorithm with the typical CNN-MLP networks. }
\textit{(iii)} For the four difficult DMC tasks (Humanoid-Run, Humanoid-Walk, Dog-Run, Dog-Walk),we use SAC~\citep{sac} as the base algorithm with the SimBa network architecture~\citep{lee2025simba}. 
In this subsection, we include {PER} as a canonical baseline for comparison. Detailed hyperparameters and details are provided in Appendix ~\ref{App:exp}.

\vspace{-0.2cm}
\paragraph{Results} 
As illustrated in Figure~\ref{Fig:MuJoCo}, \myadded{Figure}~\ref{Fig:ALE} and Figure~\ref{fig:rli}, \methodName demonstrates a remarkable ability to enhance the algorithm's performance. Specifically, it facilitates accelerated learning during the early phases of training and attains superior final policy quality upon convergence—an advantage that is particularly prominent in the Ant and Humanoid environments. In sharp contrast, PER (Prioritized Experience Replay) demands nearly several times more training time, while the performance improvements it yields remain extremely limited. This observation aligns well with our theoretical framework, and Equation~\ref{eq:bound} further confirms that performance enhancement can only be achieved by optimizing the TD errors along both the optimal policy path and the current policy path.

\begin{figure}
    \centering\includegraphics[width=1\linewidth]{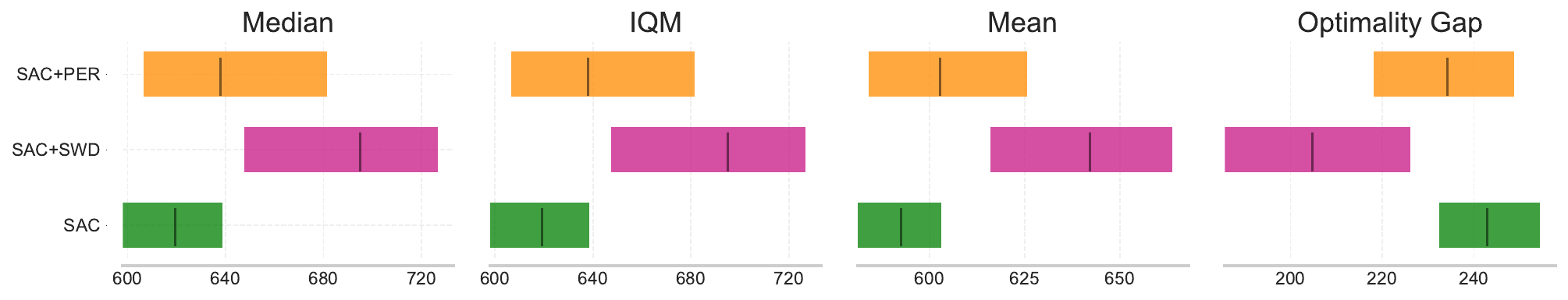}
    \vspace{-0.7cm}\caption{{Performance comparison between \texttt{SWD} and PER based on SAC.} Aggregate \textit{Reliable} metrics~\citep{agarwal2021deep} with 95\% Stratified Bootstrap CIS in DMC tasks.}
    \label{fig:rli}
    \vspace{-0.2cm}
\end{figure}


\subsection{Ablation Study}
To provide reverse validation of \texttt{SWD}'s effectiveness, we develop a contrasting method called{Sample Weight Augmentation (\texttt{SWA})}, which implements the opposite weighting strategy by assigning higher weights to older data samples. This design allows us to empirically verify our theoretical hypothesis that prioritizing recent experiences is crucial for maintaining neural plasticity.More details are shown in Appendix~\ref{App:additional exp}, where we employed GraMa~\citep{grama} as our measure of neural plasticity.

\begin{figure}[htbp]
\centering  
\subfigure[Performance Comparison]{
\label{Fig:swa_performance}
\includegraphics[width=0.3\textwidth]{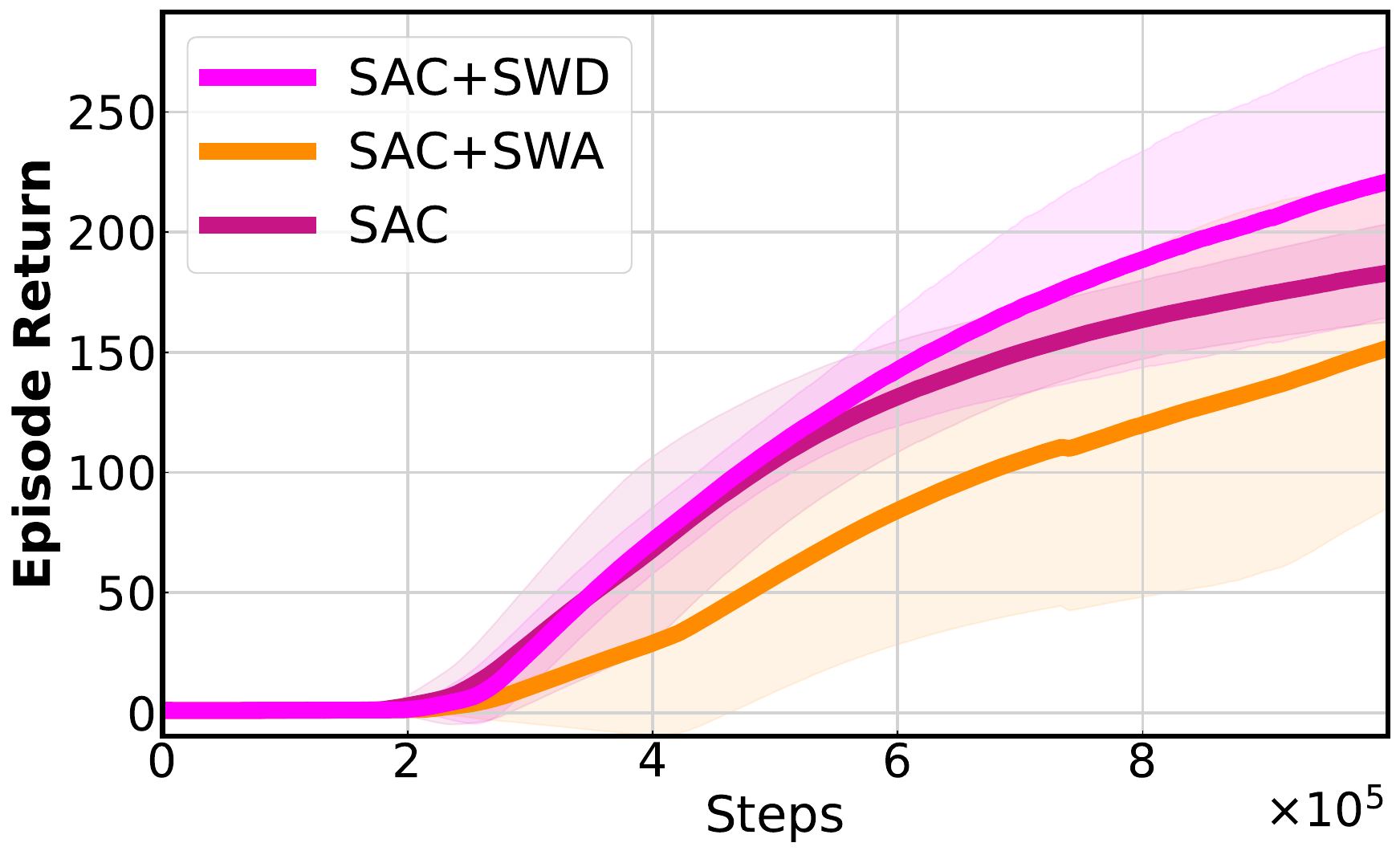}
}
\subfigure[Gradient L1 Norm Evolution]{
\label{Fig:l2_norm}
\includegraphics[width=0.3\textwidth]{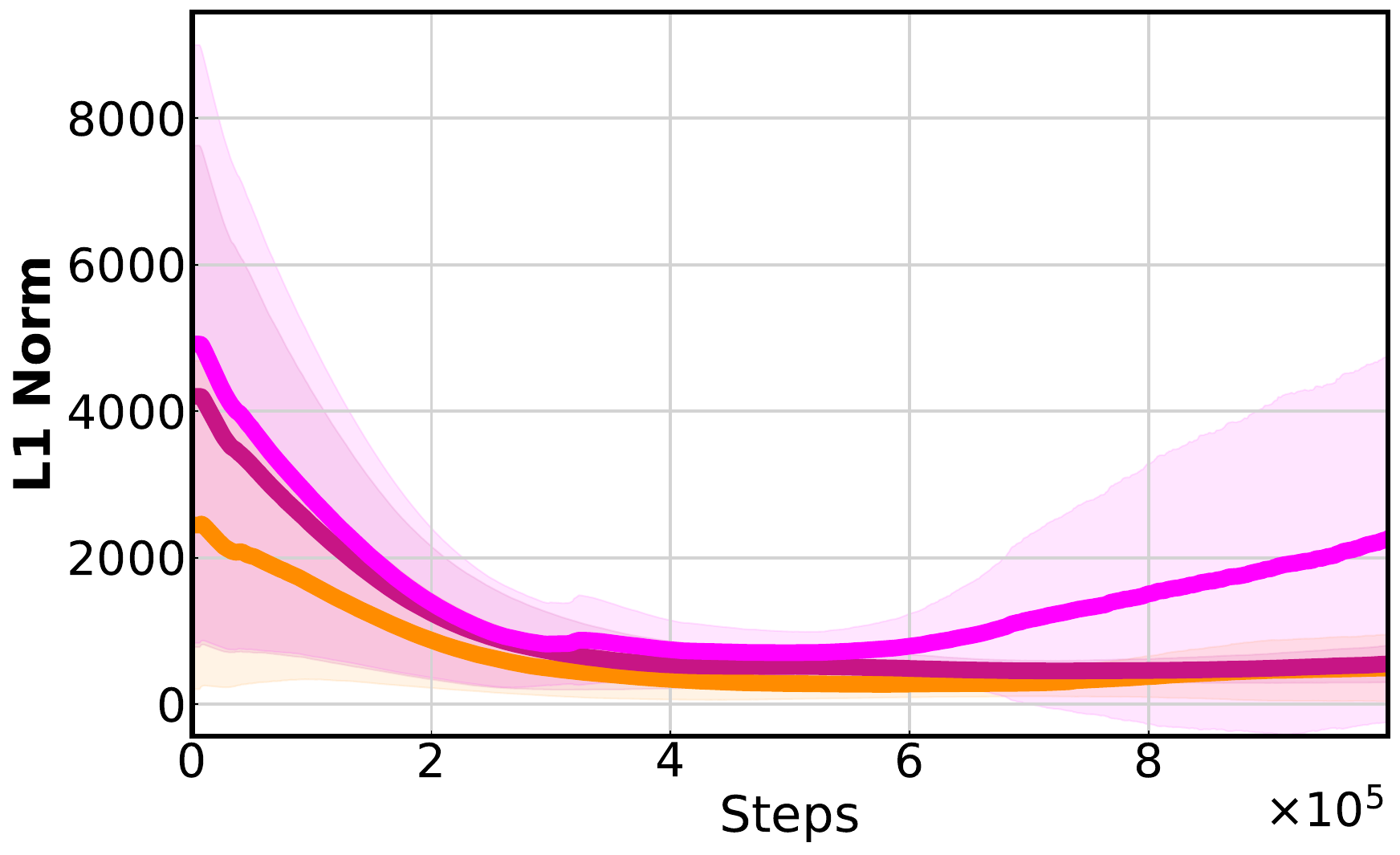}
}
\subfigure[GraMa]{
\label{Fig:grad_sparsity}
\includegraphics[width=0.3\textwidth]{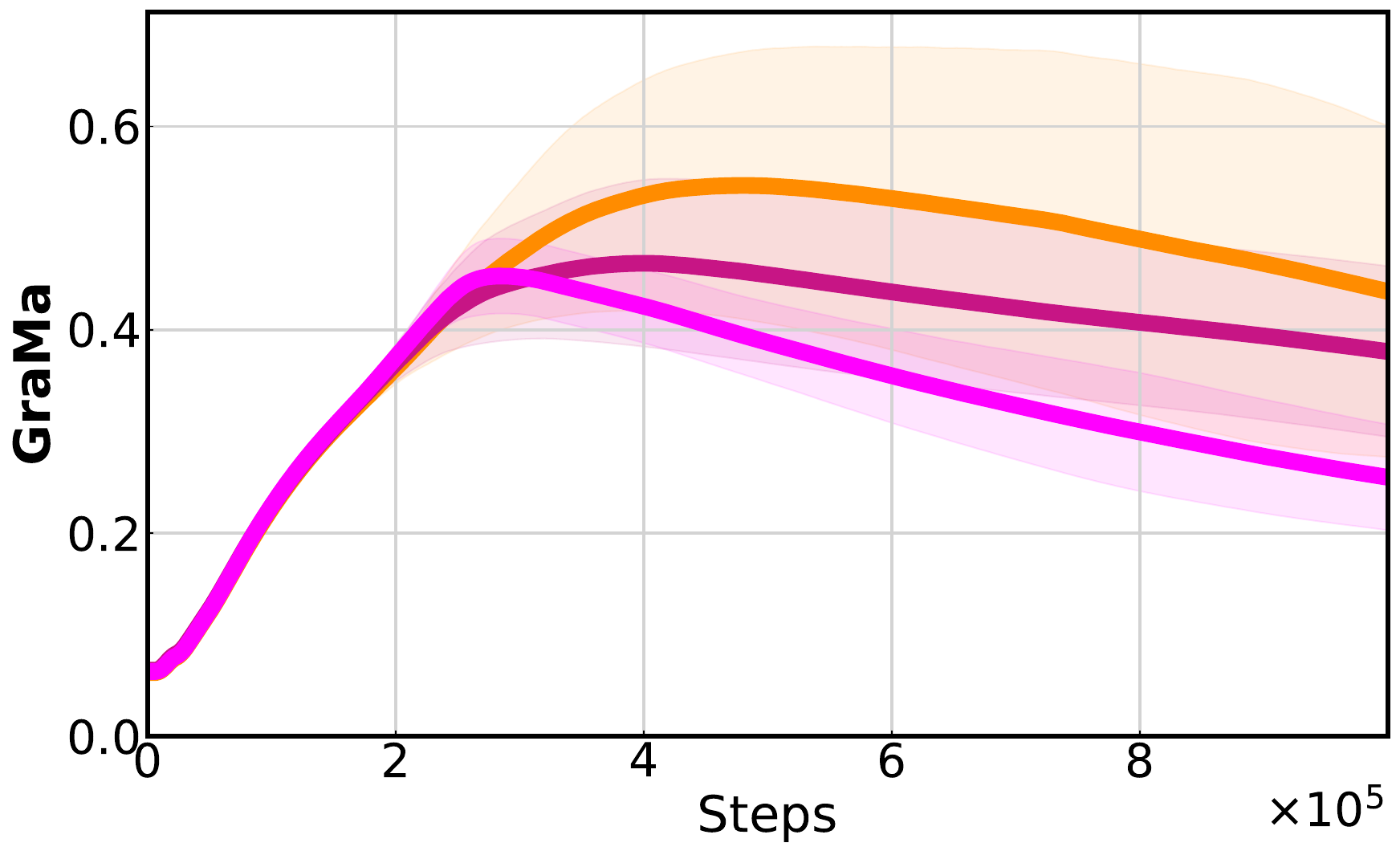}
}
\vspace{-0.3cm}
\caption{Experiments conducted in the humanoid-run environment demonstrate that \texttt{SWA} exhibits a lower gradient magnitude, GraMa, and inferior performance, which validates our hypothesis.}
\label{Fig:SWA}
\vspace{-0.2cm}
\end{figure}

\vspace{-0.2cm}
\paragraph{Results} The reverse validation experiment yields key insights: \textit{(i)} As shown in Figure~\ref{Fig:swa_performance}, \texttt{SWA} consistently underperforms \texttt{SWD} and uniform sampling, validating that prioritizing recent experiences is critical for non-stationary RL learning; \textit{(ii)} Figure~\ref{Fig:l2_norm} shows \texttt{SWA} reduces gradient L1 norms during training (weakened learning signals), aligning with our gradient attenuation analysis and confirming older data exacerbates plasticity loss; \textit{(iii)} GraMa analysis in Figure~\ref{Fig:grad_sparsity} reveals \texttt{SWA} causes sparser gradients and greater plasticity loss than \texttt{SWD} (reduced neural activation/adaptation capacity), providing direct empirical support for our theoretical framework’s plasticity degradation prediction.

\subsection{The Effect in Alleviating Plasticity Loss}\label{exp:plasticity}

\begin{figure}
    \centering
    \subfigure[Humanoid Run]{
    \label{Fig:humanoid run}
    \includegraphics[width=0.3\textwidth]{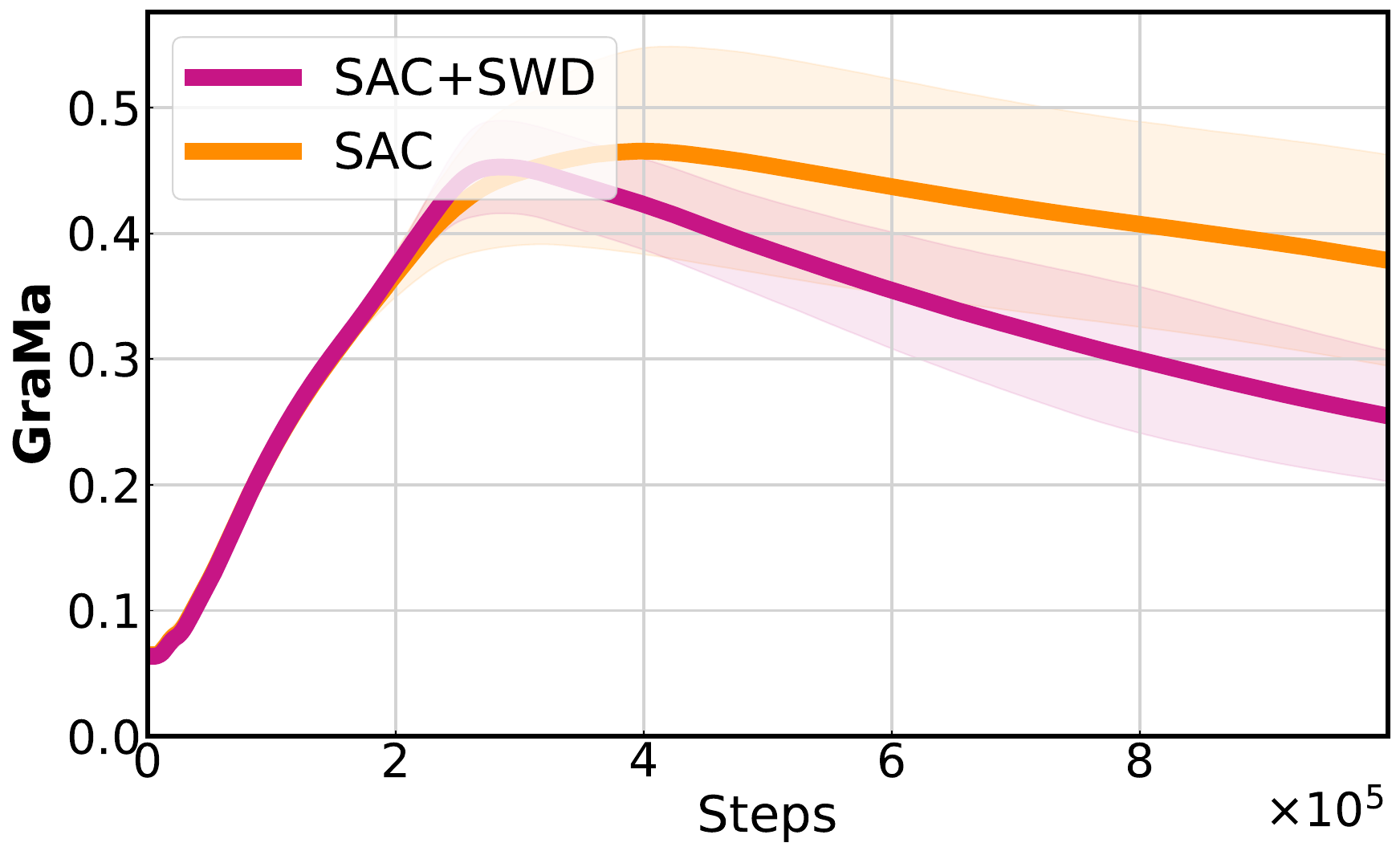}
    }
    \subfigure[Humanoid Walk]{
    \label{Fig:humanoid walk}
    \includegraphics[width=0.3\textwidth]{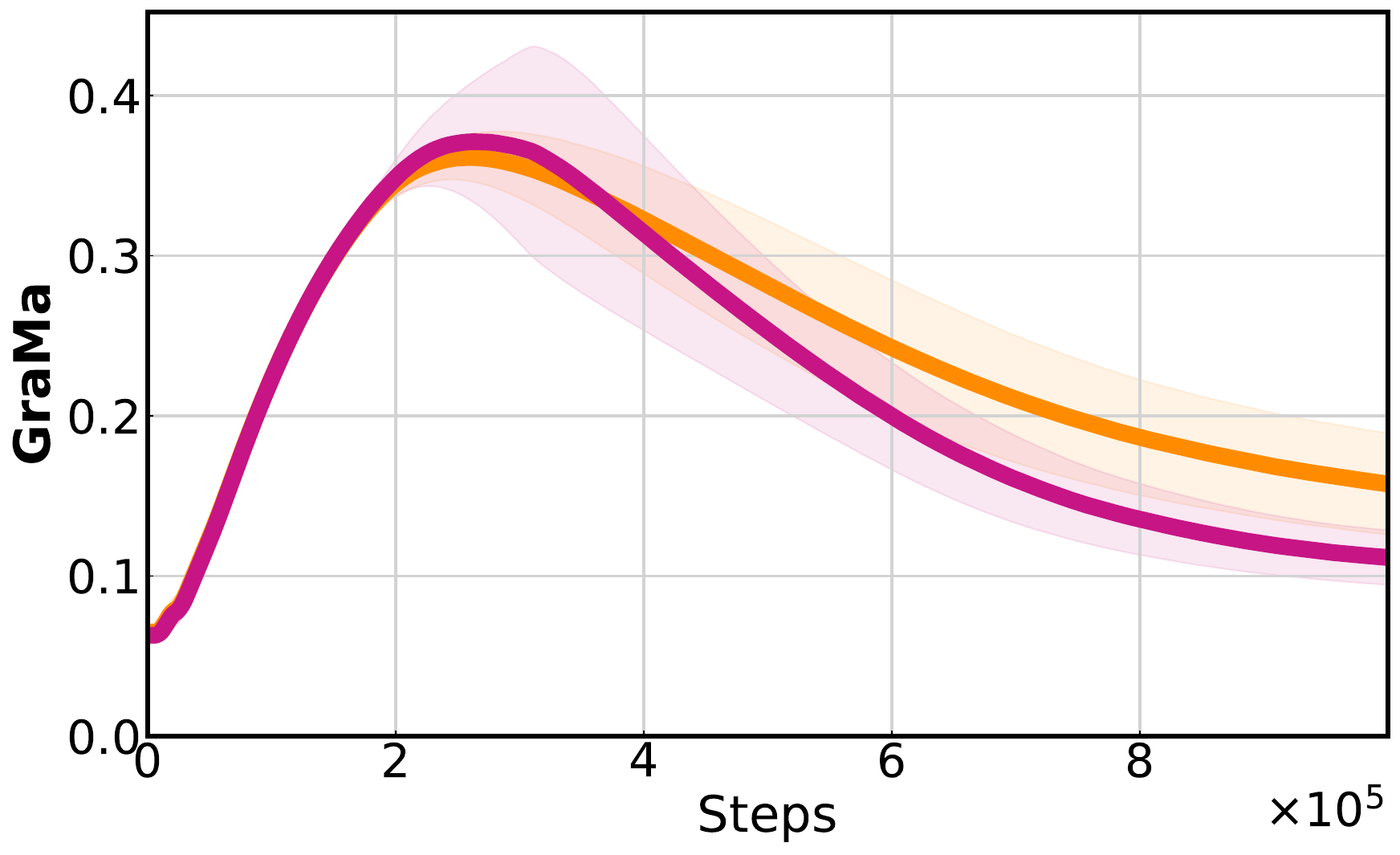}
    }
    \subfigure[Humanoid Stand]{
    \label{Fig:humanoid stand}
    \includegraphics[width=0.3\textwidth]{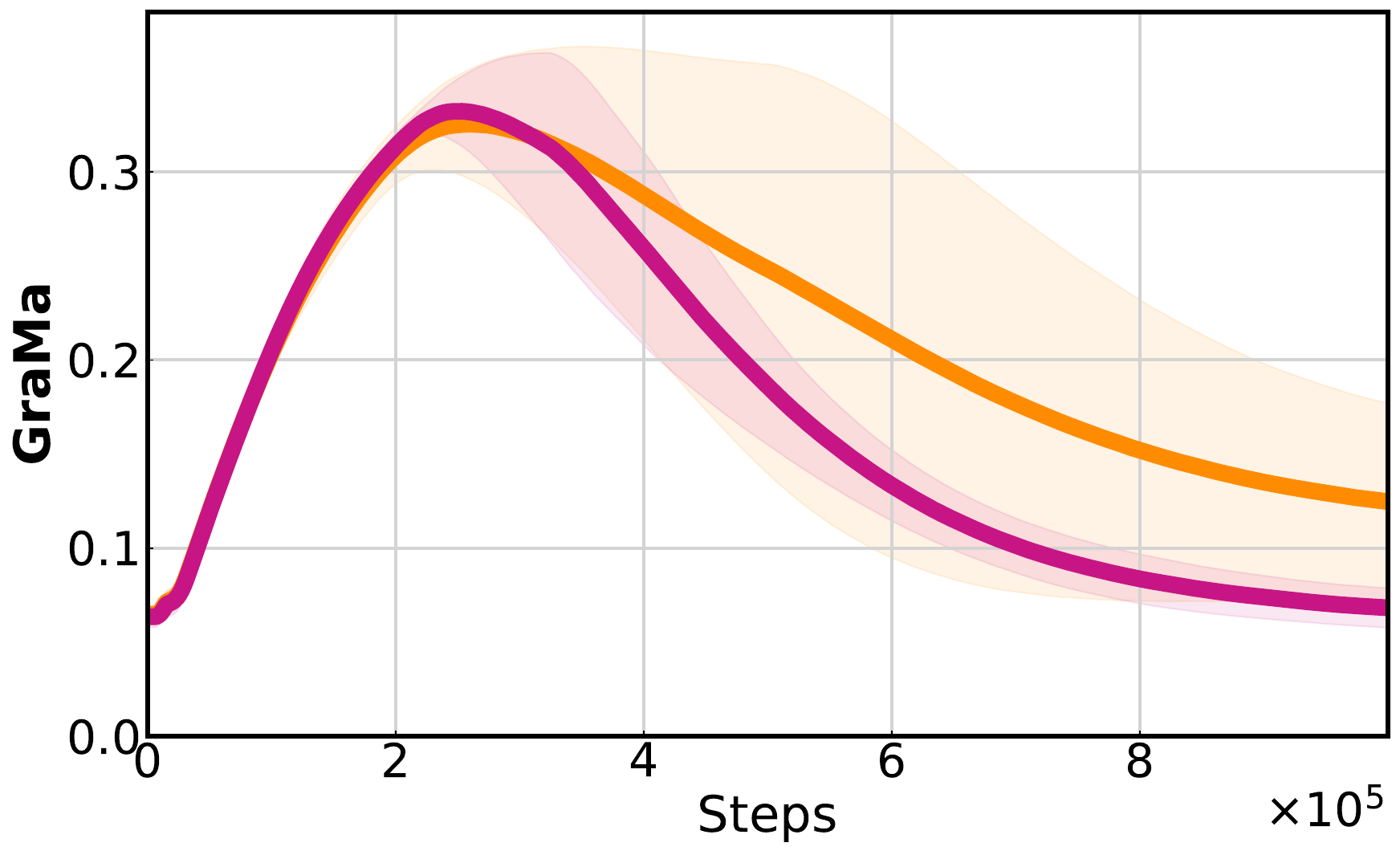}
    }
    \vspace{-0.3cm}
    \caption{\texttt{GraMa} Metric in Humanoid Locomotion: Run, Walk, and Stand: The results clearly demonstrate that \methodName effectively mitigates the loss of plasticity in humanoid robots across these key locomotor states.}
\label{Fig:grama}
    \vspace{-0.3cm}
\end{figure}

To verify whether \methodName can mitigate plasticity, we employed \texttt{GraMa} as the evaluation metric to quantify the degree of plasticity during the model training process. Notably, a larger \texttt{GraMa} value indicates a weaker learning capability of the neural network.

\vspace{-0.2cm}
\paragraph{Results} The corresponding results are illustrated in Figure~\ref{Fig:grama}. As depicted in this figure, our proposed \methodName effectively alleviates the gradient sparsity that arises during the training process. Notably, the most pronounced effects are observed in the Humanoid Run environment and the Humanoid Stand environment. It can be clearly seen from the figure that \methodName exerts its function in the middle and late stages of training --- gradient attenuation is not severe in the early stage --- and this observation is consistent with our theoretical predictions.

\subsection{Compatibility Against Higher Update-to-Data Ratios}\label{sec:UTD}

\begin{figure}[htbp]
    \centering
    \includegraphics[width=0.7\linewidth]{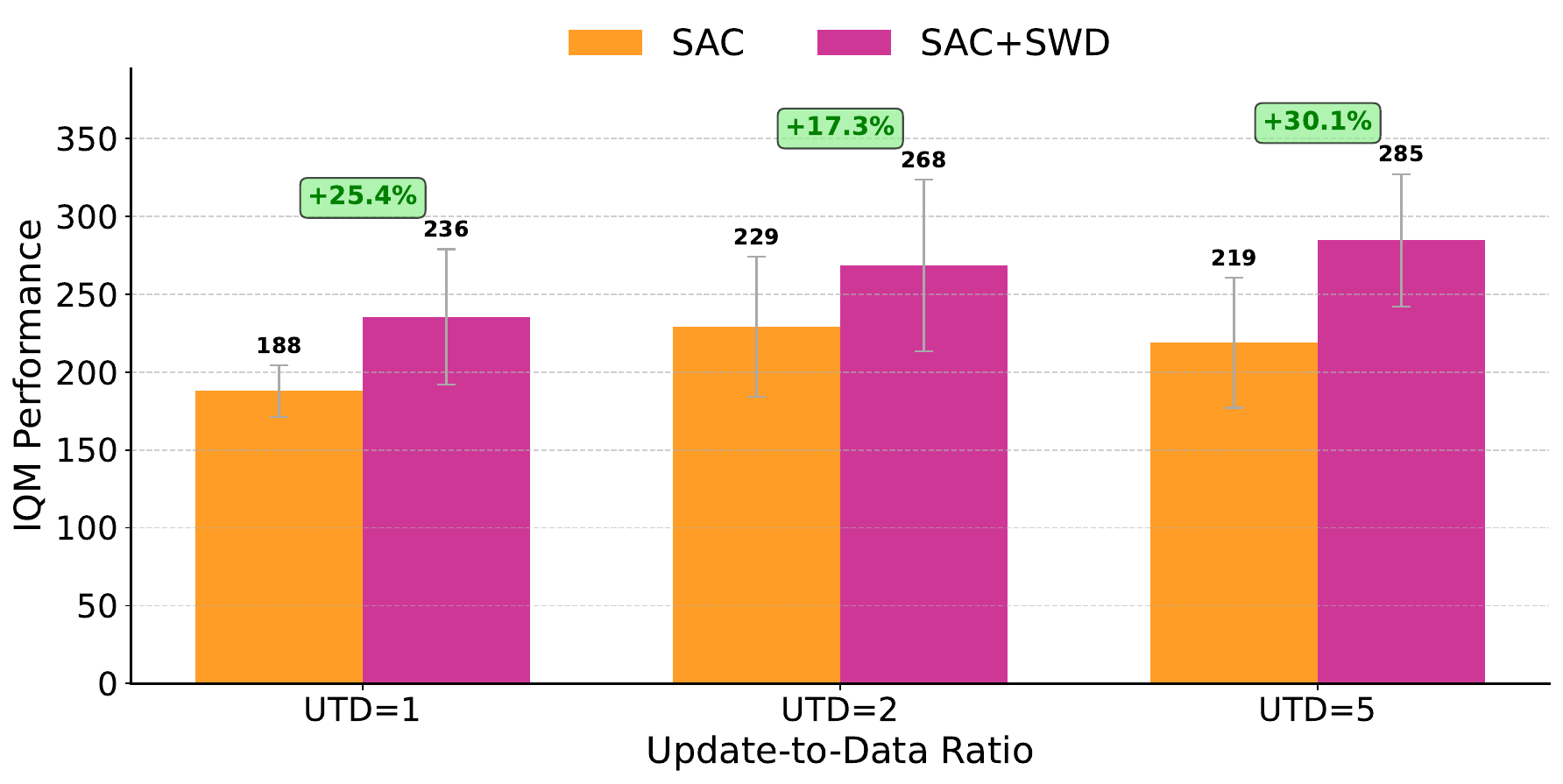}
    \vspace{-0.3cm}
    \caption{Performance comparison across different UTD ratios (1, 2, 5) in Humanoid Run. \texttt{SWD} consistently outperforms uniform sampling across all UTD settings, with improvements ranging from 17.3\% to 30.1\%.}
    \label{fig:utd_sensitivity}
\end{figure}

The Update-to-Data Ratio (UTD) is a critical metric for measuring an algorithm’s data utilization efficiency. Intuitively, uniform sampling assigns equal weight to each sample; after multiple updates, the gradient signals that can effectively guide the update of network parameters become very weak. In contrast, our \texttt{SWD} method assigns greater weight to more recent samples, ensuring that sufficiently strong gradient signals are maintained even after multiple updates.

As shown in Figure~\ref{fig:utd_sensitivity}, \texttt{SWD} demonstrates consistent effectiveness across UTD ratios of 1, 2, and 5. Notably, the method shows the largest improvement (+30.1\%) at UTD=5, suggesting that \texttt{SWD} is particularly beneficial when gradient updates are frequent. This robustness indicates that our approach is broadly applicable across different algorithmic configurations without requiring UTD-specific tuning.

\begin{figure}
\vspace{-0.3cm}
    \centering
      \includegraphics[width=0.9\textwidth]{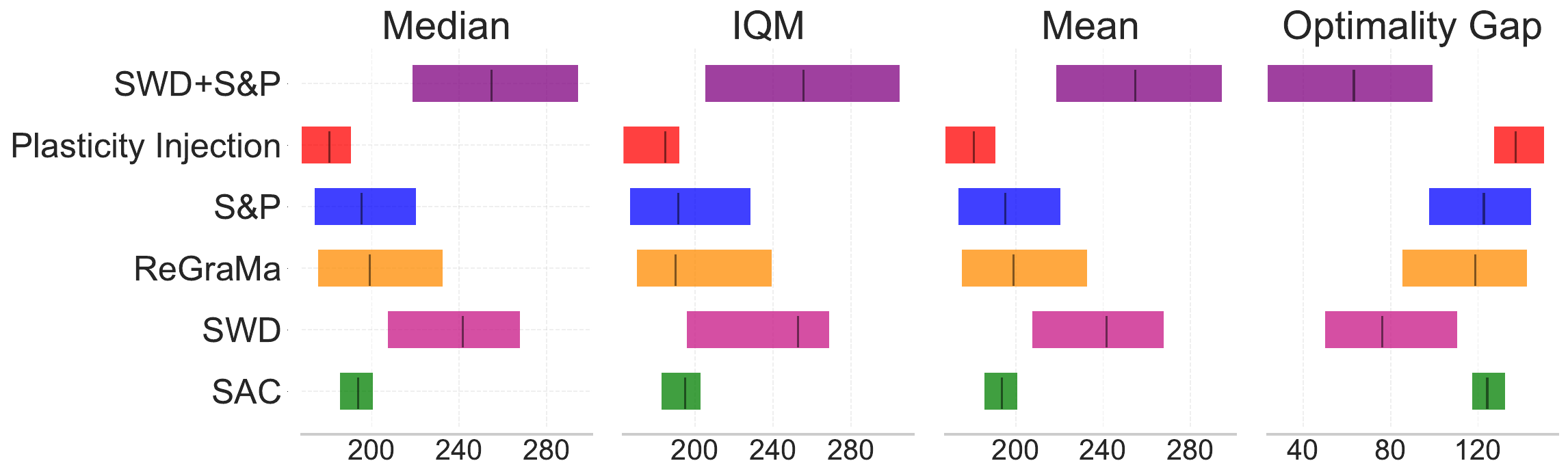}
\vspace{-0.3cm}
\caption{\myadded{Performance comparison between \texttt{SWD+S\&P} and other methods designed to address plasticity issues. Aggregate \textit{Reliable} metrics~\citep{agarwal2021deep} with 95\% Stratified Bootstrap CIS in Humanoid run.}}
    \label{Fig:compare}
\vspace{-0.3cm}
\end{figure}

\subsection{Comparison with Other Methods Designed to Address Plasticity Loss}\label{sec:plas_comp}
\myadded{To further evaluate the effectiveness of \methodName, we compare it in the Humanoid Run environment with three representative methods that are designed to address plasticity issues: {ReGraMa}~\citep{grama}, {S\&P}~\citep{s&p}, and {Plasticity Injection}~\citep{plasticityinjection}. Moreover, we explore \methodName's synergistic potential with S\&P, i.e., \texttt{SWD+S\&P}, which demonstrates the orthogonality.}

\vspace{-0.7cm}
\myadded{
\paragraph{Results} As in Figure~\ref{Fig:compare}, \methodName outperforms other NTK-based methods on the SimBa~\citep{lee2025simba} network. Moreover, \methodName combined with S\&P yields the best result, validating its orthogonality to NTK-based methods. We provide a detailed discussion on the relationship between \methodName and prior works for plasticity loss in Appendix~\ref{app:relationship}.
}

\subsection{Other Results}\label{subsec:hyperparam_others}
\vspace{-1cm}
\myadded{
\paragraph{Hyperparameter Choices and Decay Strategies}
To analyze the hyperparameter sensitivity of \methodName, we conduct a grid-search test for two core hyperparameters, i.e., linear decay steps $T$ and minimum weight threshold $w_{min}$. In Table~\ref{tab:sensitivity_clean} of Appendix~\ref{App:additional exp}, \methodName exhibits low sensitivity to different choices, demonstrating its stability.
Moreover, we compare the linear decay strategy of \methodName with the other two commonly adopted strategies, i.e., exponential decay and polynomial decay. Table~\ref{tab:decay_strategies} shows that the linear decay strategy outperforms the other two strategies.}

\vspace{-0.7cm}
\myadded{
\paragraph{Compute-efficient Approximation of \methodName} 
To further reduce the computational overhead of per-sample weight, we propose a bucket-based approximation method. As in Table~\ref{tab:runtime_perf} of Appendix~\ref{app:approix}, this approximation significantly reduces the training time at no compromise of policy performance.
}



\vspace{-0.1cm}
\section{Conclusion}
\vspace{-0.3cm}
In this paper, we identified and addressed the critical issue of plasticity loss in long-horizon reinforcement learning through both theoretical analysis and algorithmic innovation. Our theoretical framework reveals that gradient attenuation follows a $\Theta(1/k)$ decay pattern, fundamentally limiting the agent's ability to adapt to new experiences over extended training periods.
To counteract this degradation, we proposed Sample Weight Decay (\texttt{SWD}), a simple yet effective method that applies age-based weighting to replay buffer sampling. Through comprehensive experiments across MuJoCo, ALE and DMC environments with TD3, DDQN and SAC algorithms, we demonstrated consistent performance improvements ranging from 13.7\% to 30.1\% in IQM scores. Our ablation studies and reverse validation experiments confirm that temporal weighting direction is crucial for maintaining neural plasticity.
The broad applicability of \texttt{SWD} across different algorithms, environments, and training configurations, combined with its minimal computational overhead, makes it a practical solution for enhancing long-horizon RL performance. This work opens new avenues for understanding and mitigating plasticity loss in deep reinforcement learning.

\paragraph{Limitations} Owing to computational constraints, our evaluation is restricted to tasks within the MuJoCo, ALE and DeepMind Control Suite (DMC). Additionally, our exploration and practical application of the proposed theoretical framework remain at a preliminary stage—representing merely the "tip of the iceberg." Moving forward, future research will extend \methodName to more complex scenarios, real-world environments. Ultimately, our goal is to develop \methodName into a practical, robust tool that effectively preserves the learning capacity of deep reinforcement learning (RL) agents.



\newpage
\section*{Acknowledgments}
This work is supported by the National Natural Science Foundation of China (Grant Nos. 62422605, 62533021, 62541610, 92370132), the Fundamental Research Program of Shanxi Province (No. 202503021212091), and the National Key Research and Development Program of China (Grant No. 2024YFE0210900).
\bibliography{iclr2026_conference}
\bibliographystyle{iclr2026_conference}

\newpage
\appendix
\onecolumn
\section{The Use of Large Language Models (LLMs)}
Large Language Models (LLMs) were utilized to support the writing and refinement of this manuscript. Specifically, an LLM was employed to assist in enhancing language clarity, improving readability, and ensuring coherent expression across different sections of the paper. It aided in tasks like rephrasing sentences, checking grammar, and optimizing the overall textual flow.

It should be noted that the LLM played no role in the conception of research ideas, the formulation of research methodologies, or the design of experiments. All research concepts, ideas, and analyses were independently developed and carried out by the authors. The LLM's contributions were strictly limited to elevating the linguistic quality of the paper, without any involvement in the scientific content or data analysis.

The authors fully assume responsibility for the entire content of the manuscript, including any text generated or polished with the help of the LLM. We have verified that the text produced with the LLM complies with ethical guidelines and does not lead to plagiarism or any form of scientific misconduct.

\section{Proof}
\subsection{Proof of Theorem~\ref{thm:pll}}In this section, we prove Theorem\ref{thm:pll}.

\begin{equation*}
\begin{aligned}
    &\mathbb{E} \mathcal{L}_h^k(f,\hat{f}_{h+1}^k) \\
    &= \mathbb{E}_{(s_h,a_h)\sim \mu_h^k}\left[\mathbb{E}_{s_{h+1}\sim p_h(\cdot\mid s_h,a_h)}\left[\left(f(s_h,a_h)-r(s_h,a_h)-\max_{a'}\hat{f}_{h+1}^k(s_{h+1},a')\right)^2\right]\right]\\[0.5em]
    &= \mathbb{E}_{(s_h,a_h)\sim \mu_h^k}\Bigg[\mathbb{E}_{s_{h+1}\sim p_h(\cdot\mid s_h,a_h)}\bigg[\Big(f(s_h,a_h)-\mathcal{T}_h\hat{f}_{h+1}^k(s_h,a_h)\\
    &\qquad\qquad\qquad\qquad\qquad\qquad\qquad + \mathbb{P}_h\max_{a'}\hat{f}_{h+1}^k(s_{h+1},a')(s_h,a_h)-\max_{a'}\hat{f}_{h+1}^k(s_{h+1},a') \Big)^2\bigg]\Bigg]\\[0.5em]
    &= \mathbb{E}_{(s_h,a_h)\sim \mu_h^k}\Bigg[\mathbb{E}_{s_{h+1}\sim p_h(\cdot\mid s_h,a_h)}\bigg[\left(f(s_h,a_h)-\mathcal{T}_h\hat{f}_{h+1}^k(s_h,a_h)\right)^2\\
    &\qquad\qquad\qquad\qquad\qquad\qquad\qquad + \left(\mathbb{P}_h\max_{a'}\hat{f}_{h+1}^k(s_{h+1},a')(s_h,a_h)-\max_{a'}\hat{f}_{h+1}^k(s_{h+1},a')\right)^2\\
    &\qquad\qquad\qquad\qquad\qquad\qquad\qquad + 2\left(f(s_h,a_h)-\mathcal{T}_h\hat{f}_{h+1}^k(s_h,a_h)\right)\\
    &\qquad\qquad\qquad\qquad\qquad\qquad\qquad\qquad \times \left(\mathbb{P}_h\max_{a'}\hat{f}_{h+1}^k(s_{h+1},a')(s_h,a_h)-\max_{a'}\hat{f}_{h+1}^k(s_{h+1},a')\right)\bigg]\Bigg]\\[0.5em]
    &= \mathbb{E}_{(s_h,a_h)\sim \mu_h^k}\left[\left(f(s_h,a_h)-\mathcal{T}_h\hat{f}_{h+1}^k(s_h,a_h)\right)^2\right]\\
    &\quad + \mathbb{E}_{(s_h,a_h)\sim \mu_h^k}\left[\mathbb{E}_{s_{h+1}\sim p_h(\cdot\mid s_h,a_h)}\left[\left(\mathbb{P}_h\max_{a'}\hat{f}_{h+1}^k(s_{h+1},a')(s_h,a_h)-\max_{a'}\hat{f}_{h+1}^k(s_{h+1},a')\right)^2\right]\right]\\
    &= \mathbb{E}_{(s_h,a_h)\sim \mu_h^k}\left[\left(f(s_h,a_h)-\mathcal{T}_h\hat{f}_{h+1}^k(s_h,a_h)\right)^2\right] + \mathbb{E}_{(s_h,a_h)\sim \mu_h^k}\left[\text{Var}_{s_{h+1}\sim p_h(\cdot\mid s_h,a_h)}\left[\max_{a'}\hat{f}_{h+1}^k(s_{h+1},a')\right]\right]
\end{aligned}
\end{equation*}

The loss function can be decomposed into two components:
\begin{itemize}
\item \textbf{Bellman Residual Term}: $\mathbb{E}_{(s_h,a_h)\sim \mu_h^k}\left[\left(f(s_h,a_h)-\mathcal{T}_h\hat{f}_{h+1}^k(s_h,a_h)\right)^2\right]$--Measures the function approximation error.
\item \textbf{Environmental Stochasticity Term}: $\mathbb{E}_{(s_h,a_h)\sim \mu_h^k}\left[\text{Var}_{s_{h+1}\sim p_h(\cdot\mid s_h,a_h)}\left[\max_{a'}\hat{f}_{h+1}^k(s_{h+1},a')\right]\right]$--Reflects the intrinsic randomness of state transitions.
\end{itemize}

\subsection{Proof of Theorem~\ref{thm:subopt-f}}
First, we prove one lemma to help the proof.
\begin{lemma}
Consider an episodic MDP with horizon $H$. Let $\pi' = \{\pi'_h\}_{h=1}^H$ denote any policy, and let $\{\hat{Q}_h\}_{h=1}^H$ denote any set of estimated Q-functions. Let $\pi = \{\pi_h\}_{h=1}^H$ be the greedy policy induced by $\{\hat{Q}_h\}_{h=1}^H$.

For all $h \in [H]$, define:
\begin{itemize}
\item Value function: $\hat{V}_h(s) = \mathbb J_h^{\pi}\hat{Q}_h(s)$ where $\mathbb J_h^{\pi}f(s) = \mathbb E_{a \sim \pi_h(\cdot|s)}[f(s,a)]$
\item Bellman residual: $l_h(s,a) := \hat{Q}_h(s,a) - (\mathcal T_h\hat{Q}_{h+1})(s,a)$
\end{itemize}

Then, for all elements $x \in \mathcal{S}$, the following holds:

$$
\begin{aligned}
\hat{V}_1(x) - V_1^{\pi'}(x) =
&\sum_{h=1}^H \mathbb{E}_{\pi'} \left[(\mathbb J^{\pi}_h-\mathbb J^{\pi'}_h) \hat{Q}_h(s_h)\mid s_1 = x \right] \\
&+ \sum_{h=1}^H \mathbb{E}_{\pi'} \left[ \hat{Q}_h(s_h, a_h) - (\mathcal{T}_h \hat{Q}_{h+1})(s_h, a_h) \mid s_1 = x \right]
\end{aligned}
$$
\end{lemma}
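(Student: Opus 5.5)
The plan is a standard telescoping (performance-difference style) argument over the horizon. I start at step $h=1$ and write
\[
\hat V_1(x)-V_1^{\pi'}(x)=\big(\mathbb J_1^{\pi}\hat Q_1-\mathbb J_1^{\pi'}\hat Q_1\big)(x)+\big(\mathbb J_1^{\pi'}\hat Q_1-\mathbb J_1^{\pi'}Q_1^{\pi'}\big)(x),
\]
using $V_1^{\pi'}=\mathbb J_1^{\pi'}Q_1^{\pi'}$ from the policy Bellman equations. The first bracket is exactly the $h=1$ term of the first sum.

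For the second bracket I insert the Bellman target and write
\[
\hat Q_1-Q_1^{\pi'}=\big(\hat Q_1-\mathcal T_1\hat Q_2\big)+\big(\mathcal T_1\hat Q_2-Q_1^{\pi'}\big).
\]
The first piece is the residual $l_1$. Since the greedy $\pi$ attains the max, $\mathbb V_{\hat Q_2}=\mathbb J_2^{\pi}\hat Q_2=\hat V_2$, and therefore
\[
\mathcal T_1\hat Q_2-Q_1^{\pi'}=r_1+\mathbb P_1\hat V_2-r_1-\mathbb P_1V_2^{\pi'}=\mathbb P_1\big(\hat V_2-V_2^{\pi'}\big).
\]
Applying $\mathbb J_1^{\pi'}$ and then $\mathbb P_1$ is the same as taking expectation over one step of a $\pi'$-trajectory. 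So
\[
\hat V_1(x)-V_1^{\pi'}(x)=\text{(step-1 terms)}+\mathbb E_{\pi'}\big[\hat V_2(s_2)-V_2^{\pi'}(s_2)\mid s_1=x\big].
\]

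I then apply the same decomposition to $\hat V_2-V_2^{\pi'}$ and iterate, which is formally an induction on $h$ downward from $H$. The recursion stops at $\hat V_{H+1}=V_{H+1}^{\pi'}\equiv0$, which requires taking the convention $\hat Q_{H+1}\equiv0$. By the tower property, the nested conditional expectations collapse into a single $\mathbb E_{\pi'}[\cdot\mid s_1=x]$, and this yields both sums.

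The main obstacle is the bookkeeping at the greedy step. The identity $\mathcal T_h\hat Q_{h+1}=r_h+\mathbb P_h\hat V_{h+1}$ requires $\hat V_{h+1}=\max_a\hat Q_{h+1}$, so $\pi$ must be exactly greedy, with ties broken arbitrarily but supported on maximizers. The terminal convention must also be stated explicitly. Everything else is linearity of expectation.
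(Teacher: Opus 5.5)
Your proposal is correct and follows the paper's proof: it uses the same split $\mathbb J^{\pi}_h\hat Q_h-\mathbb J^{\pi'}_hQ^{\pi'}_h=(\mathbb J^{\pi}_h-\mathbb J^{\pi'}_h)\hat Q_h+\mathbb J^{\pi'}_h(\hat Q_h-Q^{\pi'}_h)$, inserts $\mathcal T_h\hat Q_{h+1}$ via the greedy identity $\mathcal T_h\hat Q_{h+1}=r_h+\mathbb P_h\hat V_{h+1}$, and unrolls the recursion down to $\hat V_{H+1}=V^{\pi'}_{H+1}\equiv 0$. Your explicit conditions, that $\pi$ be exactly greedy and that $\hat Q_{H+1}\equiv 0$, are the same assumptions the paper's argument uses implicitly.
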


\begin{proof}
$$
\begin{aligned}
\hat{V}_h(x) - V_h^{\pi'}(x) &= \mathbb{J}_{h}^{\pi} \hat Q_h(x) - \mathbb{J}_{h}^{\pi'} Q_h^{\pi'}(x) \\
&=\mathbb{J}^{\pi}_h\hat{Q}_h(x)-\mathbb{J}^{\pi'}_h\hat{Q}_h(x) + \mathbb{J}^{\pi'}_h\hat{Q}_h(x) -\mathbb{J}_{h}^{\pi'} Q_h^{\pi'}(x)\\
&=\mathbb{J}_{h}^{\pi'}(\hat Q_h - Q^{\pi'}_h)(x) + (\mathbb{J}_h^{\pi}-\mathbb{J}_h^{\pi'})\hat Q_h(x)\\
&= \mathbb{J}_{h}^{\pi'}(l_h + \mathcal{T_h}\hat Q_{h+1} -r_h-\mathbb{P}_hV_{h+1}^{\pi'})(x)+ (\mathbb{J}_h^{\pi}-\mathbb{J}_h^{\pi'})\hat Q_h(x)\\
&=\mathbb{J}_{h}^{\pi'}(l_h + \mathbb{P}_h\hat V_{h+1}-\mathbb{P}_hV_{h+1}^{\pi'})(x)+ (\mathbb{J}_h^{\pi}-\mathbb{J}_h^{\pi'})\hat Q_h(x)\\
&= \mathbb{J}_{h}^{\pi'}l_h(x)+ \mathbb{J}_{h}^{\pi'}\mathbb P_h(\hat V_{h+1} - V_{h+1}^{\pi'})(x) + (\mathbb{J}_h^{\pi}-\mathbb{J}_h^{\pi'})\hat Q_h(x)
\end{aligned}
$$

Using recurrence relations and the boundary condition $\hat{V}_{H+1} = V_{H+1}^{\pi'} \equiv 0$, we can derive that
\begin{equation*}
\begin{aligned}
\hat{V}_1(x) - V_1^{\pi'}(x) &= \sum_{h=1}^{H} \left(\prod_{k=1}^{h-1}\mathbb{J}_k^{\pi'}\mathbb{P}_k\right) \mathbb{J}_h^{\pi'} l_h(x) \\
&\quad + \sum_{h=1}^{H} \left(\prod_{k=1}^{h-1}\mathbb{J}_k^{\pi'}\mathbb{P}_k\right) (\mathbb{J}_h^{\pi}-\mathbb{J}_h^{\pi'}) \hat Q_h(x)
\end{aligned}
\end{equation*}
Which complete our proof.
\end{proof}

let $\pi'$ be the optimal policy $\pi^*$,$\pi$ be the greedy policy induced by $\{\hat Q_h\}_{h=1}^{H}$, and $\{\hat V_h\}_{h=1}^H$ be the corresponding value function.Then, the suboptimal bound is given by:
\begin{equation*}
    \begin{aligned}
        &V_1^*(x) - V^{\pi}_1(x) = V_1^*(x) - \hat V_1(x) + \hat V_1(x) - V^{\pi}_1(x)\\
        & = \underbrace{\sum_{h=1}^H\mathbb{E}_{\pi^*}\left[\mathcal{T}_{h}\hat{Q}_{h+1}(s_h,a_h) - \hat{Q}_{h}(s_h,a_h)\mid s_1=x\right ] }_{\textcircled{1}}+\underbrace{ \sum_{h=1}^H\mathbb{E}_{\pi}\left[ \hat{Q}_{h}(s_h,a_h)-\mathcal{T}_{h}\hat{Q}_{h+1}(s_h,a_h)\mid s_1=x\right]}_{\textcircled{2}}\\
        &+\underbrace{\sum_{h=1}^H \mathbb{E}_{\pi^*} \left[(\mathbb{J}_{h}^{\pi^*} -\mathbb{J}_{h}^{\pi})\hat{Q}_h \mid s_1 = x \right]}_{\textcircled{3}}\\
    \end{aligned}
\end{equation*}
Since $\pi$ is the greedy policy with respect to $\hat Q$, we have  $\textcircled{3} \leq 0$, and for $\textcircled{1}$ we can drive that:
\begin{equation*}
\begin{aligned}
    \textcircled{1} &\leq         \sum_{h=1}^H\mathbb{E}_{\pi^*}\left[|\mathcal{T}_{h}\hat{Q}_{h+1}(s_h,a_h) - \hat{Q}_{h}(s_h,a_h)|\big| s_1=x\right ] \\
        & \leq\sum_{h=1}^H\sqrt{\mathbb{E}_{\pi^*}\left[\left(\mathcal{T}_{h}\hat{Q}_{h+1}(s_h,a_h) - \hat{Q}_{h}(s_h,a_h)\right)^2\mid s_1=x\right ]}\\
        & \leq \sqrt{H}\sqrt{\sum_{h=1}^H \mathbb{E}_{\pi^*}\left[\left(\mathcal{T}_{h}\hat{Q}_{h+1}(s_h,a_h) - \hat{Q}_{h}(s_h,a_h)\right)^2\mid s_1=x\right ]} 
\end{aligned}
\end{equation*}
The last step makes use of the Cauchy-Schwarz inequality, and the second step employs Jensen's inequality.

Similarly, we can derive that $\textcircled{2}$ also satisfies:
\begin{equation*}
    \textcircled{2} \leq  \sqrt{H}\sqrt{\sum_{h=1}^H \mathbb{E}_{\pi}\left[\left(\mathcal{T}_{h}\hat{Q}_{h+1}(s_h,a_h) - \hat{Q}_{h}(s_h,a_h)\right)^2\mid s_1=x\right ]}
\end{equation*}

By combining the above results, we complete the proof of Theorem~\ref{thm:subopt-f}.

\subsection{Proof of theorem~\ref{thm:initial_gradient}}
In this section, we prove Theorem~\ref{thm:initial_gradient}.

\begin{proof}
\begin{equation*}
    \begin{aligned}
         \nabla \mathbb E_{\mu_{h}^k}\left[(f-\mathcal{T}_h\hat f_{h+1}^k)^2 \right] \bigg|_{\hat f_{h}^{k-1}} &=\mathbb{E}_{\mu_{h}^k}\left[ 2\left(f - \mathcal T_h\hat f_{h+1}^{k}\right)\nabla f\bigg|_{\hat f_{h}^{k-1}}\right] \\
         &=\mathbb E_{\mu_{h}^k}\left[2\left(f-\mathcal{T}_h\hat f_{h}^{k-1}+\mathcal{T}_h\hat f_{h}^{k-1}-\mathcal T_h\hat f_{h+1}^{k} \right) \nabla f\bigg|_{\hat f_{h}^{k-1}}\right]\\
         &=\mathbb E_{\mu_{h}^k}\left[2\left(\mathcal{T}_h\hat f_{h}^{k-1}-\mathcal T_h\hat f_{h+1}^{k} \right) \nabla f\bigg|_{\hat f_{h}^{k-1}}\right] + \underbrace{\mathbb{E}_{\mu_{h}^k}\left[ 2\left(f - \mathcal T_h\hat f_{h+1}^{k-1}\right)\nabla
         f\bigg|_{\hat{f}_{h}^{k-1}}\right]}_{\textcircled{1}}
    \end{aligned}
\end{equation*}
Recall the define of $\hat{f}_{h}^{k-1}$ and the Proposition~\ref{pro:rec} of $\mu_h^k$,
\begin{equation*}
\begin{aligned}
    \textcircled{1} &= \nabla \mathbb{E}_{\mu_{h}^k}\left[\left(f - \mathcal T_h\hat f_{h+1}^{k-1}\right)^2\right]\bigg|_{\hat f_h^{k-1}}\\
    & = \underbrace{\frac{k-1}{k}\nabla \mathbb{E}_{\mu_{h}^{k-1}}\left[\left(f - \mathcal T_h\hat f_{h+1}^{k-1}\right)^2\right]\bigg|_{\hat f_h^{k-1}}}_{=0} + \frac{1}{k}\nabla \mathbb{E}_{\hat d_{h}^{\pi^k}}\left[\left(f - \mathcal T_h\hat f_{h+1}^{k-1}\right)^2\right]\bigg|_{\hat f_h^{k-1}}
\end{aligned}
\end{equation*}
By combining the above results, we complete the proof of Theorem~\ref{thm:initial_gradient}
\end{proof}

\subsection{Entropy Regularized MDP}\label{App:entropy mdp}
In this section, we present the theoretical analysis and error bounds for the Entropy-Regularized Markov Decision Process (MDP). Specifically, the state value function with an entropy reward is defined as follows:
\begin{equation*}
\begin{aligned}
    &V^{\text{soft},\pi}_{h}(x) = \mathbb{E}\left[\sum_{t=h}^{H}\left(r_t(x_t,a_t)+\alpha\log \pi_t(a_t\mid x_t)\right)\bigg|x_h=x, a_t\sim\pi_t(\cdot|x_t)\right],\qquad \forall x \in \sS,h\in [H],\\
    &Q^{\text{soft},\pi}_{h}(x,a) = r_h(x,a) + \mathbb P_{h}V^{\text{soft},\pi}_{h+1}(x,a),\qquad \qquad \forall (x,a) \in \sS \times \sA,h\in [H]
\end{aligned}
\end{equation*}
with terminal condition $V^{\text{soft},\pi}_{H+1}\equiv0$.
Then the policy Bellman equations compactly read
\begin{equation*}
    \begin{aligned}
        &Q^{\text{soft},\pi}_{h}(x,a) = r_h(x,a) + \mathbb P_{h}V^{\text{soft},\pi}_{h+1}(x,a)\\
        &V_h^{\text{soft},\pi}(x) = \mathbb J^{\pi}_{h}(Q^{\text{soft},\pi}_{h}-\alpha \log \pi_h)(x), \qquad V^{\text{soft}}_{H+1}\equiv0
    \end{aligned}
\end{equation*}
For any function $g :\sS \times \sA \rightarrow \mathbb{R}$, define the soft value operator $\mathbb V^{\text{soft}}$ and the step-h soft optimality Bellman operator $\mathcal T_{h}^{\text{soft}}$ by
\begin{equation*}  
    \begin{aligned} 
        &\mathbb{V}^{\text{soft}}_g(s) :=\max_{\pi}\mathbb{E}_{a \sim \pi} \left[ g(s,a) - \alpha \log \pi(a|s) \right], \\
        &\left( \mathcal{T}^{\text{soft}}_h f \right)(s,a) := r(s,a) + \left( \mathbb{P}_h \mathbb{V}^{\text{soft}}_f \right)(s,a)
    \end{aligned}
\end{equation*}

We define the Boltzmann policy $\pi^{\text{soft}}_{f}$ induced by the function $f:\sS\times \sA\rightarrow \mathbb R$, which is given by:
\begin{equation*}\label{eq:boltzmann}
\pi^{\text{soft}}_{f} = \arg \max_{\pi} \mathbb{E}_{a\sim \pi}[f(s,a) - \alpha \log \pi(a|s)].
\end{equation*}

Similarly, we have the following lemma.
\begin{lemma}
Consider an entropy-regularized episodic MDP with horizon H. Let $\pi'=\{\pi_h'\}_{h=1}^{H}$ denote any policy, and let $\hat{Q} = \{\hat{Q}_h\}_{h=1}^H$ denote any set of estimated soft Q-functions. Let $\pi=\{\pi_h\}_{h=1}^{H}$ be the Boltzmann policy induced by $\hat{Q} = \{\hat{Q}_h\}_{h=1}^H$. For all $h \in [H]$, define:
\begin{itemize}
\item Value function: $\hat{V}_h(s) = \mathbb J_h^{\pi}(\hat{Q}_h-\alpha \log \pi_h)(s)$ where $\mathbb J_h^{\pi}f(s) = \mathbb E_{a \sim \pi_h(\cdot|s)}[f(s,a)]$
\item Bellman residual: $l_h(s,a) := \hat{Q}_h(s,a) - (\mathcal T_h^{\text{soft}}\hat{Q}_{h+1})(s,a)$
\item Entropy: $\mathcal{H}(\pi(\cdot|s)) = -\mathbb E_{a\sim\pi(\cdot|s)}[\log \pi(\cdot|s)]$
\end{itemize}
Then for all $x \in \sS$, we have
\begin{equation*}
\begin{aligned}
\hat{V}_1(x) - V_1^{\text{soft},\pi'}(x) =
&\sum_{h=1}^H \mathbb{E}_{\pi'} \left[(\mathbb J^{\pi}_h-\mathbb J^{\pi'}_h) \hat{Q}_h(s_h)+ \alpha(\mathcal H(\pi_h(\cdot|s_h))- \mathcal H(\pi'_h(\cdot|s_h))) \mid s_1 = x \right] \\
&+ \sum_{h=1}^H \mathbb{E}_{\pi'} \left[ \hat{Q}_h(s_h, a_h) - (\mathcal{T}_h^{\text{soft}} \hat{Q}_{h+1})(s_h, a_h) \mid s_1 = x \right].
\end{aligned}
\end{equation*}
\end{lemma}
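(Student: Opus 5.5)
I will mirror the proof of the unregularized lemma and derive a one-step recursion for $\hat V_h - V_h^{\text{soft},\pi'}$, then unroll it using the terminal condition. Throughout, write $\mathbb J_h^{\pi}(\hat Q_h - \alpha\log\pi_h)(x) = \mathbb J_h^{\pi}\hat Q_h(x) + \alpha\mathcal H(\pi_h(\cdot|x))$. Similarly write $V_h^{\text{soft},\pi'}(x) = \mathbb J_h^{\pi'}Q_h^{\text{soft},\pi'}(x) + \alpha\mathcal H(\pi'_h(\cdot|x))$.

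Here I read the soft value with the standard entropy-bonus sign, that is $-\alpha\log\pi$, consistent with $\mathbb V^{\text{soft}}$ and with the policy Bellman equation $V_h^{\text{soft},\pi}=\mathbb J_h^\pi(Q_h^{\text{soft},\pi}-\alpha\log\pi_h)$. The paper's defining expectation for $V^{\text{soft},\pi}_h$ writes $+\alpha\log\pi_t$, which I take to be a typo.

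\textbf{One-step recursion.} Add and subtract $\mathbb J_h^{\pi'}\hat Q_h(x)$ to get
\begin{equation*}
\hat V_h(x) - V_h^{\text{soft},\pi'}(x) = (\mathbb J_h^{\pi}-\mathbb J_h^{\pi'})\hat Q_h(x) + \alpha\big(\mathcal H(\pi_h(\cdot|x))-\mathcal H(\pi'_h(\cdot|x))\big) + \mathbb J_h^{\pi'}(\hat Q_h - Q_h^{\text{soft},\pi'})(x).
\end{equation*}
For the last term, substitute $\hat Q_h = l_h + \mathcal T_h^{\text{soft}}\hat Q_{h+1} = l_h + r_h + \mathbb P_h\mathbb V^{\text{soft}}_{\hat Q_{h+1}}$ and $Q_h^{\text{soft},\pi'} = r_h + \mathbb P_h V_{h+1}^{\text{soft},\pi'}$. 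The rewards cancel.

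The key observation is that $\mathbb V^{\text{soft}}_{\hat Q_{h+1}}(s) = \mathbb J_{h+1}^{\pi}(\hat Q_{h+1}-\alpha\log\pi_{h+1})(s) = \hat V_{h+1}(s)$. This holds precisely because $\pi$ is the Boltzmann policy induced by $\hat Q$, so it attains the max in the definition of $\mathbb V^{\text{soft}}$. This gives
\begin{equation*}
\mathbb J_h^{\pi'}(\hat Q_h - Q_h^{\text{soft},\pi'}) = \mathbb J_h^{\pi'} l_h + \mathbb J_h^{\pi'}\mathbb P_h(\hat V_{h+1}-V_{h+1}^{\text{soft},\pi'}).
\end{equation*}

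\textbf{Unrolling.} Iterate the recursion from $h=1$ to $H$ using $\hat V_{H+1}=V_{H+1}^{\text{soft},\pi'}\equiv 0$. This yields sums of the operator products $\prod_{j<h}\mathbb J_j^{\pi'}\mathbb P_j$ applied to the per-step terms. Each such product applied to a function of $(s_h,a_h)$ or of $s_h$ is exactly $\mathbb E_{\pi'}[\,\cdot\mid s_1=x]$. This gives the two stated sums: the policy-mismatch-plus-entropy sum and the Bellman-residual sum.

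\textbf{Main obstacle.} The only nontrivial step is the identification $\mathbb P_h\mathbb V^{\text{soft}}_{\hat Q_{h+1}} = \mathbb P_h\hat V_{h+1}$. This requires that the Boltzmann policy actually attains the maximum in $\mathbb V^{\text{soft}}$. Its maximizer is $\pi\propto\exp(\hat Q/\alpha)$ by the Gibbs variational principle, so that step is easy. The remaining care is the sign bookkeeping for the entropy terms, which must use the sign convention fixed above.
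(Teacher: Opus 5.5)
Your proposal is correct and follows the paper's proof essentially step for step. It uses the same add-and-subtract of $\mathbb J_h^{\pi'}\hat Q_h$, the same substitution $\mathcal T_h^{\text{soft}}\hat Q_{h+1}=r_h+\mathbb P_h\hat V_{h+1}$, the same reading of the entropy sign via the policy Bellman equation, and the same unrolling from $\hat V_{H+1}=V_{H+1}^{\text{soft},\pi'}\equiv 0$; you justify the substitution by the Boltzmann policy attaining the maximum in $\mathbb V^{\text{soft}}$, which the paper uses silently. One caveat, shared with the paper: the $h=H$ step tacitly needs the convention $\mathcal T_H^{\text{soft}}\hat Q_{H+1}=r_H$ (terminal soft value zero), because a literal $\hat Q_{H+1}\equiv 0$ would give $\mathbb V^{\text{soft}}_{0}=\alpha\log|\sA|\neq 0$ for a finite action set.
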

\begin{proof}
    \begin{equation*}
        \begin{aligned}
            \hat{V}_h(x) - V_h^{\text{soft},\pi'}(x) &= \mathbb J^{\pi}_h(\hat{Q}_{h}-\alpha \log\pi)(x) - \mathbb J^{\pi'}_h(Q_{h}^{\text{soft},\pi'}-\alpha \log\pi')(x)\\
            &=\mathbb J^{\pi}_h\hat{Q}_{h}(x) - \mathbb J^{\pi'}_hQ_{h}^{\text{soft},\pi'}(x) + \alpha (\mathcal H(\pi_h(\cdot|x)) - \mathcal H(\pi'_h(\cdot|x)))\\
            &=\mathbb J^{\pi}_h\hat{Q}_{h}(x) - \mathbb J^{\pi'}_h\hat Q_{h}(x) +  \mathbb J^{\pi'}_h\hat Q_{h}(x) - \mathbb J^{\pi'}_hQ_{h}^{\text{soft},\pi'}(x) + \alpha (\mathcal H(\pi_h(\cdot|x)) - \mathcal H(\pi'_h(\cdot|x)))\\
            &=\mathbb J^{\pi'}_h(\hat{Q}_{h} - Q_{h}^{\text{soft},\pi'})(x)+\mathbb J^{\pi}_h\hat{Q}_{h}(x) - \mathbb J^{\pi'}_h\hat{Q}_{h}(x) + \alpha (\mathcal H(\pi_h(\cdot|x)) - \mathcal H(\pi'_h(\cdot|x)))\\
            & = \mathbb J^{\pi'}_h(l_h + \mathcal T_h^{\text{soft}}\hat{Q}_{h+1}-r-\mathbb P_{h}V^{\text{soft},\pi'}_{h+1})(x)  + \mathbb J^{\pi}_h\hat{Q}_{h}(x) - \mathbb J^{\pi'}_h\hat{Q}_{h}(x) + \alpha (\mathcal H(\pi_h(\cdot|x)) - \mathcal H(\pi'_h(\cdot|x)))\\
            & = \mathbb J_{h}^{\pi'}(l_h+\mathbb P_{h}(\hat V_{h+1}- V^{\text{soft},\pi'}_{h+1}))(x) + \mathbb J^{\pi}_h\hat{Q}_{h}(x) - \mathbb J^{\pi'}_h\hat{Q}_{h}(x) + \alpha (\mathcal H(\pi_h(\cdot|x)) - \mathcal H(\pi'_h(\cdot|x)))\\
            & = \mathbb J_{h}^{\pi'}l_h(x) + \mathbb J_{h}^{\pi'}\mathbb P_{h}(\hat V_{h+1}- V^{\text{soft},\pi'}_{h+1})(x) +\mathbb J^{\pi}_h\hat{Q}_{h}(x) - \mathbb J^{\pi'}_h\hat{Q}_{h}(x) + \alpha (\mathcal H(\pi_h(\cdot|x)) - \mathcal H(\pi'_h(\cdot|x))).
        \end{aligned}
    \end{equation*}
Using recurrence relations and the boundary condition $\hat{V}_{H+1} = V_{H+1}^{\text{soft},\pi'} \equiv 0$, we can derive that
\begin{equation*}
\begin{aligned}
\hat{V}_1(x) - V_1^{\text{soft},\pi'}(x) &= \sum_{h=1}^{H} \left(\prod_{k=1}^{h-1}\mathbb{J}_k^{\pi'}\mathbb{P}_k\right) \mathbb{J}_h^{\pi'} l_h(x) \\
&\quad + \sum_{h=1}^{H} \left(\prod_{k=1}^{h-1}\mathbb{J}_k^{\pi'}\mathbb{P}_k\right)\left( (\mathbb{J}_h^{\pi}-\mathbb{J}_h^{\pi'}) \hat Q_h(x) + \alpha \left(\mathcal H(\pi_h(\cdot|x)) - \mathcal H(\pi'_h(\cdot|x))\right) \right).
\end{aligned}
\end{equation*}
This completes the proof.
\end{proof}

\begin{theorem}[Suboptimality bound for entropy-regularized MDP via squared Bellman residuals]
Fix horizon H. Let $\{\hat Q_h\}_{h=1}^{H}$ be the soft value estimates. Define $\{\pi_{\hat Q,h}\}_{h=1}^H$ as the Boltzmann policy induced by $\{\hat Q_h\}_{h=1}^{H}$. Let $\{\pi^*_h\}_{h=1}^H$ be the optimal policy.

For functions $f,g:\sS\times \sA\rightarrow \mathbb R$, define the step-h squared Bellman residual:
\begin{equation*}
    \Delta_h (f,g)(s,a) = \left(f(s,a)-\mathcal{T}_h^{\text{soft}}g(s,a)\right)^2.
\end{equation*}
Then we have 
    \begin{align*}\label{eq:bound}
V_1^{\text{soft},\pi^*}(x) - V_1^{\text{soft},\pi_{\hat{Q}}}(x)
&\leq \sqrt{H} \left(
    \sqrt{ \mathbb{E}_{\pi^*} \!\left[ \sum_{h=1}^H \Delta_h(\hat{Q}_h, \hat{Q}_{h+1})(s_h, a_h) \,\bigg|\, s_1 = x \right] }
    + \right. \nonumber \\
&\qquad \left.
    \sqrt{ \mathbb{E}_{\pi_{\hat{Q}}} \!\left[ \sum_{h=1}^H \Delta_h(\hat{Q}_h, \hat{Q}_{h+1})(s_h, a_h) \,\bigg|\, s_1 = x \right] }
\right).
\end{align*}
\end{theorem}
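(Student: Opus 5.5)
We mirror the proof of Theorem~\ref{thm:subopt-f}, replacing the performance-difference lemma with its entropy-regularized version just proved. Write $\pi:=\pi_{\hat Q}$ for the Boltzmann policy, $\hat V_h=\mathbb J_h^{\pi}(\hat Q_h-\alpha\log\pi_h)$, and $l_h=\hat Q_h-\mathcal T_h^{\text{soft}}\hat Q_{h+1}$. Split
\[
V_1^{\text{soft},\pi^*}(x)-V_1^{\text{soft},\pi}(x)=\big(V_1^{\text{soft},\pi^*}(x)-\hat V_1(x)\big)+\big(\hat V_1(x)-V_1^{\text{soft},\pi}(x)\big).
\]

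\textbf{First term.} Apply the lemma with $\pi'=\pi^*$ and negate. This gives
\[
\sum_h\mathbb E_{\pi^*}[-l_h(s_h,a_h)\mid s_1=x]
\]
plus the policy-mismatch sum
\[
\sum_h\mathbb E_{\pi^*}\big[(\mathbb J_h^{\pi^*}-\mathbb J_h^{\pi})\hat Q_h(s_h)+\alpha(\mathcal H(\pi^*_h(\cdot|s_h))-\mathcal H(\pi_h(\cdot|s_h)))\big].
\]
The summand of the mismatch sum is exactly
\[
\mathbb E_{a\sim\pi^*_h}[\hat Q_h-\alpha\log\pi^*_h]-\mathbb E_{a\sim\pi_h}[\hat Q_h-\alpha\log\pi_h].
\]
Since $\pi_h$ is by definition the maximizer of $\mathbb E_{a\sim\pi}[\hat Q_h(s,a)-\alpha\log\pi(a|s)]$, each summand is $\le 0$ pointwise in $s_h$. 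This mirrors the step where $\textcircled{3}\le0$ in the unregularized case.

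\textbf{Second term.} Apply the lemma with $\pi'=\pi$. The mismatch term vanishes identically, since $\mathbb J^\pi-\mathbb J^\pi=0$ and the entropy difference is zero. We are left with $\sum_h\mathbb E_{\pi}[l_h(s_h,a_h)\mid s_1=x]$.

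\textbf{Bounding the residual sums.} Each of the two residual sums is bounded as in Theorem~\ref{thm:subopt-f}. First use $|\cdot|$, then Jensen on each step to get $\mathbb E|l_h|\le\sqrt{\mathbb E\, l_h^2}$. Finally, Cauchy--Schwarz over the $H$ terms gives $\sum_h\sqrt{a_h}\le\sqrt H\sqrt{\sum_h a_h}$. Noting that $l_h^2=\Delta_h(\hat Q_h,\hat Q_{h+1})$ and that expectations commute with finite sums, we obtain the two square-root terms in the claim.

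\textbf{Main subtlety.} The only point needing care is the sign argument in the first term. We must check that the lemma's value $\hat V_h$, defined with the $-\alpha\log\pi$ convention, matches the soft Bellman operator $\mathbb V^{\text{soft}}_{\hat Q_{h+1}}=\hat V_{h+1}$. This identity is what makes $\mathcal T_h^{\text{soft}}\hat Q_{h+1}-r=\mathbb P_h\hat V_{h+1}$ hold, and that equality is used inside the lemma. We also need the entropy sign conventions (note $V^{\text{soft}}$ is written with $+\alpha\log\pi$ in its trajectory definition but $-\alpha\log\pi$ in the Bellman form) to be read consistently with the Bellman form. Under that reading, the Boltzmann optimality inequality gives the required nonpositivity, and the rest is routine.
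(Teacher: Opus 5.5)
Your proposal is correct and follows the paper's proof essentially step for step. The common structure is: the split through $\hat V_1$; the entropy-regularized lemma applied with $\pi'=\pi^*$, giving the negated residual sum along $\pi^*$ plus the policy-mismatch term, and with $\pi'=\pi_{\hat Q}$, where the mismatch vanishes; nonpositivity of the mismatch term from the Boltzmann maximization property; and then Jensen and Cauchy--Schwarz exactly as in Theorem~\ref{thm:subopt-f}. Your caveat about the $+\alpha\log\pi$ sign in the trajectory definition of $V^{\text{soft},\pi}$ is justified, since it is a typo in the paper, and the Bellman-form reading you adopt is the one the paper's lemma actually relies on.
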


\begin{proof}
    \begin{align*}
        &V_1^{\text{soft},\pi^*}(x) - V_1^{\text{soft},\pi_{\hat{Q}}}(x) = V_1^{\text{soft},\pi^*}(x) - \hat{V}_1(x) + \hat{V}_1(x) - V_1^{\text{soft},\pi_{\hat{Q}}}(x)\\
        &= \underbrace{\sum_{h=1}^H \mathbb{E}_{\pi^*}\left[\mathcal{T}_{h}^{\text{soft}}\hat{Q}_{h+1}(s_h,a_h) - \hat{Q}_{h}(s_h,a_h) \mid s_1 = x\right]}_{\textcircled{1}} + \underbrace{\sum_{h=1}^H \mathbb{E}_{\pi_{\hat{Q}}}\left[\hat{Q}_{h}(s_h,a_h) - \mathcal{T}_{h}^{\text{soft}}\hat{Q}_{h+1}(s_h,a_h) \mid s_1 = x\right]}_{\textcircled{2}}\\
        &\quad + \underbrace{\sum_{h=1}^H \mathbb{E}_{\pi^*}\left[ \left(\mathbb{J}_{h}^{\pi^*} - \mathbb{J}_{h}^{\pi_{\hat{Q}}}\right)\hat{Q}_h + \alpha \left( \mathcal{H}(\pi_h^*(\cdot|s_h)) - \mathcal{H}(\pi_{\hat{Q},h}(\cdot|s_h)) \right)  \mid s_1 = x\right]}_{\textcircled{3}}.
    \end{align*}
    Since $\pi_{\hat{Q}}$ is the Boltzmann policy induced by $\{\hat{Q}_h\}_{h=1}^H$—a property that satisfies Equation~\ref{eq:boltzmann}—we can deduce that $\textcircled{3} \leq 0$. The remainder of the proof follows the same reasoning as that of Theorem~\ref{thm:subopt-f}.
\end{proof}

\section{Related Preliminaries}\label{App:exp}
In this section, we present the detailed parameters and settings of the experiments.

\subsection{Algorithm}

\paragraph{TD3} In our paper, we utilize TD3 as a representative of deterministic policies. TD3, an Actor - Critic algorithm, is widely adopted as a baseline in various decision - making scenarios and has given rise to a multitude of variants, which have established new state - of - the - art (SOTA) results on numerous occasions. Different from the traditional policy gradient method DDPG~\citep{ddpg}, TD3 makes use of two heterogeneous critic networks, denoted as 
$Q_{\theta_{1,2}}$, to alleviate the problem of over - optimization in Q - learning. Thus, the loss function of the critics is
\begin{equation*}
\mathcal L_Q(\theta_i) = \mathbb{E}_{a,s,r,s'} \left[ (y - Q_{\theta_i}(s,a))^2 \right] \text{ for } \forall i \in \{1,2\}.
\end{equation*}
Where $y = r + \gamma \min_{j=1,2} Q_{\tilde{\theta}_j}(s', \pi_{\phi}(s'))$, $\tilde{\theta}$ denotes the target network parameters. The actor is updated according to the Deterministic Policy Gradient:
\begin{equation*}
\nabla_{\phi} J(\phi) = \mathbb{E}_s \left[ \nabla_{a} Q_{\theta_1}(s, \pi_{\phi}(s)) \nabla_{\phi} \pi_{\phi}(s) \right].
\end{equation*}

\paragraph{SAC} We select SAC as a representative of stochastic policies and combine it with \texttt{SWD} in the main experiment. SAC is devised to maximize expected cumulative rewards while also boosting exploration via the maximum entropy principle. The actor strives to learn a stochastic policy that outputs a distribution over actions, where the critics estimate the value of taking a specific action in a given state. This enables a more diverse range of actions, facilitating better exploration of the action space. In traditional reinforcement learning, the objective is to maximize the expected return. However, SAC introduces an additional term that maximizes the entropy of the policy, encouraging exploration. The objective function for optimizing the policy is given by:
\begin{equation*}
J(\pi) = \mathbb{E}_{s_t, a_t} \left[ r(s_t, a_t) + \alpha \mathcal H(\pi(\cdot \vert s_t)) \right]
\end{equation*}
where \( H(\pi(\cdot \vert s_t)) \) denotes the entropy of the policy, and \( \alpha \) is a temperature parameter that balances the trade-off between the immediate reward and the policy entropy. The training procedure of SAC involves two main updates: updating the value function and updating the policy.
The value function is updated by minimizing the following loss:
\begin{equation*}
\mathcal L(Q) = \mathbb{E}_{(s,a,r,s') \sim D} \left[ \frac{1}{2} \left( Q(s,a) - \left( r + \gamma V(s') \right) \right)^2 \right]
\end{equation*}
where \( \gamma \) is the discount factor, dictating the weight assigned to future rewards. \( V(s') \) denotes the value function of the next state, which is typically approximated using a separate neural network. The policy is updated by maximizing the following objective:
\begin{equation*}
J(\pi) = \mathbb{E}_{s_t \sim D} \left[ \mathbb{E}_{a_t \sim \pi} \left[ Q(s_t, a_t) - \alpha \log \pi(a_t \vert s_t) \right] \right]
\end{equation*}
Here, \( -\alpha \log \pi(a_t \vert s_t) \) represents the entropy of the policy, which serves to promote exploration.

\paragraph{SimBa} We adopt SimBa~\citep{lee2025simba} as our SAC network architecture, which is specifically designed for reinforcement learning (RL) scenarios. Distinctive for embedding a "simplicity bias," SimBa not only mitigates overfitting but also enables parameter scaling in deep RL—addressing two key challenges in large-scale RL model training. Concretely, SimBa comprises three core components: (i) an observation normalization layer that standardizes input data using running statistics, ensuring stable data distribution for subsequent layers; (ii) a residual feedforward block that establishes a direct linear pathway from input to output, facilitating gradient propagation and preserving low-complexity feature representations; and (iii) a layer normalization module that regulates feature magnitudes, preventing excessive value drift during training.

\paragraph{Prioritized Experience Replay}
We adopt Prioritized Experience Replay (PER)~\citep{PER} to bias sampling toward transitions that are expected to yield larger learning progress. Instead of drawing mini-batches uniformly from the replay buffer, PER assigns each transition \(i\) a priority \(p_i\) based on its temporal-difference (TD) error and samples proportionally:
\[
\delta_i = \bigl| r_i + \gamma\, \hat{V}(s'_i) - Q(s_i,a_i) \bigr|, 
\qquad
p_i = \bigl(\delta_i + \varepsilon\bigr)^{\alpha},
\qquad
P(i) = \frac{p_i}{\sum_j p_j},
\]
where \(\varepsilon > 0\) avoids zero priorities, \(\alpha \in [0,1]\) controls the degree of prioritization (\(\alpha=0\) recovers uniform sampling). To correct the sampling bias introduced by \(P(i)\), PER uses importance-sampling (IS) weights
\[
w_i \;=\; \Bigl(\tfrac{1}{N\,P(i)}\Bigr)^{\beta}, 
\qquad
\tilde{w}_i \;=\; \frac{w_i}{\max_j w_j},
\]
where \(N\) is the buffer size and \(\beta \in [0,1]\) is annealed toward 1 during training.

\paragraph{Gradient Magnitude-based neuron activity assessment}
We employ \texttt{GraMa}~\citep{grama}—a gradient-magnitude-driven, architecture-agnostic metric— as our plasticity metric. Specifically, for each individual neuron (or predefined parameter group), \texttt{GraMa} calculates the magnitude of gradients computed over mini-batches and maintains a normalized score for each layer; crucially, higher scores correspond to greater neural plasticity.  

Given an input distribution D, let $|\nabla h_{\ell}^{i} L(x)|$ denote the gradient magnitude of neuron i in layer $\ell$ under an input x $\in$ D, and let $H_{\ell}$ represent the number of neurons in layer $\ell$. The learning capacity score for each individual neuron by leveraging the normalized average of its corresponding layer $\ell$, as formulated below:

$$G_{\ell}^{i} = \frac{\mathbb{E}_{x \in D} \left[ |\nabla h_{\ell}^{i} L(x)| \right]}{\frac{1}{H_{\ell}} \sum_{k \in \mathcal{H}_{\ell}} \mathbb{E}_{x \in D} \left[ |\nabla h_{\ell}^{k} L(x)| \right]} $$

GraMa (Gradient Magnitude-based neuron activity assessment) identifies neuron i in layer $\ell$ as inactive if $G_{\ell}^{i} \leq \tau$, where $\tau$ denotes the predefined inactivity threshold.

\myadded{\paragraph{Double DQN} We adopt Double Deep Q-Network (DDQN)~\citep{DDQN} as our reinforcement learning (RL) baseline, specifically chosen for both pixel-based input scenarios and tasks with long time horizons. By decoupling action selection from target value estimation, DDQN effectively mitigates the overestimation bias inherent in standard DQN, ensuring more stable and accurate value learning. Concretely, while standard DQN maximizes the estimated value using the same network, DDQN utilizes the online network with parameters $\theta$ to select the optimal action and the target network with parameters $\theta^{-}$ to evaluate that action. }

\myadded{
\subsection{Relationship and Complementarity with Existing Work}\label{app:relationship}
}
\myadded{
Prior research on plasticity loss has predominantly centered on \textbf{NTK-based methods}, which we classify into three core categories based on their underlying mechanisms:}

\myadded{
(1) \textbf{Reset-based methods (leveraging random initialization properties)}:  
These approaches capitalize on a key characteristic of over-parameterized neural networks: randomly initialized networks exhibit full-rank Neural Tangent Kernel (NTK) matrices. To mitgate plasticity loss, they periodically reset network parameters to refresh the NTK and restore the model’s capacity for learning. Representative examples include:
\begin{itemize}
    \item \textbf{ReDo}~\citep{sokar2023dormant}: Employs activation-driven reinitialization to reset critical network components
    \item \textbf{ReGraMa}~\citep{grama}: Utilizes gradient information to guide parameter reinitialization, targeting degraded NTK structures
    \item \textbf{S\&P}~\citep{s&p}: Introduces controlled noise into network parameters to reactivate dormant plasticity
    \item \textbf{Plasticity Injection}~\citep{plasticityinjection}: Under the premise of keeping the output unchanged, thoroughly refresh the final linear layer.
\end{itemize}
}
\myadded{
(2) \textbf{Implicit NTK regularization methods}:  
This category focuses on detecting early signs of NTK rank deficiency—such as unconstrained parameter norm growth—and implementing targeted constraints to avert rank collapse. Key strategies within this framework are:
\begin{itemize}
    \item \textbf{Reducing Churn}~\citep{Tang2024ReduceChurn}: Suppresses off-diagonal elements of the NTK matrix to minimize gradient correlations, while dynamically adjusting step sizes in reinforcement learning (RL) settings to preserve NTK integrity
    \item \textbf{Auxiliary-loss-based representation stabilization}~\citep{moalla2024no}: Integrates additional loss terms to stabilize feature representations, indirectly safeguarding NTK rank
\end{itemize}
}
\myadded{
(3) \textbf{Architecture-based methods}:  
These approaches address plasticity loss at the network design level, either by constructing inherently larger and more robust architectures or by dynamically expanding parameter counts during training to prevent NTK rank collapse. Notable instances include:
\begin{itemize}
    \item \textbf{Hyperspherical Normalization for Scalable Deep RL}~\citep{simbav2}: Designs architectures with built-in stability, leveraging hyperspherical normalization to maintain NTK full-rank properties
    \item \textbf{Forget-and-Grow Strategy for Deep RL Scaling}~\citep{fog}: Implements dynamic parameter expansion to sustain NTK rank and preserve plasticity
\end{itemize}}

\myadded{
Our work is \textbf{fundamentally orthogonal} to these NTK-based paradigms. Unlike existing methods— which tackle plasticity loss through architectural modifications, explicit NTK regularization, or parameter resetting—we adopt a \textbf{novel gradient dynamics perspective}: our core objective is to mitigate the \textbf{temporal distribution shift in the replay buffer}, a primary driver of gradient magnitude decay and subsequent plasticity loss. Theoretically, our distribution-aware sampling strategy does not overlap with NTK-based plasticity preservation techniques; instead, it offers a complementary approach to addressing the root causes of plasticity loss in deep learning systems.}

\myadded{
\section{Approximate bucket-based sampling}\label{app:approix}
\paragraph{Efficient Approximation via Bucket Sampling}
To mitigate the computational overhead of recalculating weights for the entire replay buffer, we exploit the \textbf{monotonic age property} of our weighting scheme. Since the weights are strictly determined by the temporal age of transitions, we propose a bucket-based approximation method:
\begin{enumerate}
    \item \textbf{Partitioning:} We divide the $N$ transitions in the buffer into $B$ sequential buckets (where $B \ll N$).
    \item \textbf{Approximation:} Leveraging the monotonicity, we estimate the total weight of each bucket using the weight of its median sample, significantly reducing calculation redundancy.
    \item \textbf{Hierarchical Sampling:} We first sample a bucket according to the approximated probability distribution, then uniformly sample a transition within that bucket.
\end{enumerate}
As shown in Table~\ref{tab:complexity_comparison}, this approach reduces the sampling complexity from $\mathcal{O}(N)$ to $\mathcal{O}(B)$. With $B=2000$ and a buffer size of $N=10^6$, this yields a theoretical \textbf{500$\times$ speedup} in the weight computation phase, rendering the overhead negligible.}

\begin{table}[htbp]
    \centering
    \caption{\myadded{Computational complexity comparison. $N$ denotes the buffer size ($10^6$), $M$ the batch size, and $B$ the number of buckets ($2000$).}}
    \label{tab:complexity_comparison}
    \begin{tabular}{lcc}
        \toprule
        \textbf{Method} & \textbf{Complexity} & \textbf{Scale Dependency} \\
        \midrule
        Uniform Sampling & $\mathcal{O}(M)$ & Independent of Buffer \\
        Exact SWD & $\mathcal{O}(N + M)$ & Linear w.r.t Buffer \\
        \textbf{Approximate SWD} & $\mathbf{\mathcal{O}(B + M)}$ & \textbf{Linear w.r.t Buckets} \\
        \bottomrule
    \end{tabular}
\end{table}

\myadded{
\paragraph{Empirical Validation} We validate the efficiency and effectiveness of this approximation on the \texttt{Humanoid-run} task. As presented in Table~\ref{tab:runtime_perf}, the Approximate SWD method matches the wall-clock training time of Uniform sampling (approx. 8.7 hours) while preserving the performance gains of the exact method, achieving a high episode return of $224.9 \pm 17.5$.
}
\begin{table}[htbp]
    \centering
    \caption{\myadded{Runtime and performance comparison on \texttt{Humanoid Run}. The approximate method retains performance while significantly reducing training time.}}
    \label{tab:runtime_perf}
    \begin{tabular}{lcc}
        \toprule
        \textbf{Method} & \textbf{Wall-Clock Time} & \textbf{Episode Return} \\
        \midrule
        Uniform & 8.65 h & $190.46 \pm 7.99$ \\
        Exact SWD & 10.43 h & $229.01 \pm 37.43$ \\
        \textbf{Approximate SWD} & \textbf{8.70 h} & \textbf{224.93 $\pm$ 17.47} \\
        \bottomrule
    \end{tabular}
\end{table}

\section{Experimental Details}
\subsection{Structure}
\paragraph{TD3}
In this paper, we adopt the official network architecture of Twin Delayed Deep Deterministic Policy Gradient (TD3) for baseline comparison, with detailed layer-wise configurations provided in Table~\ref{tab:td3_network}.

\begin{table}[htbp]
  \centering
  \caption{Network Structures of the Twin Delayed Deep Deterministic Policy Gradient (TD3)}
  \label{tab:td3_network}  
  \begin{tabular}{lcc}
    \toprule
    \textbf{Network Component}       & \textbf{Actor Network}               & \textbf{Critic Network$^\dagger$}    \\
    \midrule
    Fully Connected Layer        & (state\_dim) $\to$ (256)             & (state\_dim + action\_dim) $\to$ (256) \\
    Activation        & ReLU                                  & ReLU                                  \\
    Fully Connected Layer        & (256) $\to$ (128)                     & (256) $\to$ (128)                     \\
    Activation            & ReLU                                  & ReLU                                  \\
    Output Fully Connected Layer     & (128) $\to$ (action\_dim)             & (128) $\to$ (1)                       \\
    Activation                & Tanh$^\ddagger$                       & None                                  \\
    \bottomrule
  \end{tabular}
  \vspace{0.5em}  
  \footnotesize{
  \\
    $^\dagger$: TD3 adopts two identical critic networks (Critic 1 \& Critic 2) for delayed Q-value update, both following the above structure; \\
    $^\ddagger$: Tanh activation constrains the actor's output action to the range $[-1, 1]$, consistent with standard continuous action space settings.
  }
\end{table}

\myadded{\paragraph{Double DQN}
We adopt the Nature CNN network architecture, the detailed specifications of which are presented in Table~\ref{tab:dqn_encoder} and Table~\ref{tab:dqn_network}. Our implementation refers to the official code repository\footnote{\url{https://github.com/google-deepmind/dqn}} to ensure consistency with the original design. }
\begin{table}[ht]
    \centering
    \caption{\myadded{Architecture of the Nature CNN Encoder used in Double DQN. The input consists of 4 stacked frames of size $84 \times 84$.}}
    \label{tab:dqn_encoder}
    \begin{tabular}{lcccc}
        \toprule
        \textbf{Layer} & \textbf{Input Channels} & \textbf{Kernel Size / Stride} & \textbf{Output Channels} & \textbf{Activation} \\
        \midrule
        Conv1 & 4  & $8 \times 8$ / 4 & 32 & ReLU \\
        Conv2 & 32 & $4 \times 4$ / 2 & 64 & ReLU \\
        Conv3 & 64 & $3 \times 3$ / 1 & 64 & ReLU \\
        \bottomrule
    \end{tabular}
\end{table}

\begin{table}[ht]
    \centering
    \caption{Architecture of the Double DQN Q-Network. The input is the flattened feature vector from the Encoder.}
    \label{tab:dqn_network}
    \begin{tabular}{lcc}
        \toprule
        \textbf{Layer} & \textbf{Configuration} & \textbf{Activation} \\
        \midrule
        Input (Flatten) & $3136$ units ($7 \times 7 \times 64$) & - \\
        FC1 & Linear($3136 \to 512$) & ReLU \\
        Output & Linear($512 \to |\mathcal{A}|$) & - \\
        \bottomrule
    \end{tabular}
\end{table}

\paragraph{SAC}
In this paper, we adopt the same configuration of SimBa as used in the Soft Actor-Critic (SAC) algorithm, with detailed network structures provided in Table~\ref{tab:simba_resblock}, Table~\ref{tab:simba_encoder}, and Table~\ref{tab:simba_sac_network}. Our implementation refers to the official SimBa code repository\footnote{\url{https://github.com/SonyResearch/simba}} to ensure consistency with the original design.

\begin{table}[!htbp]
  \centering
  \caption{Architecture of the SimBa Residual Block}
  \label{tab:simba_resblock}  
  \begin{tabular}{lcc}
    \toprule
    \textbf{Layer/Operation}       & \textbf{Input/Output Dimensions} & \textbf{Activation Function} \\
    \midrule
    Layer Normalization            & (hidden\_dim) $\to$ (hidden\_dim) & None \\
    Fully Connected (Expansion)    & (hidden\_dim) $\to$ (4$\times$hidden\_dim) & ReLU \\
    Fully Connected (Compression)  & (4$\times$hidden\_dim) $\to$ (hidden\_dim) & None \\
    Residual Connection            & Input $\oplus$ Block Output$^*$ & None \\
    \bottomrule
  \end{tabular}
  \vspace{0.5em}  
  \footnotesize{
  \\$^*$: "$\oplus$" denotes element-wise addition between the original input and the block output.}
\end{table}

\begin{table}[!htbp]
  \centering
  \caption{Architecture of the SimBa Encoder}
  \label{tab:simba_encoder}
  \begin{tabular}{lc}
    \toprule
    \textbf{Component}             & \textbf{Structure \& Dimension Flow} \\
    \midrule
    Input Projection (Fully Connected) & (input\_dim) $\to$ (hidden\_dim) \\
    Residual Block Stack            & $\times$ num\_blocks$^\dagger$ (each block follows Table~\ref{tab:simba_resblock}) \\
    Final Layer Normalization       & (hidden\_dim) $\to$ (hidden\_dim) \\
    \bottomrule
  \end{tabular}
  \vspace{0.5em}
  \footnotesize{\\$^\dagger$: "num\_blocks" denotes the number of stacked residual blocks, configurable based on task requirements.}
\end{table}

\begin{table}[!htbp]
  \centering
  \caption{Network Structures of the SimBa-SAC Framework}
  \label{tab:simba_sac_network}  
  \begin{tabular}{lcc}
    \toprule
    \textbf{Component}             & \textbf{Actor Network} & \textbf{Critic Network} \\
    \midrule
    Input Dimension                & (state\_dim) & (state\_dim + action\_dim) \\
    SimBa Encoder    & hidden\_dim=128; num\_blocks=1 & hidden\_dim=512; num\_blocks=2 \\
    Fully Connected & (128) $\to$ (action\_dim) & (512) $\to$ (1) \\
    Output Activation              & Tanh$^\ddagger$ & None \\
    \bottomrule
  \end{tabular}
  \vspace{0.5em}
  \footnotesize{\\$^\ddagger$: Tanh activation is used to constrain the action output within the range $[-1, 1]$, consistent with standard SAC implementations.}
\end{table}

\subsection{Implementation Details}
Our codes are implemented with Python 3.10 and JAX. All experiments were run on NVIDIA GeForce GTX 3090 GPUs. Each single training trial ranges from 10 hours to 21 hours, depending on the algorithms and environments.

\paragraph{TD3 Implementation}
Our TD3 implementation refers to CleanRL\footnote{\url{https://github.com/vwxyzjn/cleanrl/blob/master/cleanrl/td3_continuous_action.py}}, an efficient and reliable repository for reinforcement learning (RL) algorithm implementations. 

Notably, for all OpenAI MuJoCo experiments, we directly use the raw state and reward signals from the environment without any normalization or scaling. To facilitate exploration, an exploration noise sampled from $\mathcal{N}(0, 0.1)$ is added to the action selection process of all baseline methods. The discount factor is set to 0.99, and the Adam optimizer is adopted for all algorithms.

Table~\ref{tab:td3_hparams} presents the complete hyperparameters of TD3 used in our experiments; to reproduce the learning curves reported in the main text, we recommend using random seeds 1 to 5.

\begin{table*}[!htbp]
  \centering
  \caption{Hyperparameters of the TD3 Algorithm} 
  \label{tab:td3_hparams} 
  \begin{tabular}{lc}
    \toprule
    \textbf{Hyperparameter}               & \textbf{TD3 Configuration} \\
    \midrule
    Actor Learning Rate                   & $10^{-4}$                  \\
    Critic Learning Rate                  & $10^{-3}$                  \\
    Discount Factor                       & $0.99$                     \\
    Batch Size                            & $128$                      \\
    Replay Buffer Size                    & $10^6$                     \\
    \midrule
    \multicolumn{2}{l}{\textbf{\texttt{SWD}-Specific Hyperparameters}} \\  
    Linear Decay Steps                    & $100,000$                  \\
    Minimum Weight ($\text{min\_weight}$) & $0.1$                      \\
    \bottomrule
  \end{tabular}
\end{table*}

\myadded{\paragraph{Double DQN Implementation}
Our Double DQN implementation builds upon the CleanRL repository\footnote{\url{https://github.com/vwxyzjn/cleanrl/blob/master/cleanrl/dqn_atari.py}}, recognized for its high-fidelity and reproducible reference algorithms. To ensure experimental fairness, we strictly align our configuration with standard Atari benchmarks.}

\begin{table}[!htbp]
  \centering
  \caption{\myadded{Hyperparameters of Our Double DQN Implementation}}
  \label{tab:ddqn_hparams}  
  \begin{tabular}{lc}
    \toprule
    \textbf{Hyperparameter}                  & \textbf{Value} \\
    \midrule
    \multicolumn{2}{l}{\textit{General Training}} \\
    Optimizer                                & Adam \\
    Learning Rate                            & $1 \times 10^{-4}$ \\
    Discount Factor ($\gamma$)               & $0.99$ \\
    Buffer Size                              & $1 \times 10^6$ \\
    Batch Size                               & $32$ \\
    Learning Starts                          & $80,000$ steps \\
    Train Frequency                          & $4$ steps \\
    Total Timesteps                          & $10 \text{ M}$ \\
    \midrule
    \multicolumn{2}{l}{\textit{Exploration (Epsilon-Greedy)}} \\
    Start Epsilon ($\varepsilon_{\text{start}}$) & $1.0$ \\
    End Epsilon ($\varepsilon_{\text{end}}$)     & $0.01$ \\
    Exploration Fraction                     & $0.10$ ($1 \times 10^6$ steps) \\
    \midrule
    \multicolumn{2}{l}{\textit{Target Network}} \\
    Target Update Frequency                  & $1000$ steps \\
    Target Update Rate ($\tau$)              & $1.0$ (Hard Update) \\
    \midrule
    \multicolumn{2}{l}{\textbf{\texttt{SWD}-Specific Hyperparameters}} \\
    Linear Decay Steps                       & $80,000$ \\
    Minimum Weight                           & $0.1$ \\
    Number of Buckets                        & $2000$ \\
    \bottomrule
  \end{tabular}
\end{table}

\myadded{The detailed hyperparameters are presented in Table~\ref{tab:ddqn_hparams}. Notably, for the Arcade Learning Environment (ALE) tasks, we incorporate the bucket-based approximate sampling mechanism from \methodName to enhance efficiency.}

\paragraph{SAC Implementation} 
Our Soft Actor-Critic (SAC) implementation is also based on the CleanRL repository, specifically referencing the continuous action SAC implementation\footnote{\url{https://github.com/vwxyzjn/cleanrl/blob/master/cleanrl/sac_continuous_action.py}}.

\begin{table*}[!htbp]
  \centering
  \caption{Hyperparameters of Our SAC Implementation (with SimBa Encoder)}
  \label{tab:sac_hparams}  
  \begin{tabular}{lc}
    \toprule
    \textbf{Hyperparameter}                  & \textbf{SAC (with SimBa Encoder)} \\
    \midrule
    Optimizer                                & AdamW (weight decay = $10^{-2}$)        \\
    Policy (Actor) Learning Rate             & $1 \times 10^{-4}$                      \\
    Q-Network (Critic) Learning Rate         & $1 \times 10^{-4}$                      \\
    Discount Factor                          & $0.99$                                  \\
    Batch Size                               & $256$                                   \\
    Warmup Steps (for Policy Update)         & $5000$                                  \\
    Target Q-Network Update Rate ($\tau$)    & $0.005$                                 \\
    Target Q-Network Update Interval         & $1$ (step)                              \\
    Policy (Actor) Update Interval           & $2$ (steps, policy\_frequency)          \\
    Entropy Target                            & $-|A|$ ($|A|$ = action space dimension)  \\
    SimBa Encoder (Actor): Hidden Dim / Blocks & $128$ / $1$                             \\
    SimBa Encoder (Critic): Hidden Dim / Blocks & $512$ / $2$                             \\
    \midrule
    \multicolumn{2}{l}{\textbf{\texttt{SWD}-Specific Hyperparameters}} \\  
    Linear Decay Steps                        & $80,000$                                \\
    Minimum Weight ($\text{min\_weight}$)     & $0.1$                                   \\
    \midrule
    \multicolumn{2}{l}{\textbf{\texttt{PER}-Specific Hyperparameters}} \\
    Prioritization Exponent ($\alpha$)        & $0.6$                                   \\
    Importance Sampling Exponent ($\beta$)    & $0.4$                                   \\
    Beta Increment Rate                        & $1 \times 10^{-6}$                     \\
    \bottomrule
  \end{tabular}
\end{table*}

\noindent The hyperparameters for our SAC (equipped with the SimBa encoder) are detailed in Table~\ref{tab:sac_hparams}.

\subsection{SAC leanring curve}

\begin{figure}[H]
\centering  
\subfigure[Dog Run]{
\label{Fig: app dog run}
\includegraphics[width=0.4\textwidth]{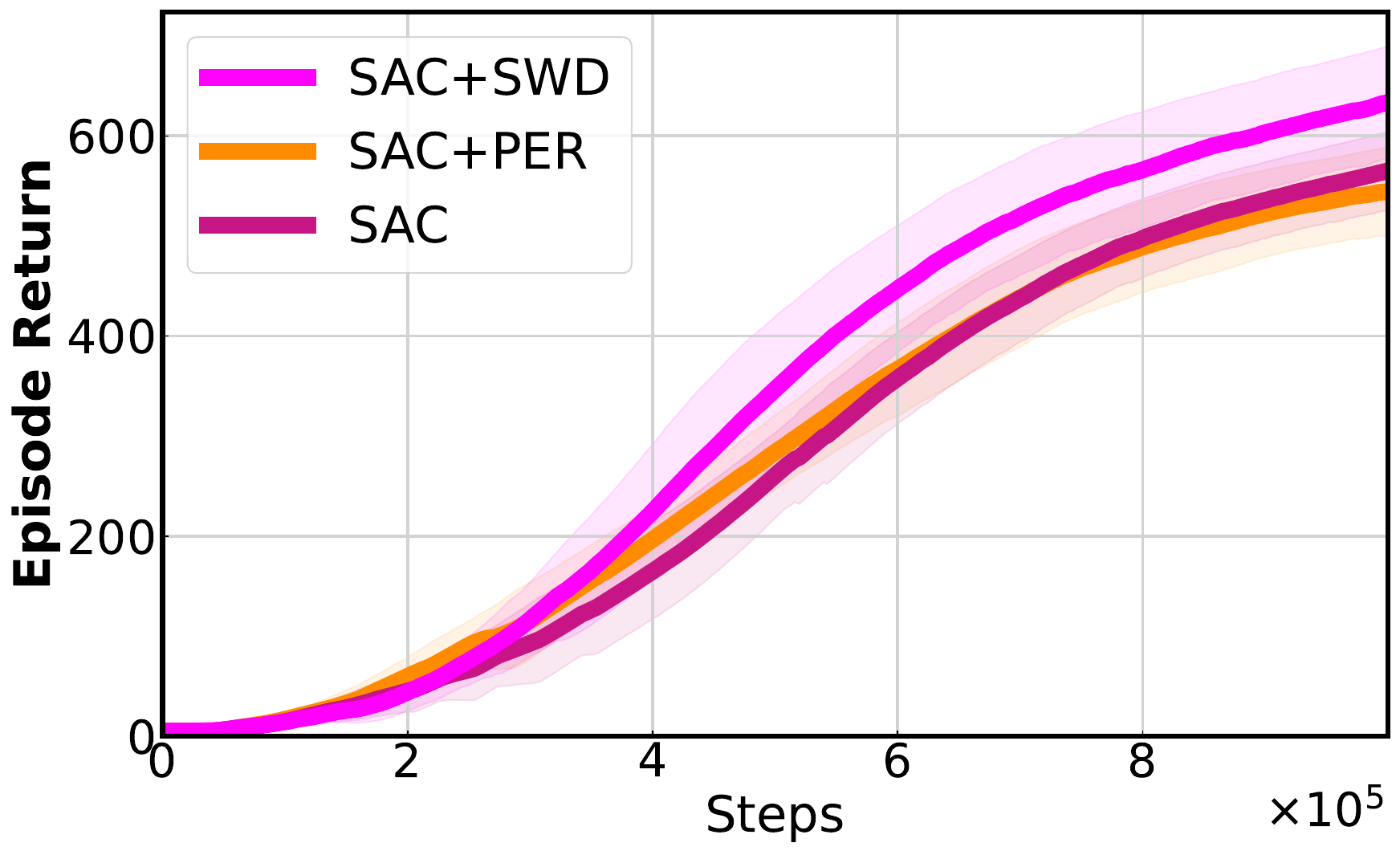}
}
\subfigure[Dog Walk]{
\label{Fig: app }
\includegraphics[width=0.4\textwidth]{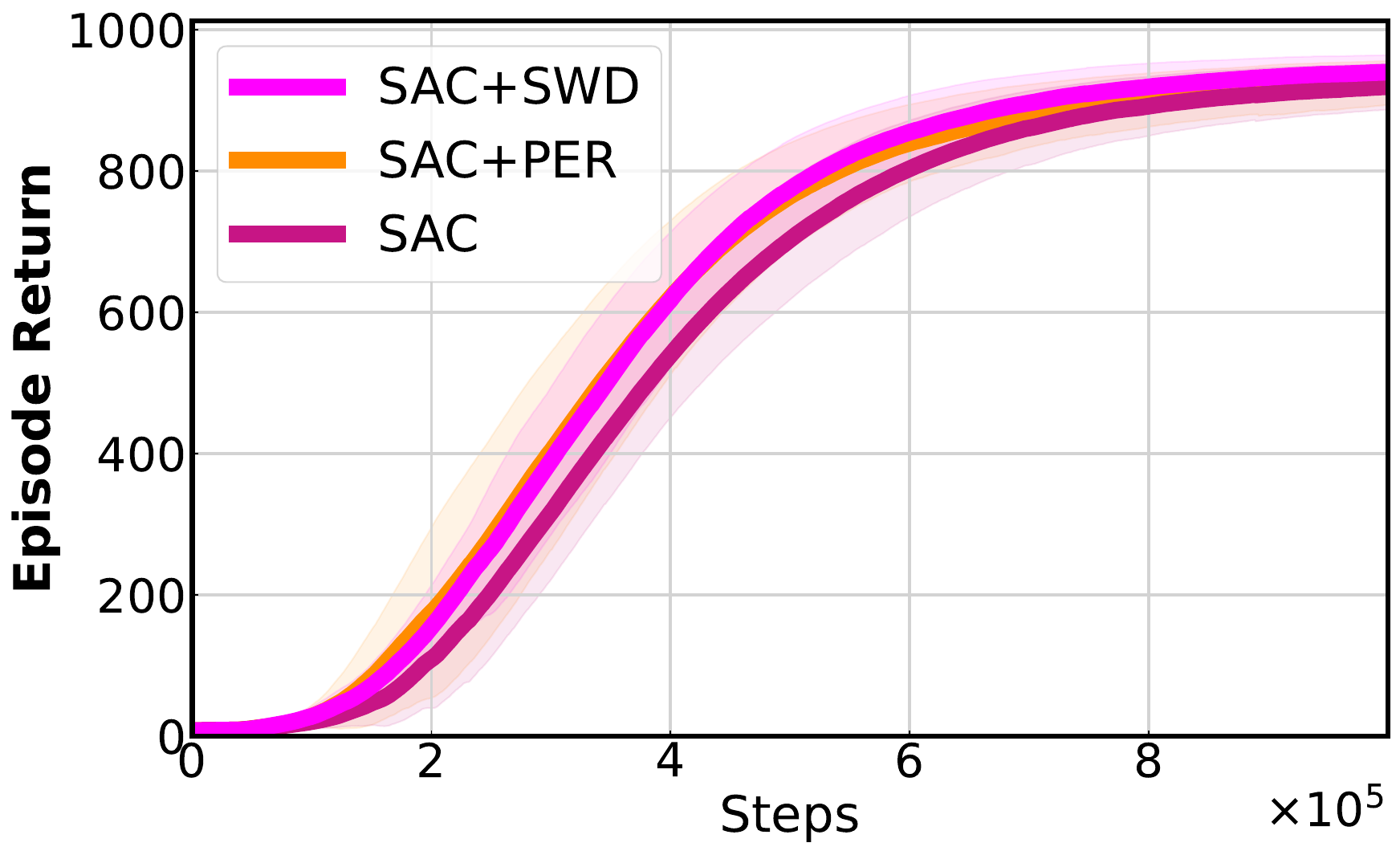}}
\subfigure[Humanoid Run]{
\label{Fig:Walker2d}
\includegraphics[width=0.4\textwidth]{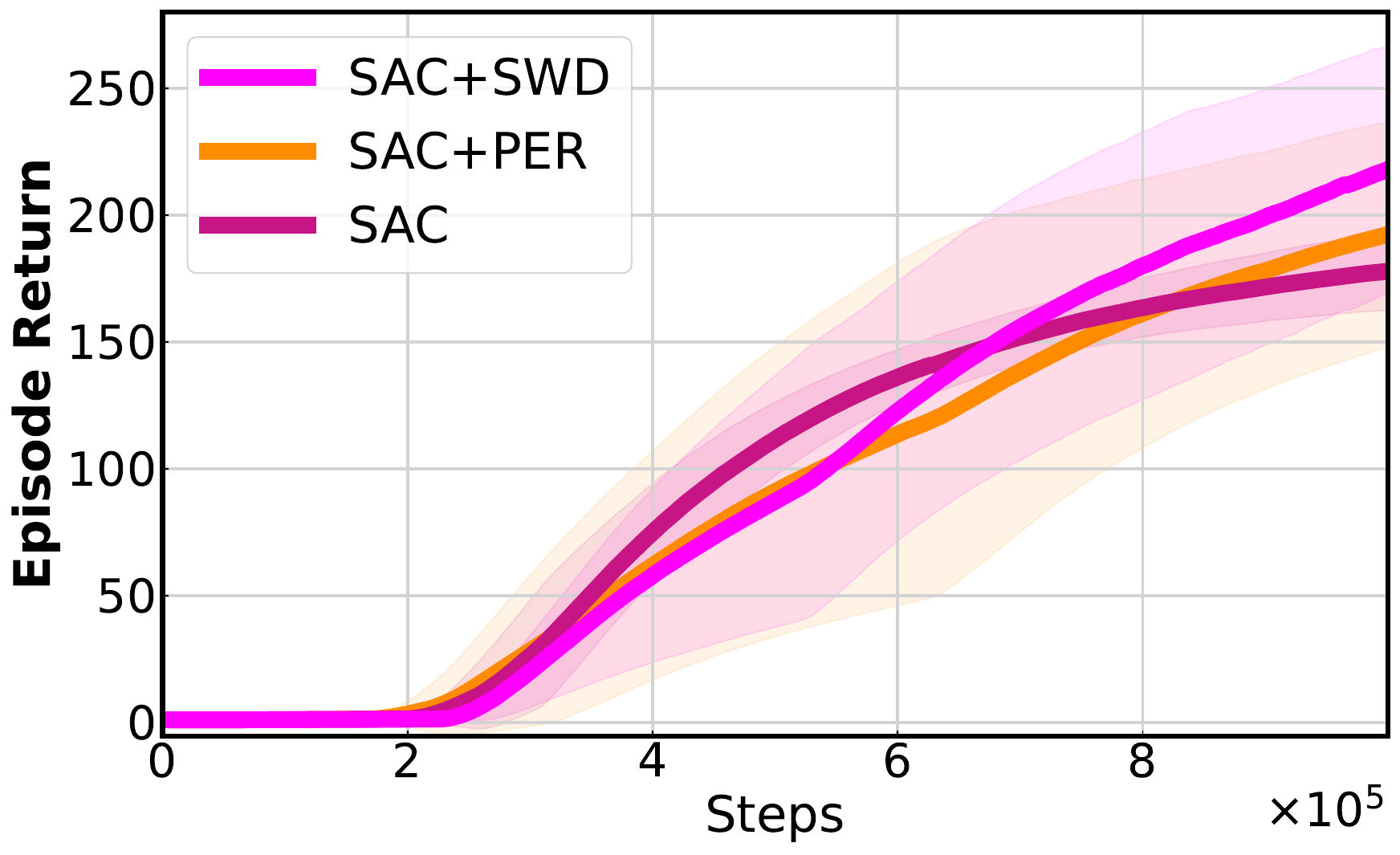}
}\subfigure[Humanoid Walk]{
\label{Fig:Walker2d}
\includegraphics[width=0.4\textwidth]{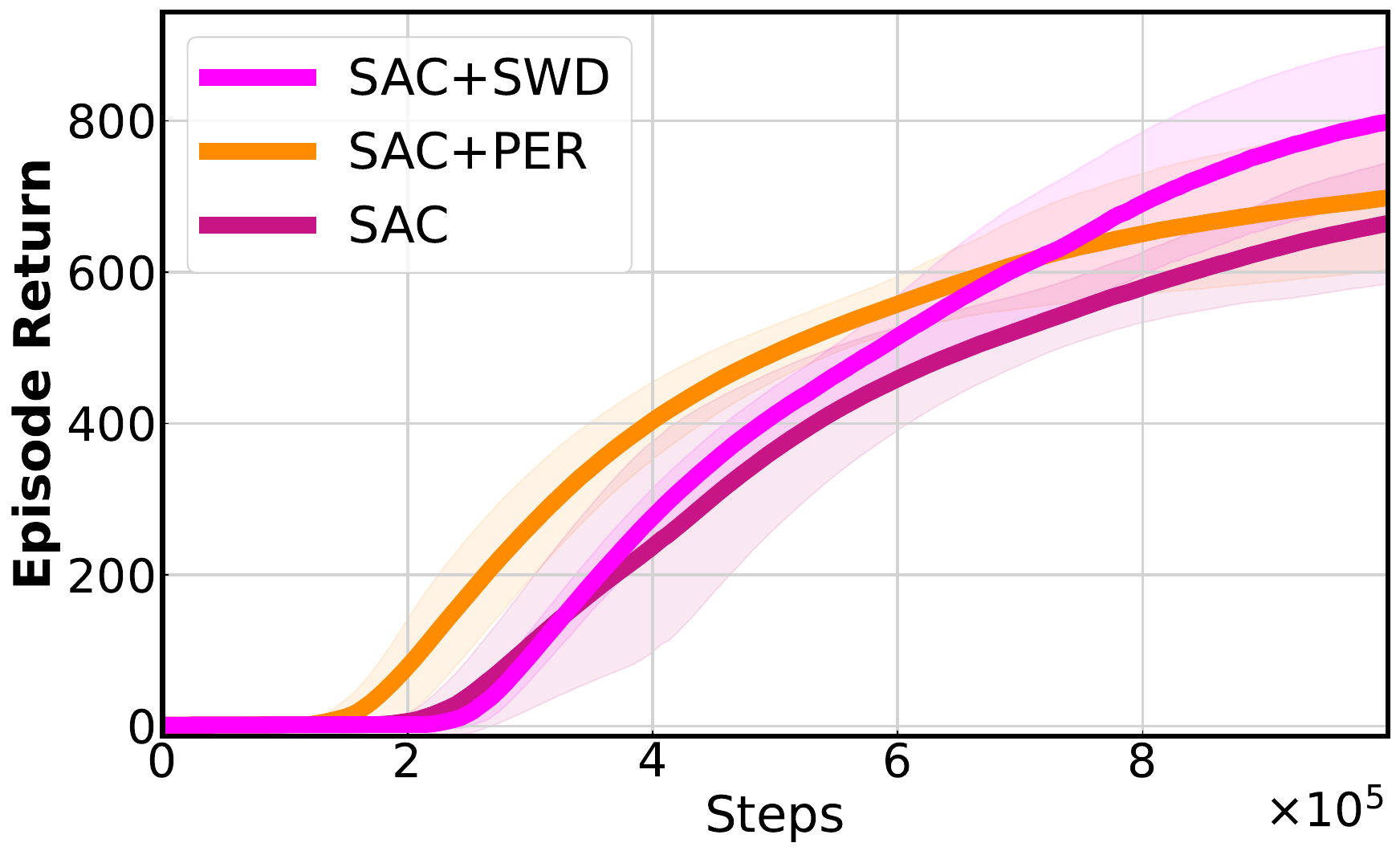}
}
\caption{SAC leanrning curve on DMC tasks}
\end{figure}

\section{Additional Experiments}\label{App:additional exp}
\subsection{\texttt{SWA}}
In this section, we provide detailed information about our ablation experiments. First, we present the algorithmic details of \texttt{SWA}, which are summarized in Algorithm~\ref{alg:linear_decay_augment}.

We adopt the detailed parameter settings of Soft Actor-Critic (SAC), as presented in Table~\ref{tab:sac_hparams}—specifically, we use the same Linear decay steps $T$
 and minimum weight $w_{\min}$ as specified therein.

\begin{algorithm}[H]
\caption{\texttt{SWA}}
\label{alg:linear_decay_augment}
\begin{algorithmic}[1]
\REQUIRE Linear decay steps $T$, minimum weight $w_{\min}$, Current time $t$, timestamps $\{t_i\}_{i=1}^{|\mathcal{D}|}$
    \FOR{$i = 1$ to $|\mathcal{D}|$}
        \STATE $age_i = t - t_i$ 
        \STATE $w_i = \min\left(1, w_{\min} + \frac{age_i}{T}\right)$ 
    \ENDFOR
    
    \STATE $p_i = \frac{w_i}{\sum_{j=1}^{|\mathcal{D}|} w_j}$ for $i = 1, \ldots, |\mathcal{D}|$ 
    \STATE $\mathcal{I} \sim \text{Categorical}(\{p_i\}_{i=1}^{|\mathcal{D}|}, B)$ 
\STATE \textbf{return} $\mathcal{B} = \{(s_i, a_i, r_i, s'_i, d_i)\}_{i \in \mathcal{I}}$
\end{algorithmic}
\end{algorithm}

\subsection{Ablation Study of Update-to-Data}

\begin{figure}[H]
\centering  
\subfigure[UTD=1]{
\label{Fig:ant}
\includegraphics[width=0.32\textwidth]{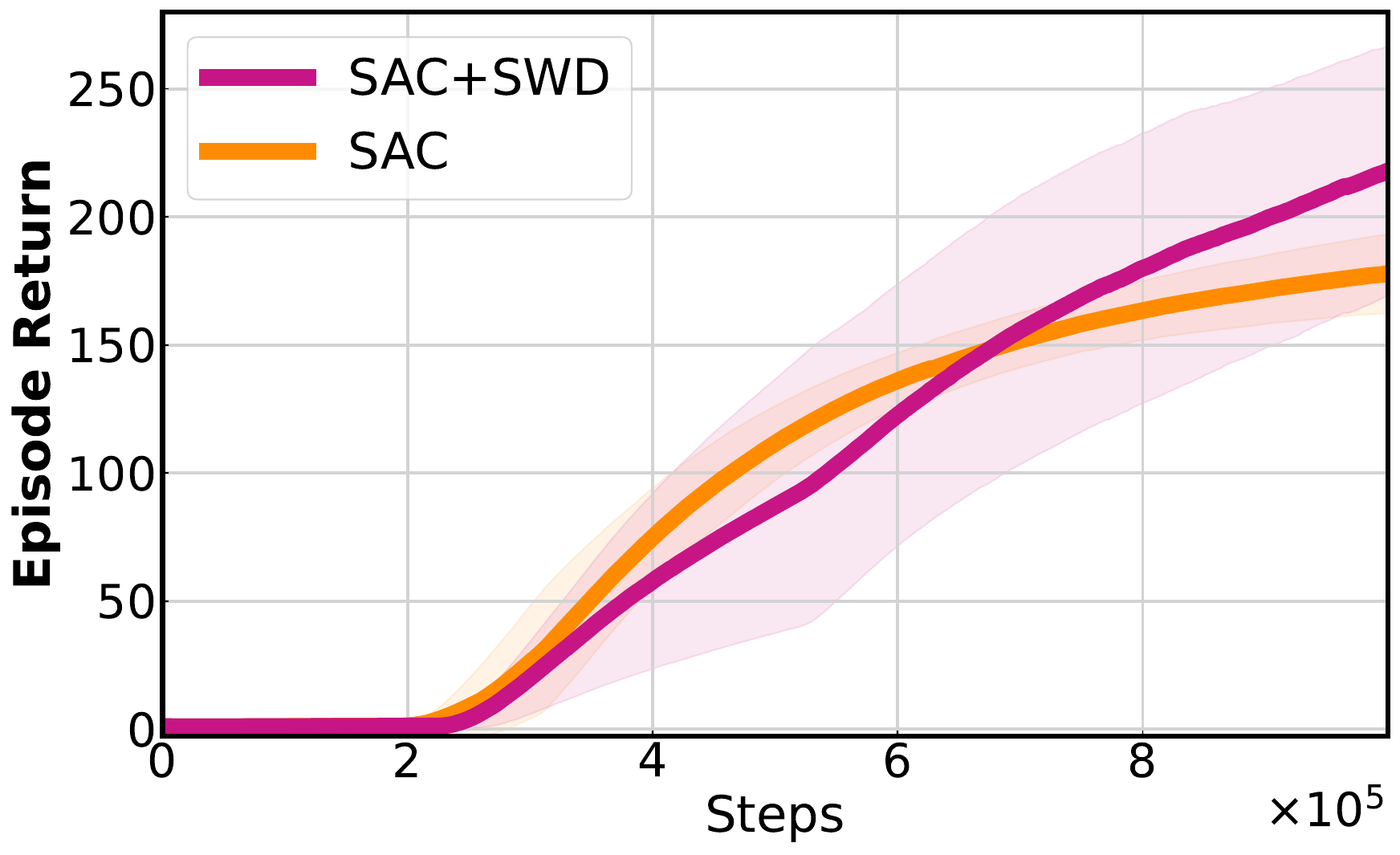}
}
\subfigure[UTD=2]{
\label{Fig:Halfcheef}
\includegraphics[width=0.32\textwidth]{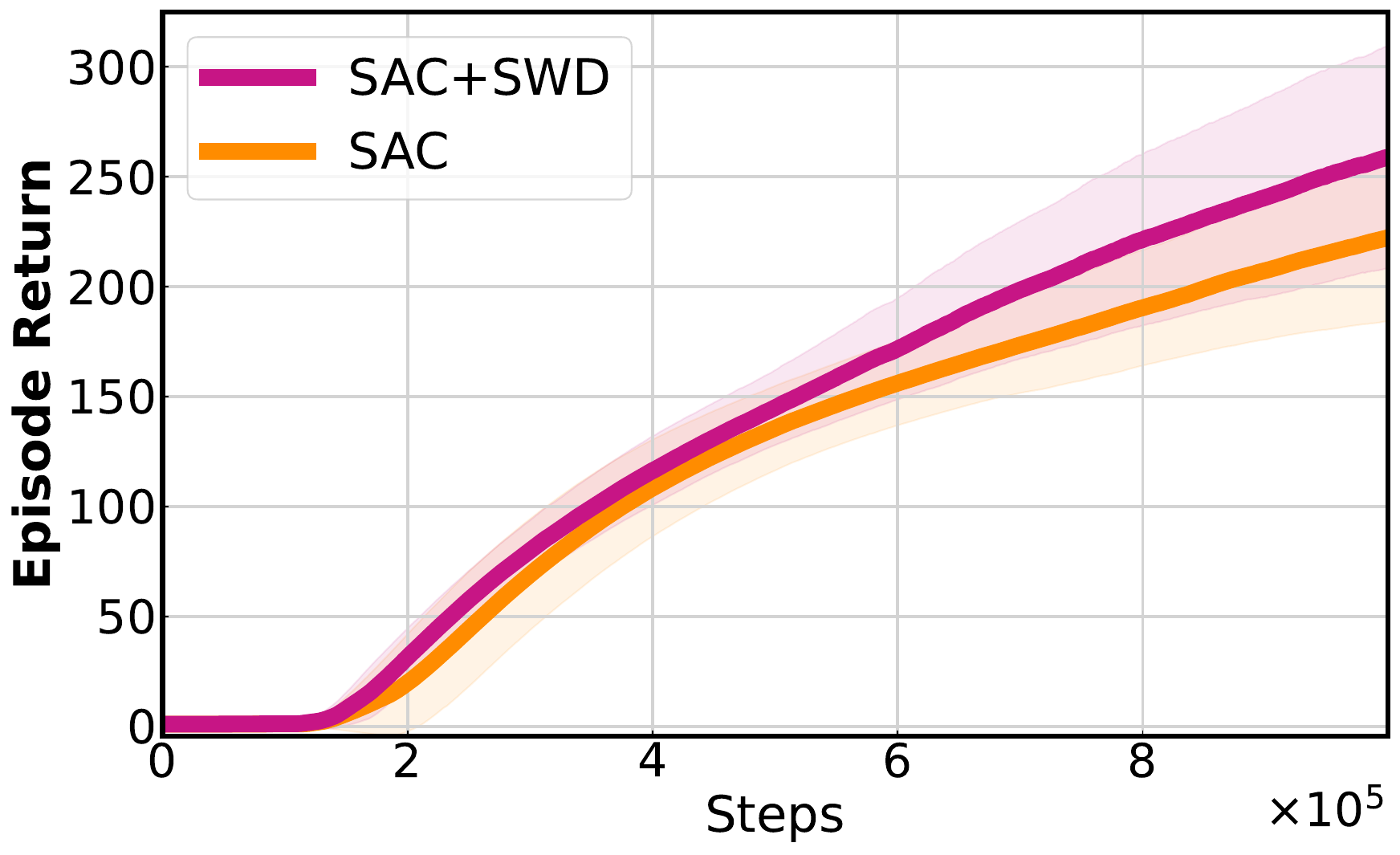}}
\subfigure[UTD=5]{
\label{Fig:Walker2d}
\includegraphics[width=0.32\textwidth]{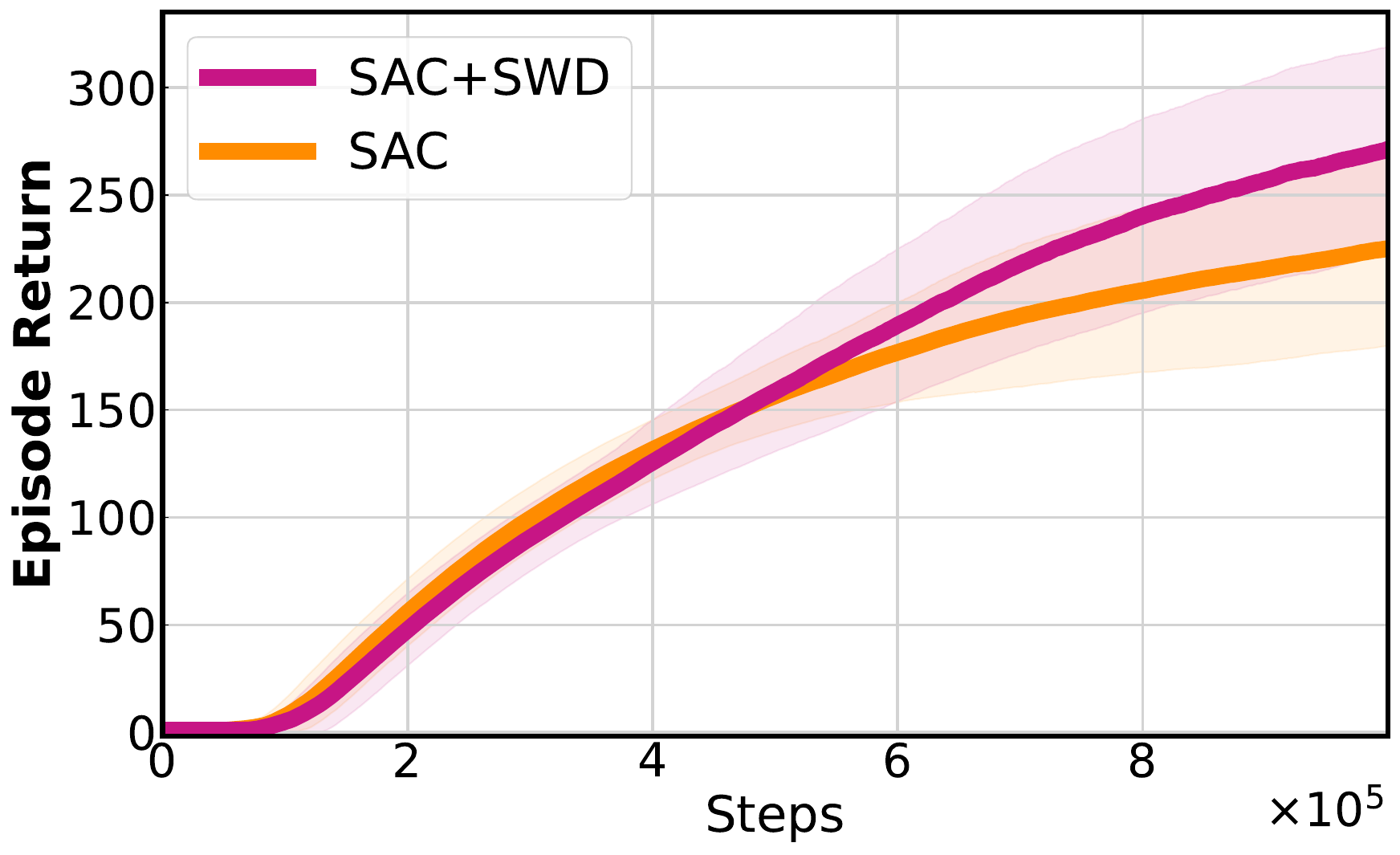}
}
\caption{Sensitivity analysis regarding the UTD. Data represents the mean $\pm$ std  of five experimental runs conducted on the Humanoid Run.}
\label{Fig:utd}
\end{figure}
We adopt SAC (Soft Actor-Critic) as the backbone algorithm and aim to optimize the Update-to-Data (UTD) ratio. This optimization enables faster policy iteration, thereby better leveraging the advantages of \texttt{SWD}. As illustrated in Figure~\ref{Fig:utd}, with the increase in the UTD ratio, \texttt{SWD} consistently outperforms the uniform sampling baseline.

\myadded{
\subsection{Parameter Sensitivity Analysis}
To assess the robustness of our proposed method, we conducted an extensive grid search to evaluate the sensitivity of SWD to its two primary hyperparameters: the linear decay steps ($T$) and the minimum weight threshold ($w_{\min}$). }

\myadded{
We constructed a $5 \times 5$ hyperparameter grid, varying $T_{\text{decay}}$ from $20,000$ to $100,000$ and $w_{\min}$ from $0.02$ to $0.10$. Experiments were performed on the \texttt{Humanoid Run} task, with each of the 25 configurations averaged over 5 random seeds (totaling 125 independent runs).}
\myadded{
The results, summarized in Table~\ref{tab:sensitivity_clean}, indicate that SWD maintains stable performance across a wide range of hyperparameter settings. While optimal performance fluctuates slightly, the method does not exhibit drastic failure modes within the tested range, demonstrating its robustness to hyperparameter selection.}
\begin{table}[htbp]
  \centering
  \caption{\myadded{\textbf{Parameter Sensitivity Analysis.} Grid search results on \texttt{Humanoid Run} (Mean $\pm$ Std). The best performance is marked in \textbf{bold}.}}
  \label{tab:sensitivity_clean}
  \setlength{\tabcolsep}{5pt} 
  \renewcommand{\arraystretch}{1.25} 
  
  \begin{tabular}{lccccc}
    \toprule
    \textbf{Decay Steps} & \multicolumn{5}{c}{\textbf{Minimum Weight Threshold ($w_{\min}$)}} \\
    \cmidrule(lr){2-6}
    ($T_{\text{decay}}$) & \textbf{0.02} & \textbf{0.04} & \textbf{0.06} & \textbf{0.08} & \textbf{0.10} \\
    \midrule
    20,000  & 229.7 {\scriptsize$\pm 26.4$} & \textbf{240.9} {\scriptsize$\pm 37.1$} & 234.9 {\scriptsize$\pm 15.0$} & 217.9 {\scriptsize$\pm 38.6$} & 226.1 {\scriptsize$\pm 23.7$} \\
    40,000  & 231.4 {\scriptsize$\pm 44.4$} & 224.5 {\scriptsize$\pm 34.4$} & 231.3 {\scriptsize$\pm 30.8$} & 227.0 {\scriptsize$\pm 23.1$} & 225.5 {\scriptsize$\pm 22.5$} \\
    60,000  & 217.4 {\scriptsize$\pm 40.2$} & 231.2 {\scriptsize$\pm 29.9$} & \textbf{240.5} {\scriptsize$\pm 55.0$} & 215.7 {\scriptsize$\pm 27.9$} & \textbf{240.7} {\scriptsize$\pm 35.3$} \\
    80,000  & 233.6 {\scriptsize$\pm 42.9$} & 231.8 {\scriptsize$\pm 35.7$} & 225.2 {\scriptsize$\pm 42.6$} & 220.9 {\scriptsize$\pm 17.6$} & 231.3 {\scriptsize$\pm 54.4$} \\
    100,000 & 224.0 {\scriptsize$\pm 32.1$} & 201.8 {\scriptsize$\pm 31.9$} & 217.0 {\scriptsize$\pm 48.5$} & \textbf{241.6} {\scriptsize$\pm 38.4$} & 229.2 {\scriptsize$\pm 29.5$} \\
    \bottomrule
  \end{tabular}
\end{table}

\myadded{
\subsection{Impact of Decay Strategy}
We further investigate the influence of the weight decay schedule on performance. To this end, we compare our default \textbf{Linear Decay} against \textbf{Exponential} and \textbf{Polynomial} variants. The specific formulations are defined as follows:
\begin{itemize}
    \item \textbf{Linear (Ours):} $w(t) = \max(w_{\min}, 1 - t/T)$, providing a constant rate of importance reduction.
    \item \textbf{Exponential:} $w(t) = \max(w_{\min}, \exp(-t/\tau))$, where $\tau=1$, modeling rapid initial forgetting.
    \item \textbf{Polynomial:} $w(t) = \max(w_{\min}, (1 - t/T)^p)$, where $p=2$, penalizing older samples more aggressively than the linear approach.
\end{itemize}
The empirical results on \texttt{Humanoid-run} are summarized in Table~\ref{tab:decay_strategies}. Our proposed Linear Decay strategy significantly outperforms alternative schedules. Notably, both Exponential and Polynomial decay perform worse than the SAC baseline, suggesting that overly aggressive weight reduction disrupts the learning stability required for high-dimensional control tasks.}

\begin{table}[htbp]
  \centering
  \caption{\myadded{Performance comparison of different decay strategies on \texttt{Humanoid Run}. The relative difference is calculated with respect to our Linear Decay method.}}
  \label{tab:decay_strategies}
  \renewcommand{\arraystretch}{1.2} 
  \begin{tabular}{lcc}
    \toprule
    \textbf{Decay Strategy} & \textbf{Episode Return} & \textbf{vs. Linear SWD} \\
    \midrule
    \textbf{Linear Decay (Ours)} & \textbf{229.01} $\pm$ \textbf{37.43} & -- \\
    SAC (Baseline) & 190.46 $\pm$ 7.99\phantom{0} & $-16.8\%$ \\
    Exponential Decay ($\tau = 1$) & 187.04 $\pm$ 29.85 & $-18.3\%$ \\
    Polynomial Decay ($p = 2$) & 132.91 $\pm$ 11.12 & $-42.0\%$ \\
    \bottomrule
  \end{tabular}
\end{table}
\end{document}

%% file: math_commands.tex
\usepackage{amsmath,amsfonts,bm}

\def\eqref#1{equation~\ref{#1}}

\def\1{\bm{1}}

\DeclareMathAlphabet{\mathsfit}{\encodingdefault}{\sfdefault}{m}{sl}
\SetMathAlphabet{\mathsfit}{bold}{\encodingdefault}{\sfdefault}{bx}{n}

\def\sA{{\mathbb{A}}}

\def\sS{{\mathbb{S}}}

\def\sZ{{\mathbb{Z}}}

